\newtheorem{theorem}{Theorem}
\newtheorem{lemma}{Lemma}
\newtheorem{prop}{Proposition}
\newtheorem{defn}{Definition}
\newtheorem{remark}{Remark}
\begin{document}
	%
	\title{Efficient  Sparse Coding using Hierarchical Riemannian Pursuit}
	%
	%
	%
	
	\author{{\normalsize Ye Xue, }{\normalsize{}Vincent Lau,}{\normalsize{} and Songfu Cai }\thanks {Y. Xue,  V. Lau and S. Cai are with the Department of Electronic and Computer Engineering, Hong Kong University of Science and Technology, Hong Kong (E-mail: yxueaf@connect.ust.hk,  eeknlau@ust.hk, scaiae@connect.ust.hk).}}
	
	%
	%

	\markboth{Journal of \LaTeX\ Class Files,~Vol.~14, No.~8, November~2020}%
	{Shell \MakeLowercase{\textit{et al.}}: Bare Demo of IEEEtran.cls for IEEE Journals}
	%



	\maketitle
	

	\begin{abstract}
		Sparse coding  is a class of unsupervised  methods for learning a sparse representation of the input data in the form of a linear combination of a dictionary and a sparse code. This learning framework has led to state-of-the-art  results in various image and video processing tasks. {  However,  classical methods learn the dictionary and the sparse code based on alternating optimizations, usually without theoretical guarantees for either optimality or convergence due to the non-convexity of the problem. Recent works on sparse coding with a complete dictionary  provide strong theoretical  guarantees thanks to the development of  non-convex optimization. However, initial non-convex approaches \cite{sun2015complete1,sun2015complete2,gilboa2018efficient,bai2018subgradient} learned the dictionary in the sparse coding problem sequentially in an atom-by-atom manner, which led to a long execution time. More recent works \cite{zhai2020complete,zhai2019understanding,shen2020complete} have sought  to directly learn the entire dictionary at once, which substantially reduces the execution time. However, the associated recovery performance is   degraded with a finite number of data samples. In this paper, we propose an efficient sparse coding scheme with a two-stage optimization. The proposed scheme leverages the global and local Riemannian geometry of the two-stage optimization problem and  facilitates fast implementation for superb dictionary recovery performance by  a finite number of samples. We further prove that, with high probability, the proposed scheme can  exactly recover any atom in the target dictionary with a finite number of samples.  An application on wireless sensor data compression is also proposed. Experiments on both synthetic and real-world data verify the efficiency and robustness of the proposed scheme. The codes to reproduce the  experimental results are available online at \url{https://github.com/yokoxue/HRP}.}
	\end{abstract}
	
	
	%
	
	\section{Introduction}
	In the era of big data, finding  efficient data representation is crucial for both application and analysis, and sparse  representation is one of the most popular  data representations. With this  representation, the high-dimensional data $\bm{y}\in \mathbb{R}^N$   can be expressed by $\bm{y}=\bm{D}^*\bm{x}^*$, where  $\bm{x}^*\in\mathbb{R}^M$ is a sparse code (i.e., the number of non-zero
	entries of $\bm{x}^*$ is much smaller than $M$) and the dictionary $\bm{D}^*\in\mathbb{R}^{N\times M}$ is a collection of $M$ atoms $\{ \bm{D}^*_{:,n} \in \mathbb{R}^{N\times1}$, $n= 1,\ldots, M\}$ that contains the compact information of $\bm{y}$. Sparse data representation has been increasingly used with success in many data processing and machine learning applications \cite{wright2008robust,8770109,bao2013fast,5654598}. The performance of  sparse-representation-based approaches   hinges on
	using a proper dictionary, and many studies have designed efficient pre-specified dictionaries such as  Fourier-based \cite{allen1982applications} and  wavelet-based \cite{mallat1999wavelet} dictionaries. However, such a  generic approach may  be unable to exploit domain-specific structures because the generic dictionary may not be the correct basis for a  specific data set. Hence,  the idea of 
	learning the  dictionary  and the corresponding sparse code from a particular data set has emerged as a powerful framework, named  {\em sparse coding}.
	
	One typical formulation of sparse coding is \cite{Aharon2006}
	\begin{equation}\label{eq:DL1}
	\begin{aligned}
	\underset{\bm{D},\{\bm{x}_i\}^L_{i=1}}{\text{minimize}}\quad
	& \sum^L_{i=1}\Vert\bm{y}_i-\bm{D}\bm{x}_i\Vert^2 \\
	\textrm{subject to} \quad & \Vert\bm{x}_i\Vert_0\leq T_0, \forall i, \\
	\end{aligned}
	\end{equation} 
	where $\Vert\cdot\Vert_0$ is the sparsity measure defined as the number of non-zero entries in the input, and $T_0$ represents the predetermined maximum number of non-zero entries allowed in the sparse code. However, solving Problem (\ref{eq:DL1}) is highly challenging due to the bilinear form of the received signal $\{\bm{y}_i\}_{i=1}^L$ with respect to the unknowns $\bm{D}$ and $\{\bm{x}_i\}_{i=1}^L$ \cite{shi2019manifold}, as well as the
	discontinuous  $\ell_0$-norm constraint. Approaches   that adopt alternating optimizations, such as the method of optimal directions (MOD) \cite{engan1999method} and  K-SVD \cite{Aharon2006}, have achieved  impressive practical success. However, the  sequences generated by these alternating iterations are not always convergent \cite{bao2015dictionary}. Moreover, the sparse coding step in such
	methods is computationally expensive  \cite{ravishankar2012learning}. Therefore, many variants  of these methods have been proposed for efficient implementation \cite{rubinstein2008efficient,6339105}. In addition,  many other constraints, e.g., $\ell_1$-norm, have been proposed to induce the sparsity \cite{mairal2010online,jenatton2010proximal,5975166,lee2007efficient} to alleviate the challenge brought by the $\ell_0$-norm in Problem  (\ref{eq:DL1}). In these works, the sparsity constraints are usually moved to the objective functions as the penalty terms   and the resulted   problems are solved by  alternating algorithms. In \cite{ravishankar2012learning,ravishankar2012learning2}, another alternating optimization scheme, named  {\em sparsifying transform learning}, is  proposed to learn the dictionary in the transform  domain which improves  the computation efficiency compared to the naive K-SVD \cite{eksioglu2014k}. In \cite{ravishankar2012learning}, the convergence of objectives is demonstrated for the sparsifying transform learning. However, the convergence of the variables is not provided. 
	
	
	Owing to the recent development of   non-convex optimization, several works  have proposed solutions with rigorous justifications for optimality and convergence  of sparse coding with a complete dictionary \cite{spielman2012exact,sun2015complete1,sun2015complete2,gilboa2018efficient,bai2018subgradient,zhai2020complete,zhai2019understanding,shen2020complete}. The primary treatment is that, after simple preconditioning \cite{sun2015complete1,sun2015complete2}, the sparse coding problem with a complete dictionary can be formulated   as  a problem with  the dictionary restricted over the orthogonal group $\mathbb{O}(N)=\{\bm{Q}\in\mathbb{GL}(N,\mathbb{R})|\bm{Q}^T\bm{Q}=\bm{Q}\bm{Q}^T = \bm{I}\}$ as
	\begin{equation}\label{eq:DLo}  
	\begin{aligned}
	\underset{\bm{D}\in\mathbb {O}(N)}{\text{minimize}}\quad
	& \frac{1}{L} \sum^L_{i=1}{\text Sp}(\bm{D}^{\text T}\bm{y}_i),
	\\
	\end{aligned}
	\end{equation} 
	where  ${\text Sp}(\cdot)$ is a sparsity-promoting function. This formulation is from the fact that if we have  an orthogonal dictionary $\bm{D}^*\in\mathbb{O}(N)$, then $(\bm{D}^*)^T\bm{y}_i =(\bm{D}^*)^T\bm{D}^*\bm{x}^*_i=\bm{x}^*_i, \forall i$ are sparse.  After obtaining the estimated dictionary $\hat{\bm{D}}$ from solving Problem (\ref{eq:DLo}), the sparse code can be obtained as $\{\bm{x}^*_i\approx(\hat{\bm{D}})^T\bm{y}_i\}^L_{i=1}$. Though  solving Problem (\ref{eq:DLo}) is still challenging due to the non-convex constraint, solutions with sample complexity and convergence analysis can be obtained by the high-dimensional probability and  non-convex optimization  tools \cite{sun2015complete1,sun2015complete2,gilboa2018efficient,bai2018subgradient, zhai2020complete}.  
	
	In this paper, we focus on the sparse coding problem by solving  (\ref{eq:DLo}),
	since it enables more efficient solutions by only updating the dictionary at each iteration and  has a stronger theoretical guarantee than the alternating optimization schemes. Ground-breaking works in this line of research \cite{sun2015complete1,sun2015complete2,gilboa2018efficient,bai2018subgradient} propose solutions to Problem (\ref{eq:DLo}) via sequentially  solving each atom of the dictionary, which leads to a long execution time. Recent works \cite{zhai2020complete,zhai2019understanding,shen2020complete} have started to solve the entire dictionary together, which substantially reduces the execution time. However, these methods lead to degraded dictionary recovery performance  using a finite number of samples. The  main target of this paper is to develop an efficient sparse coding scheme,  that facilitates fast implementation for superb dictionary recovery performance by  a finite number of samples.

	\subsection{Contributions}  
	We consider sparse coding with an orthogonal dictionary and propose a two-stage sparse coding scheme,  which  hierarchically leverages the Riemannian geometry of the proposed two-stage optimization problem.  In addition, we prove that if the proposed scheme is implemented over the sphere, with high probability, any one of the atoms in the true dictionary can be recovered exactly with finite data samples (up to a sign ambiguity). The proposed scheme facilitates  fast and robust implementation on both synthetic and real-world data with  superb dictionary recovery and data compression performances under a finite number of samples. The main
	contributions are summarized as follows.
	\begin{itemize}
		\item {\bf Efficient Sparse Coding Scheme ---  Hierarchical Riemannian Pursuit}: We propose an efficient
		two-stage sparse coding scheme, named Hierarchical Riemannian Pursuit (HRP), via exploiting the
		global and local Riemannian geometry of the orthogonal group hierarchically.  Specifically,
		in the first stage, we propose a  non-convex $\ell_3$-norm maximization problem over the orthogonal group.
		This formulation enables an efficient parameter-free solver, namely the generalized power method
		(GPM). The first stage produces a result close to the target dictionary. In the second stage, we propose
		an efficient Riemannian projection gradient (RPG) method to solve a convex problem according to
		the local geometry of the orthogonal group. The second stage serves as the local refinement of the
		solution obtained at the first stage.
		\item {\bf Exact Recovery Guarantee for One Atom}: To provide theoretical interpretation, we investigate the case where the proposed scheme is implemented over the sphere.   Using high-dimensional probability \cite{vershynin2018high} and non-convex optimization \cite{8811622} tools, we  prove that, with high probability,  the  proposed scheme can exactly recover  any of the atoms (up to a sign ambiguity) in the true dictionary with a finite number of samples.
		\item {\bf Efficient Implementation on Real-World Sensor Data Compression}: We provide extensive experiments with
		both synthetic data and real-world data. The results verify the efficiency and robustness of
		our proposed scheme. For the synthetic data, the proposed scheme can achieve a superb dictionary
		recovery performance with a fast implementation under very
		broad conditions. A novel  application of the wireless sensor data compression is first proposed in this work. The
		experiments on the real-world wireless sensor network (WSN) data compression show that the proposed scheme enjoys
		strong robustness to missing data and achieves a better root mean square error (RMSE) with one of the shortest execution times  ($< 0.1$ s) compared
		to the state-of-the-art baselines \cite{rubinstein2008efficient,jenatton2010proximal,mairal2010online,ravishankar2012learning,sun2015complete1,sun2015complete2,bai2018subgradient,zhai2020complete}.
	\end{itemize}
	
	\subsection{Related Work}
	The recent success of non-convex optimization with  high-dimensional signals indicates that the statistic properties of big data enable benign structures of non-convex problems and allow us to focus on average-case performance  by  excluding the worst-case instances in the general non-convex tasks \cite{8811622}. Many important non-convex problems have proved to be solved efficiently, e.g., sparse coding (dictionary learning) \cite{spielman2012exact,sun2015complete1,sun2015complete2,gilboa2018efficient,bai2018subgradient,qu2019geometric,zhai2020complete,zhai2019understanding}, blind deconvolution \cite{li2018global,shi2019manifold,qu2019nonconvex}, low-rank matrix recovery \cite{li2020non}, phase
	retrieval \cite{chen2019gradient,pmlr-v48-zhange16}, matrix completion \cite{NIPS2016_7fb8ceb3},  learning shallow neural networks \cite{fu2020guaranteed,chen2020generalized},  etc. Due to the scope, we discuss the closely related literature in the following.
	
	The initial attempts using the non-convex approach to solve the sparse coding Problem (\ref{eq:DLo}) focused on using $\ell_1$-norm or its approximation (e.g., $logcosh$) as the sparsity-promoting function \cite{sun2015complete1,sun2015complete2,gilboa2018efficient,bai2018subgradient}. However, these approaches result in a long execution time since one needs to break down  Problem (\ref{eq:DLo}) into solving $N$ subproblems sequentially for an $N$-atom dictionary. To overcome this issue,  \cite{zhai2020complete,zhai2019understanding}  proposed a novel {\em matching, stretching, and projection (MSP)} method to directly solve  Problem (\ref{eq:DLo}) with ${\text Sp}(\cdot)=-\Vert\cdot\Vert^4_4$ over the orthogonal group. In \cite{shen2020complete}, the result in  \cite{zhai2020complete,zhai2019understanding}   is generalized to use  ${\text Sp}(\cdot)=-\Vert\cdot\Vert^p_p$, $p>2$, $p\in\mathbb{N}$, and shows that  $p=3$ achieves the lowest sample complexity among all $p>2$, $p\in\mathbb{N}$. Compared to the atom-by-atom solutions \cite{sun2015complete1,sun2015complete2,gilboa2018efficient,bai2018subgradient},  both methods \cite{zhai2020complete,zhai2019understanding,shen2020complete} substantially reduce the execution time by  directly recovering the entire dictionary with efficient parameter-free algorithms. However, these two methods can only recover an approximation of the target solution with finite data samples, which degrades the recovery performance. In this paper, we propose an efficient two-stage scheme that exhibits  better  performance than the schemes in \cite{zhai2020complete,zhai2019understanding,shen2020complete} under a finite number of samples, and inherits their efficiency. 
	
	Two-stage approaches have been frequently adopted in recent non-convex optimization literature to obtain accurate results \cite{chi2016kaczmarz,pmlr-v48-zhange16,8024376,sun2015complete1,sun2015complete2,qu2019nonconvex}.  Usually, the first stage will produce an approximation of the target solution, and the second stage refines the approximation to be  more accurate. Specifically, for sparse coding, \cite{sun2015complete1,sun2015complete2} proposed a two-stage solution to exactly recover the dictionary, which solves a linear programming at the second stage. However, no efficient algorithm is provided in \cite{sun2015complete1,sun2015complete2} for the second stage. Qu et al. proposed a two-stage solution for the multichannel blind deconvolution problem \cite{qu2019nonconvex} with an efficient algorithm to solve a similar linear programming in the second stage. However, directly adopting  the second stage of \cite{qu2019nonconvex} in the  sparse coding problem will lead to sequentially solving the linear programming $N$ times for an $N$-atom dictionary, which will incur a long execution time. We, instead,  propose an efficient two-stage sparse coding scheme to achieve a superb dictionary recovery performance  without any  atom-by-atom calculation.
	
	\subsection{Paper Organization and Notations}
	The rest of the paper is organized as follows. Section \ref{sec:sp}
	introduces the sparse coding problem and application examples. Section \ref{sec:alg} presents the proposed two-stage sparse coding scheme. The theoretical results and the numerical experiments are 
	demonstrated in Section \ref{sec:the} and Section \ref{sec:exp}, respectively. Finally, conclusions are drawn
	in Section \ref{sec:con}.

	In this paper, lowercase and uppercase bold face letters stand for column vectors and matrices, respectively. The $i$-th  column vector in $\bm{X}$ is $\bm{X}_{:,i}$. The vector $\bm{d}_{-i}$ is  $\bm{d}$ with the $i$-th coordinate removed, and $d_j$ denotes the $j$-th element of $\bm{d}$. $x_{i,j}$ is the $j$-th element of vector $\bm{x}_i$. $\mathbb{GL}(N,\mathbb{R})$ represents the $N$-dimensional general linear group over the real value. $\mathbb{O}(N)$ and $\mathbb{S}^{N-1}$ denote the $N$ dimensional orthogonal group and the $(N-1)$-sphere with real-valued entries, respectively. $\bm{e}_i$ denotes the vector with a $1$ in the $i$-th coordinate and $0$'s elsewhere.  $(\centerdot)^T$ and $\mathbb{E}[\centerdot]$ denote the operations of transpose and expectation, respectively. $||\centerdot||_{p}$ is the  $\ell_{p}$-norm of a vector or the induced   $\ell_{p}$-norm  of a matrix. For simplicity,  the $\ell_2$-norm  is denoted by $\|\centerdot\|$.  $\odot$ denotes
	the Hadamard product. $\langle{\bm{X}},\bm{Y}\rangle$  is the general
	inner product of $\bm{X}$ and $\bm{Y}$. {$|\cdot|$ represents the element-wise absolute value of a matrix.} $\mathcal{P}_{\mathbb{O}(N)}(\bm{D})$  projects $\bm{D}$ onto the orthogonal group, and $\mathcal{P}_{\mathbb{S}^{N-1}}(\bm{d})$  projects $\bm{d}$ onto the sphere. $\nabla f(\bm{x})$ and  $\nabla_{grad}f(\bm{x})$ are respectively the sub-gradient  and the Riemannian sub-gradient of $f(\centerdot )$ with respect to $\bm{x}$. For a positive integer $N$, we define $[N] =
	\{1, 2,...,N\}$. The set $\{\bm{x}_i, i\in [N]\}$ is abbreviated as $\{\bm{x}_i\}_{i=1}^N$. $\lfloor\centerdot \rfloor$ denotes the floor operator of a scalar. $\bm{x} \sim^{i.i.d} \mathcal{BG}(\theta)$ represents that vector $\bm{x}$ has i.i.d elements and each is a product of independent Bernoulli and standard normal random variables: $x_i =
	b_ig_i$, where $b_i \sim Ber(\theta)$ and $g_i  \sim \mathcal{N} (0, 1)$.
	\section{Sparse Coding Problem and Application Examples}\label{sec:sp}
	\subsection{Sparse Coding Problem}
	In this section, we  illustrate the sparse coding problem. We consider that the  data samples $\{\bm{y}_i\}^L_{i=1}$ are collected by the sparse coding  processor with  $\bm{y}_i\in\mathbb{R}^N$, and $L>N$. Each sample is assumed to be generated by
	\begin{equation}\label{mod}
	\bm{y}_i =\bm{D}^*\bm{x}^*_i, \forall i=1,\ldots,L,
	\end{equation} 
	where $\bm{D}^*$ is the orthogonal dictionary and $\bm{x}^*_i$ is the sparse code. {The orthogonality assumption enables efficient  implementation to solve the sparse coding problem,} as we will illustrate in the following sections. 
	The goals of the sparse coding are:
	\begin{itemize}
		\item  to learn the  orthogonal dictionary $\bm{D}^*\in \mathbb{O}(N)$ from the given data set  $\{\bm{y}_i\}^L_{i=1}$, 
		\item  and to find the sparse code $\bm{x}^*_i$ such that $\bm{y}_i = \bm{D}^*\bm{x}^*_i$.
	\end{itemize}
	
	One can achieve the above goals by alternately  updating the dictionary and the sparse code. However, we consider separating the two  by first learning the dictionary and then  the sparse code since this is  more computationally efficient. This separate learning can be done by optimizing the generic  non-convex Problem (\ref{eq:DLo}). The optimizer returns a learned dictionary, and a simple matrix-vector multiplication  can return the sparse code. 
	
	\subsection{Application Examples}
	In this section, we give two important application examples
	that involve solving the sparse coding problem.

	{\em Example 1} (Data Compression in Industrial IoT (IIoT) Networks): 
	Consider a dense IIoT network, where a total number of $N$ geographically distributed IIoT sensors jointly monitor a dynamic plant. At the $i$-th time slot,  the IIoT sensors transmit their local plant state measurements $\bm{y}_i\in\mathbb{R}^N$ to a data center (DC), which relays the sensor data to a remote controller to form closed-loop industrial control.   Due to the spatial and  temporal correlation  of the IIoT sensors, the collection of the sensor data in $L$ time slots  $\{\bm{y}_i\}^L_{i=1}$ has a sparse representation  $\bm{y}_i =\bm{D}^*\bm{x}^*_i,\forall i=1, \ldots, L$. 
	To reduce the communication cost, the DC can first compress the sensor data by performing the sparse coding over the historical samples $\{\bm{y}_i\}^L_{i=1}$, it then transmits the learned dictionary $\hat{\bm{D}}$ and the sparse codes $\{\hat{\bm{x}}_i\}^L_{i=1}$ to the remote controller.
	
	{\em Example 2} (Image Denoising):
	For  image denoising application \cite{4011956,bao2013fast}, the noisy training data samples $\{\tilde{\bm{y}}_i\}^L_{i=1}$ are obtained by vectorizing a total number of $L$  randomly sampled $\sqrt{N}\times\sqrt{N}$-sized image patches  from the noisy image. Due to the correlation among image patches, the $i$-th patch of the clean image can be represented by  $\bm{y}_i =\bm{D}^*\bm{x}^*_i$,  where $\bm{D}^*$ is the dictionary containing the compact features of the whole image, and $\bm{x}^*_i$ is the sparse code for the $i$-th patch. Applying  the sparse coding to the noisy  training samples $\{\tilde{\bm{y}}_i\}^L_{i=1}$,  the
	de-noised image can be reconstructed using the de-noised image patches  $\{\hat{\bm{D}}\hat{\bm{x}}_i\}^L_{i=1}$ by averaging the overlapping pixels, where $\hat{\bm{D}}$ and $\{\hat{\bm{x}}_i\}^L_{i=1}$ are the learned dictionary and the sparse codes, respectively. 
	\section{Proposed Hierarchical Riemannian Pursuit Sparse Coding}\label{sec:alg}
	In this section, we  present the proposed  HRP scheme for sparse coding, for which the signal processing flow is summarized in Fig. \ref{fig:flow}.
	\begin{figure}[htbp]
		\centering
		\includegraphics[width=0.3\linewidth]{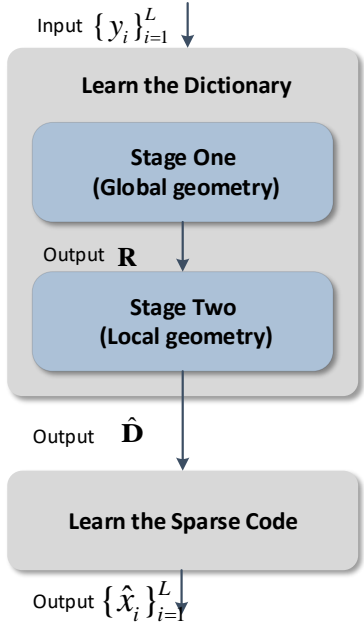}
		\caption{Signal processing flow for the proposed HRP sparse coding scheme.}
		\label{fig:flow}
	\end{figure}
	
	\subsection{Learn the Dictionary}\label{sec:pro}
	We start from the elaboration of the two-stage dictionary learning.
	\subsubsection{ Stage One}
	At Stage One, we propose to solve 
	\begin{equation}
	\begin{aligned}
	\mathscr{P}_1:&\quad\underset{\bm{D} \in \mathbb{O}(N)}{\text{minimize}}
	& & -\frac{1}{L}\sum^L_{i=1}\Vert\bm {D}^T\bm{y}_i\Vert^3_3, \\
	\end{aligned}
	\label{pro:1}
	\end{equation}
	{where $-\Vert\cdot\Vert^3_3$ is a differentiable, non-smooth, and concave sparsity-promoting function.} The choice of the objective function is inspired by the recent result  that maximizing the $p$-th power of the $\ell_p$-norm ($p>2$, $p\in\mathbb{N}$) with the unit $\ell_2$-norm constraint leads to sparse (or spiky) solutions \cite{Li2018,zhai2020complete,zhai2019understanding,qu2019geometric,shen2020complete}. An illustration of this is given in Fig. \ref{fig:lpball}. In this paper,  we use $p=3$, i.e., minimizing  $-\Vert\cdot\Vert^3_3$ in Problem (\ref{pro:1}), which has several benefits. First,  the differentiable and concavity of this function, together with the global geometry of the orthogonal group enables a fast and parameter-free algorithm. Second, as we have proved in \cite{shen2020complete}, the sample complexity for consistency achieves the minimum  when $p=3$,  among all the choices of $p$ ($p>2$, $p\in \mathbb{N}$) for maximizing  the $\ell_p$-norm over the orthogonal group. 
	\begin{figure}[htbp]
		\centering
		
		\includegraphics[width=0.3\linewidth]{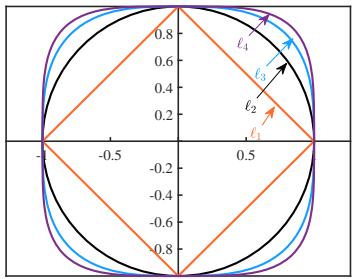}
		\caption{Unit spheres of the of $\ell_p$ in $\mathbb{R}^2$, where $p = 1, 2, 3, 4$. The sparsest points on the unit $\ell_2$-norm sphere, e.g., points $(0, 1),(0,-1),(1, 0)$ and $(-1,0)$ in $\mathbb{R}^2$, have the largest $\ell_p$-norm ($p > 2,p\in\mathbb{N}$).}
		\label{fig:lpball}
	\end{figure}

	Stage One aims at minimizing a concave function, $-\Vert\cdot\Vert_3^3$, over a compact Riemannian manifold, $\mathbb{O}(N)$. Such a problem can be solved by plenty of optimization methods on the Riemannian manifold, e.g., Riemannian gradient descent and Riemannian trust region. However,  since the orthogonal group is a compact set in the vector space, the more efficient GPM \cite{journee2010generalized}  can be applied. In each iteration, the GPM minimizes a linear
	surrogate of the concave objective function over the constraint compact set. Specifically, using GPM to solve the  generic  problem
	\begin{equation}
	\underset{x \in Q}{\text{min} } \quad \underbrace{f(x)}_{\text{Concave}}\notag,
	\end{equation}
	we have the following iteration:
	\begin{equation}
	\bm{x}^{t+1} = \underset{\bm{s} \in Q }{\text{argmax}}  -f(\bm{x}^{(t)})+ \langle\bm{s}-\bm{x}^{(t)}, -\nabla f(\bm{x}^{(t)})\rangle,
	\end{equation}
	where $Q$ is a compact set and $\nabla f(\bm{x})$ is any subgradient of $f$ at $\bm{x}$.
	Applying GMP  to solve Problem $\mathscr{P}_1$, at the $t_1$-th iteration, we have
	\begin{equation}\label{alg:s(t_1)}
	\bm{D}^{(t_1+1)} = Polar( \nabla\frac{1}{L}\sum^L_{i=1}\Vert(\bm {D}^{(t_1)})^T\bm{y}_i\Vert^3_3).
	\end{equation}
	In (\ref{alg:s(t_1)}), the {\em negative} gradient of $-\frac{1}{L}\sum^L_{i=1}\Vert\bm {D}^T\bm{y}_i\Vert^3_3$ is calculated by
	\begin{equation}\label{alg:grad1} 
	\begin{aligned}
	&\nabla\frac{1}{L}\sum^L_{i=1}\Vert(\bm{D}^{(t_1)})^T\bm{y}_i\Vert^3_3 \\
	=&\frac{1}{L}\bm{Y}(|(\bm {D}^{(t_1)})^T\bm{Y}|\odot (\bm {D}^{(t_1)})^T\bm{Y})^T,
	\end{aligned}
	\end{equation}where $\bm{Y}=[\bm{y}_1,\bm{y}_2,\ldots,\bm{y}_L]\in\mathbb{R}^{N\times L}$. $Polar(\cdot)$ is the polar decomposition of a matrix. The solution (\ref{alg:s(t_1)}) is obtained by the following proposition.
	\begin{prop} 	
		Suppose we have $\bm{C} \in \mathbb{R}^{N \times N}$   and  the singular values of $\bm{C}$ are denoted by $\sigma_i(\bm{C}), i=1,\cdots,N$, then we have
		\begin{align*}
		\underset{\bm{s} \in \mathbb{O}(N)}{\max} \langle \bm{s},  \bm{C} \rangle = \sum_{i=1}^N \sigma_i(\bm{C})
		\end{align*}
		with maximizer $\bm{s} =  Polar(\bm{C})$. 
	\end{prop}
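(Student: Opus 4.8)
The plan is to recognize the objective as the trace inner product $\langle \bm{s},\bm{C}\rangle = \mathrm{tr}(\bm{s}^T\bm{C})$ and then diagonalize the problem using the singular value decomposition (SVD) of $\bm{C}$. First I would write $\bm{C}=\bm{U}\bm{\Sigma}\bm{V}^T$ with $\bm{U},\bm{V}\in\mathbb{O}(N)$ and $\bm{\Sigma}=\mathrm{diag}(\sigma_1(\bm{C}),\ldots,\sigma_N(\bm{C}))$, where each $\sigma_i(\bm{C})\ge 0$. Substituting and invoking the cyclic invariance of the trace gives
\begin{equation*}
\langle \bm{s},\bm{C}\rangle=\mathrm{tr}\big(\bm{s}^T\bm{U}\bm{\Sigma}\bm{V}^T\big)=\mathrm{tr}\big((\bm{U}^T\bm{s}\bm{V})^T\bm{\Sigma}\big).
\end{equation*}

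Next I would introduce the change of variable $\bm{Z}:=\bm{U}^T\bm{s}\bm{V}$. Since $\bm{s},\bm{U},\bm{V}$ are orthogonal, $\bm{Z}\in\mathbb{O}(N)$, and conversely every $\bm{Z}\in\mathbb{O}(N)$ corresponds to a unique $\bm{s}=\bm{U}\bm{Z}\bm{V}^T\in\mathbb{O}(N)$, so the feasible set is preserved and the problem is equivalent to maximizing $\mathrm{tr}(\bm{Z}^T\bm{\Sigma})=\sum_{i=1}^N Z_{ii}\,\sigma_i(\bm{C})$ over $\bm{Z}\in\mathbb{O}(N)$. The key elementary bound is that each diagonal entry of an orthogonal matrix is at most one: since every column of $\bm{Z}$ has unit $\ell_2$-norm, $|Z_{ii}|\le\|\bm{Z}_{:,i}\|=1$. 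Combining $Z_{ii}\le 1$ with $\sigma_i(\bm{C})\ge 0$ yields $\sum_i Z_{ii}\,\sigma_i(\bm{C})\le\sum_i\sigma_i(\bm{C})$, which is precisely the claimed upper bound.

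It then remains to check attainment and identify the maximizer. The choice $\bm{Z}=\bm{I}$ is orthogonal and achieves $\sum_i\sigma_i(\bm{C})$, so the bound is tight. Translating back, $\bm{Z}=\bm{I}$ corresponds to $\bm{s}=\bm{U}\bm{V}^T$, and from the factorization $\bm{C}=(\bm{U}\bm{V}^T)(\bm{V}\bm{\Sigma}\bm{V}^T)$ we read off that $\bm{U}\bm{V}^T$ is exactly the orthogonal polar factor $Polar(\bm{C})$, with $\bm{V}\bm{\Sigma}\bm{V}^T\succeq 0$ the positive semidefinite factor. Hence $\bm{s}=Polar(\bm{C})$ is a maximizer, completing the argument. (As an alternative to the SVD reduction, the upper bound also follows immediately from von Neumann's trace inequality together with the observation that all singular values of an orthogonal $\bm{s}$ equal one.)

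This result is essentially classical, so I do not expect a serious obstacle; the only subtlety worth noting is the equality/uniqueness analysis when $\bm{C}$ is rank deficient. If some $\sigma_i(\bm{C})=0$, then the corresponding diagonal entries $Z_{ii}$ are unconstrained by the tightness condition and the maximizer is no longer unique, but $\bm{s}=Polar(\bm{C})$ (with the polar factor fixed by the chosen SVD) remains a valid optimizer, which is all the proposition requires.
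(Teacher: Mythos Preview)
Your argument is correct and complete. The paper itself does not spell out a proof; it simply remarks that the result follows by specializing \cite[Proposition~7]{journee2010generalized} from the Stiefel manifold to the orthogonal group, and the proof there is precisely the SVD-based trace reduction you wrote out. So your approach is essentially the same as the cited one, just given in self-contained form rather than by reference.
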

	\begin{proof}
		The proof can be done by changing the compact set from the  Stiefel manifold to the orthogonal group in the proof of \cite[Proposition 7]{journee2010generalized}.
	\end{proof}
	To obtain a more accurate dictionary recovery result, we propose the following Stage Two to refine the solution of Stage One.
	\subsubsection{Stage Two}
	At Stage Two, we propose to solve
	\begin{equation}\label{pro:2}
	\begin{aligned}
	\mathscr{P}_2:&\quad\underset{\bm{D}}{\text{minimize}}\quad
	& &\frac{1}{L}\sum^L_{i=1}{\|\bm {D}^T\bm{y}_i\|_1}, \\	
	&\textrm{subject to} \quad 
	& & \bm{R}^T\bm{D}+\bm{D}^T\bm{R}=2\bm{I}, \\
	\end{aligned}
	\end{equation}
	where $\bm{R}$  is the solution obtained at Stage One. 
	The equality constraint is developed from the first-order expansion of the orthogonal group at $\bm{R}$. This constraint   can be regarded as a convex relaxation of the orthogonal constraint at the neighborhood of the target dictionary. The relaxed convex constraint and the objective function make this problem   convex and enable efficient local search, which can refine the coarse solution in Stage One under finite data samples. Therefore, we can expect to obtain a more accurate result by  solving Problem $\mathscr{P}_2$ with simple algorithms, after solving Problem   $\mathscr{P}_1$.

	Problem $\mathscr{P}_2$  is a convex problem, which can be solved by existing convex solvers. However, the implementation can be relatively slow. In this paper, inspired by the rounding technique in \cite{qu2019nonconvex} for the multi-channel blind deconvolution problem,  we propose  an efficient RPG to solve 	Problem $\mathscr{P}_2$. The algorithm is derived based on the fact that the moving direction from $\bm{R}$ to the solution of Problem $\mathscr{P}_2$ is on the tangent space of the orthogonal group at $\bm{R}$, which is characterized in the following Lemma \ref{lem:tang}. 
	\begin{lemma} \label{lem:tang}
		Let ${\bm X}\in\mathbb{O}(N)$ and ${\bm Z}\in\mathbb{R}^{N\times N}$. Then $\bm{X}^T\bm{Z}+\bm{Z}^T\bm{X}=2\bm{I}$ if and only if
		\begin{equation*}
		\bm{\Delta}=\bm{Z}-\bm{X} = \bm{X}\bm{\Omega}, \quad \text{for some} \quad \bm{\Omega}\in \mathcal{S}_{skew}(N),
		\end{equation*}
		where $\mathcal{S}_{skew}(N)$ denotes the set of all skew-symmetric $N\times N$ matrices. $\bm{X}\bm{\Omega}$  lies on the tangent space of the orthogonal group at $\bm{X}$.
	\end{lemma}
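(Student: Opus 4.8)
The plan is to establish the equivalence by direct algebraic manipulation, exploiting the invertibility of $\bm{X}$ and the orthogonality relation $\bm{X}^T\bm{X}=\bm{I}$. The crucial observation is that, since $\bm{X}\in\mathbb{O}(N)$ is invertible, the difference $\bm{\Delta}=\bm{Z}-\bm{X}$ can \emph{always} be written uniquely as $\bm{\Delta}=\bm{X}\bm{\Omega}$ with $\bm{\Omega}=\bm{X}^T\bm{\Delta}$. Thus the entire content of the lemma reduces to showing that the equality constraint $\bm{X}^T\bm{Z}+\bm{Z}^T\bm{X}=2\bm{I}$ holds if and only if this particular $\bm{\Omega}$ is skew-symmetric.

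For the forward direction, I would set $\bm{\Omega}=\bm{X}^T\bm{\Delta}=\bm{X}^T\bm{Z}-\bm{I}$ and compute $\bm{\Omega}+\bm{\Omega}^T=(\bm{X}^T\bm{Z}-\bm{I})+(\bm{Z}^T\bm{X}-\bm{I})=\bm{X}^T\bm{Z}+\bm{Z}^T\bm{X}-2\bm{I}$, which vanishes precisely under the assumed constraint and hence yields $\bm{\Omega}^T=-\bm{\Omega}$. For the reverse direction, I would substitute $\bm{Z}=\bm{X}(\bm{I}+\bm{\Omega})$ and use $\bm{X}^T\bm{X}=\bm{I}$ together with $\bm{\Omega}^T=-\bm{\Omega}$ to obtain $\bm{X}^T\bm{Z}=\bm{I}+\bm{\Omega}$ and $\bm{Z}^T\bm{X}=\bm{I}+\bm{\Omega}^T=\bm{I}-\bm{\Omega}$, whose sum is exactly $2\bm{I}$. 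Both directions are therefore only a few lines once the parametrization $\bm{\Delta}=\bm{X}\bm{\Omega}$ is fixed.

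For the final tangent-space assertion, I would invoke the standard characterization of the tangent space of the orthogonal group. Differentiating any smooth curve $\bm{\gamma}(t)\in\mathbb{O}(N)$ with $\bm{\gamma}(0)=\bm{X}$ along the defining identity $\bm{\gamma}(t)^T\bm{\gamma}(t)=\bm{I}$ gives $\dot{\bm{\gamma}}(0)^T\bm{X}+\bm{X}^T\dot{\bm{\gamma}}(0)=0$; writing the velocity as $\bm{X}\bm{\Omega}$ forces $\bm{\Omega}+\bm{\Omega}^T=0$, and conversely $t\mapsto\bm{X}\exp(t\bm{\Omega})$ realizes any such $\bm{X}\bm{\Omega}$ as a genuine tangent vector. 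Hence $T_{\bm{X}}\mathbb{O}(N)=\{\bm{X}\bm{\Omega}:\bm{\Omega}\in\mathcal{S}_{skew}(N)\}$, so $\bm{\Delta}$ indeed lies on it. I do not anticipate any serious obstacle: the statement is essentially the first-order (linearized) description of the orthogonality constraint, and the only mild subtlety is recognizing that invertibility of $\bm{X}$ makes $\bm{\Delta}\mapsto\bm{\Omega}=\bm{X}^T\bm{\Delta}$ a bijection, so that skew-symmetry of $\bm{\Omega}$ is simultaneously necessary and sufficient for the constraint.
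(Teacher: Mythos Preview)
Your proposal is correct and follows essentially the same approach as the paper: both arguments parametrize $\bm{\Delta}=\bm{X}\bm{\Omega}$ and reduce the constraint $\bm{X}^T\bm{Z}+\bm{Z}^T\bm{X}=2\bm{I}$ to the condition $\bm{\Omega}+\bm{\Omega}^T=\bm{0}$ by direct expansion. Your write-up is in fact slightly more careful than the paper's, since you make explicit that invertibility of $\bm{X}$ guarantees the parametrization $\bm{\Omega}=\bm{X}^T\bm{\Delta}$ is always available (the paper simply asserts ``let $\bm{\Delta}=\bm{X}\bm{\Omega}$'' in the forward direction), and you supply the standard curve-differentiation argument for the tangent-space identification, which the paper omits.
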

	\begin{proof}
		See Appendix \ref{proof:tang}.
	\end{proof}
	Given that the moving direction is on the tangent space of the orthogonal group at $\bm{R}$, the RPG at the $t_2$-th iteration iterates as
	
	\begin{equation}\label{alg:two}
	\bm{D}^{(t_2+1)} =  \bm{D}^{(t_2)} -  \tau^{(t_2)}\mathcal{P}_{\bm{R}^\perp}(\nabla \frac{1}{L}\sum^L_{i=1}{\|({\bm  D}^{(t_2)})^T\bm{y}_i\|_1}),
	\end{equation} 
	where \begin{equation}\label{alg:grad2}
	\nabla \frac{1}{L}\sum^L_{i=1}{\| ({\bm D}^{(t_2)})^T\bm{y}_i\|_1}=\frac{1}{L} \bm{Y}sign((\bm {D}^{(t_2)})^T\bm{Y})^T,
	\end{equation}
	and 
	\begin{equation}\label{alg:proj}
	\mathcal{P}_{\bm{R}^\perp}(\bm{A}) = \frac{1}{2}(\bm{A}-\bm{R}\bm{A}^T\bm{R}),
	\end{equation} which  projects $\bm{A}$  onto the tangent space of $\bm{R}$ over the orthogonal group  \cite{absil2009optimization}. Since the  orthogonal constraint is relaxed, an orthogonal projection, $\mathcal{P}_{\mathbb{O}(N)}(\cdot)$, is necessary after   Stage Two. 
	
	{ To summarize, we show an example to illustrate  the proposed two-stage dictionary learning  in Fig. \ref{fig:2stage}. }
	\begin{figure}[htbp]
		\centering
		\includegraphics[width=0.5\linewidth]{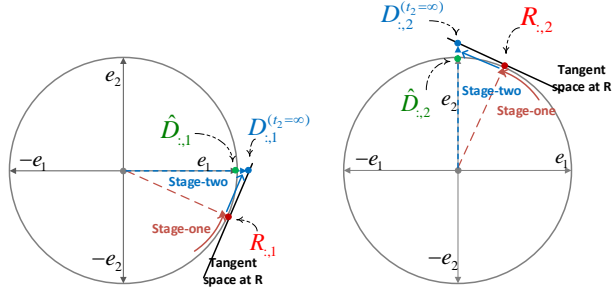}
		\caption[Text excluding the matrix]{{An example to illustrate the two-stage scheme. Assume that the target orthogonal dictionary is $\bm{D}^*=\left[{\begin{array}{lr} 1 & 0 \\ 0 & 1 \end{array}}\right]$, then the target solution for each column is $\bm{e}_1$ and $\bm{e}_2$. The left subfigure shows the behavior of the first column. Specifically, at the first stage, the scheme searches along the manifold to find a point $\bm{R}_{:,1}$ that is very closed to  $\bm{e}_1$. Then, at the second stage, the scheme searches along the tangent space of  $\bm{R}_{:,1}$ to obtain $\bm{D}^{(t_2=\infty)}_{:,1}$, which is a scaled version of  $\bm{e}_1$. Therefore, the final result $\hat{\bm{D}}_{:,1}$ can be obtained by a projection (or normalization). Similarly, the right subfigure shows the behavior of the second column with the target solution, $\bm{e}_2$.}}
		
		\label{fig:2stage}
	\end{figure}
	
	\subsection{Learn the  Sparse Code}\label{sec:spa}
	Based on the learned dictionary $\hat{\bm{D}}$, the estimate of the sparse code can be obtained directly by $\hat{\bm{x}}_i = \hat {\bm{D}}^T\bm{y}_i, \forall i$. However, some applications have strict sparsity requirements of $\hat{\bm{x}}_i$, e.g., data compression. In this case, one can obtain the sparse code by solving
	\begin{equation}\label{eq:spacd}
	\begin{aligned}
	\underset{\{\bm{x}_i\}^L_{i=1}}{\text{minimize}}\quad
	& \sum^L_{i=1}\Vert\hat{\bm{D}}^T\bm{y}_i-\bm{x}_i\Vert^2 \\
	\textrm{subject to} \quad & \Vert\bm{x}_i\Vert_0\leq T_0, \forall i, \\
	\end{aligned}
	\end{equation} 
	where $T_0$ is the required number of non-zero elements in $\bm{x}_i$.
	Problem (\ref{eq:spacd}) has the well-known hard thresholding solution,
	$ \tilde{\bm{x}}_i = \mathcal{T}_{T_0}(\hat{\bm{D}}^T\bm{y}_i)$ \cite{ravishankar2012learning},
	where $ \mathcal{T}_{T_0}(\bm{a})$ selects $T_0$ elements in $\bm{a}$ with the largest $\ell_2$-norm and sets the other elements to zero.
	
	To summarize,  the proposed HRP sparse coding  scheme is given  in Algorithm \ref{alg:2st}.
	
	\begin{algorithm}[H]
		\caption{Proposed HRP Sparse Coding}
		\begin{algorithmic}[1]
			\REQUIRE~$\{\bm{y}_i\}^L_{i=1}$\\
			\STATE $\textbf{Learn the Dictionary}$\\
			\STATE $\text{Solve Problem}$ $\mathscr{P}_1$ $\text{ by GPM}$\\
			\STATE $\textbf{Initialization}$: random $\bm{D}^{(t_1=0)} \in \mathbb{O}(N)$, $t_1 = 0$.\\
			\WHILE {$\text{not converge}$}
			\STATE	$\bm{D}^{(t_1+1)} = Polar(\nabla \frac{1}{L}\sum^L_{i=1}\Vert(\bm {D}^{(t_1)})^T\bm{y}_i\Vert^3_3)$\\
			\STATE  $t_1 = t_1+1$\\
			\ENDWHILE
			\STATE $\text{Solve Problem}$ $\mathscr{P}_2$ $\text{ by RPG}$\\
			\STATE $\bm{R} = \bm{D}^{(t_1)} $, $\bm{D}^{(t_2=0)}=\bm{R}$, $t_2 = 0$\\
			\WHILE {$\text{not converge}$}
			\STATE {\small$\bm{D}^{(t_2+1)}=\bm{D}^{(t_2)}-\tau^{(t_2)}\mathcal{P}_{\bm{R}^\perp}(\nabla \frac{1}{L}\sum^L_{i=1}{\|(\bm {D}^{(t_2)} )^T\bm{y}_i\|_1})$}\\
			\STATE  $t_2 = t_2+1$\\
			\ENDWHILE	
			\STATE $\hat{\bm{D}} =  \mathcal{P}_{\mathbb{O}(N)}(\bm {D}^{(t_2)})$.\\
			\STATE $\textbf{Learn the Sparse Code}$\\
			\STATE $\hat{\bm{x}_i}=\hat{\bm {D}}^T\bm{y}_i,\forall i$ or $\hat{\bm{x}_i}=\mathcal{T}_{T_0}(\hat{\bm {D}}^T\bm{y}_i),\forall i$.
		\end{algorithmic}\label{alg:2st}
	\end{algorithm}
	{  
		\begin{remark}[Generalization to Sparse Coding with Complete Dictionary]
			By adopting the preconditioning
			technique introduced in \cite{sun2015complete1,sun2015complete2}, the proposed approach can also handle the case with a general 
			complete dictionary.  We show the detailed procedures
			for sparse coding with a  complete dictionary  in the following. 
			\begin{itemize}
				\item \textbf {Preconditioning}:
				Adopt the preconditioning technique in \cite{sun2015complete2} on the input $\bm{Y}=[\bm{y}_{1},\bm{y}_{2},\ldots,\bm{y}_{L}]$ to obtain $\bar{\bm{Y}}$ 
				\item \textbf{{Orthogonal Dictionary Learning}}:
				Run Algorithm \ref{alg:2st} till line 14  to obtain the estimated orthogonal 
				dictionary $\hat{\bar{\bm{D}}}$ with the input $\bar{\bm{Y}}=[\bar{\bm{y}}_{1},\bar{\bm{y}}_{2},\ldots,\bar{\bm{y}}_{L}]$.
				\item \textbf{{Recovering the Intermediate Sparse Code}}:
				Estimate the intermediate  sparse code by $\bar{\bm{X}}=\hat{\bar{\bm{D}}}^{T}\bar{\bm{Y}}$. 
				\item \textbf{{Recovering the General Complete Dictionary}}:
				Obtain the general complete dictionary by  $\hat{\bm{D}}=\bm{Y}\bar{\bm{X}}^{T}(\bar{\bm{X}}\bar{\bm{X}}^{T})^{-1}$. 
				\item \textbf{{Learning the Sparse Code}}:
				Run line 16 in Algorithm \ref{alg:2st}  to obtain the sparse code.
			\end{itemize}
	\end{remark}}

	\section{Exact Recovery of One Atom}\label{sec:the}
	Since the dictionary learning  contains  most of the technical challenges in the proposed sparse coding scheme, rigorous  interpretation needs to be presented to justify its correctness. In this section, we investigate why and how the dictionary learning in the  proposed scheme can achieve a superb dictionary recovery performance using a finite number of  samples. Unfortunately, the exact analysis over the orthogonal group is extremely difficult.  In fact, even whether
	a local optimum of the orthogonal dictionary learning Problem
	(\ref{eq:DLo}) exists is still a mathematically open problem, as pointed out in \cite{zhai2020complete}. Through extensive numerical simulations, we find that the convergence behaviors of the proposed scheme over the orthogonal group and over the sphere are  very similar, as shown in Fig. \ref{fig:convsimi}. Moreover, inspired by the pioneer works in  \cite{sun2015complete1,sun2015complete2,gilboa2018efficient,bai2018subgradient} that learn the orthogonal dictionary over the sphere, we  seek  a relaxation to prove that our proposed scheme can exactly recovery any atom (up to a sign ambiguity) of the dictionary with a finite number of samples when it is applied over the sphere.  The analysis is highly non-trivial due to the non-convexity  in the proposed scheme and the randomness in the data samples. 
	\begin{figure}[htbp]
		\centering
		\subfigure[Convergence  over the orthogonal group $\mathbb{O}(N)$.]{\includegraphics[scale=0.32]{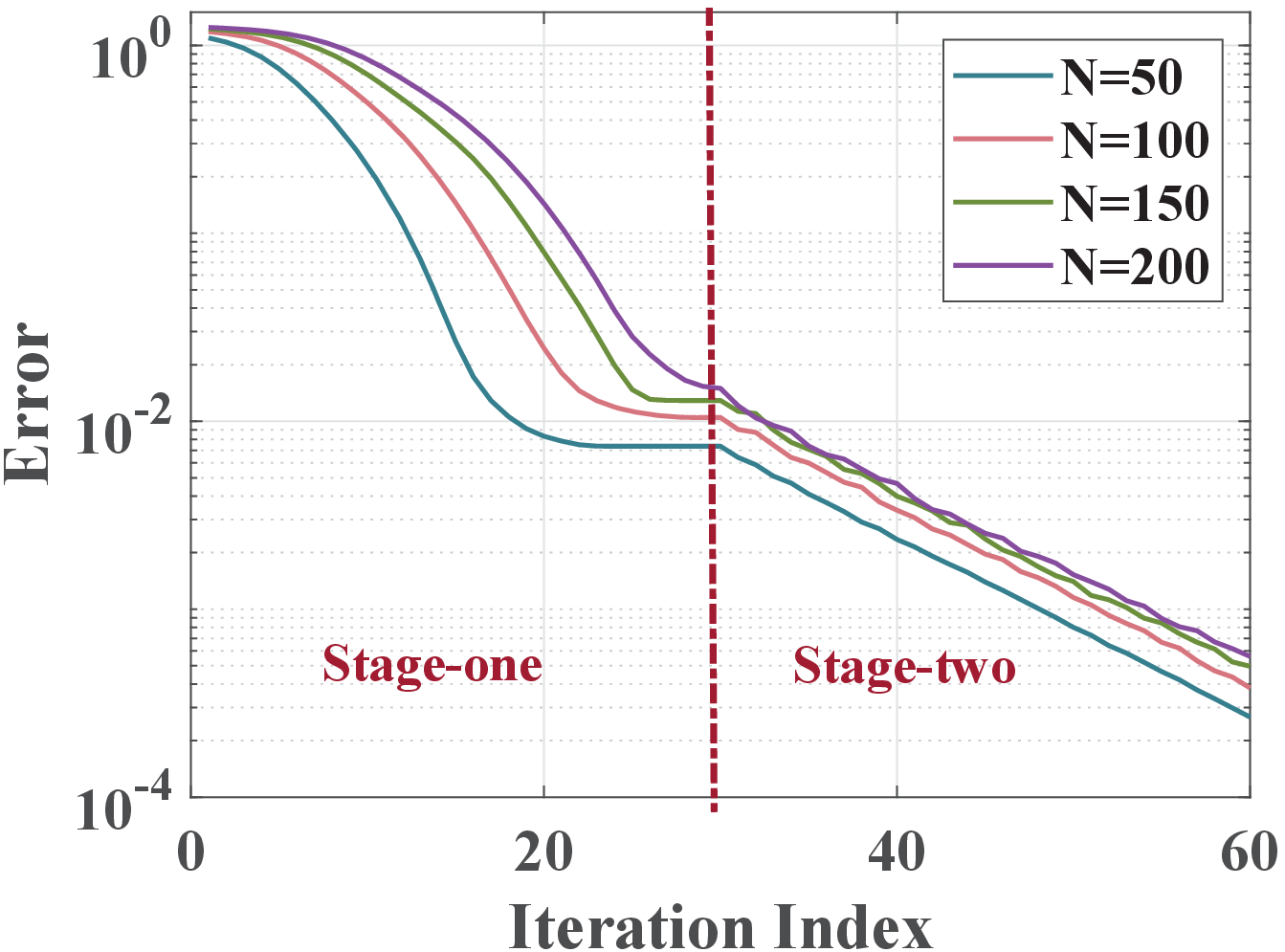}}
		\subfigure[Convergence over the sphere $\mathbb{S}^{N-1}$.]{\includegraphics[scale=0.32]{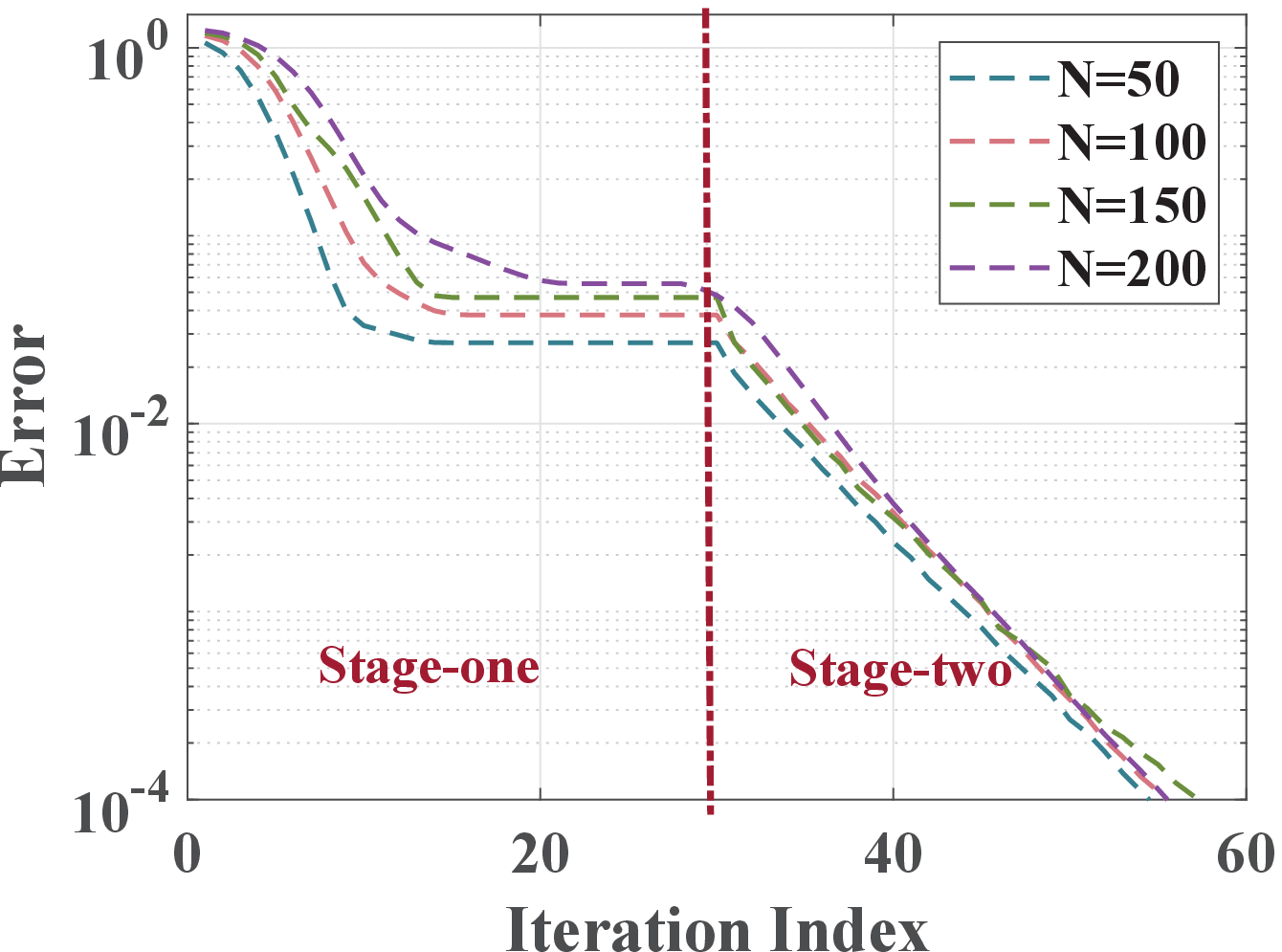}}
		\caption[Text excluding the matrix]{{  Convergence comparison on the orthogonal group and the sphere. In each trial, we generated the received data by $\{\bm{y_{i}}\}_{i=1}^{10^{5}}=\bm{D}^{*}\{\bm{x}_{i}^{*}\}_{i=1}^{10^{5}}$, where the dictionary $\bm{D}^{*}\in\mathbb{R}^{N\times N}$ is a random orthogonal matrix via QR decomposition of a random matrix with elements uniformly distributed in the interval $(0,1)$ in an i.i.d manner. The sparse codes $\{\bm{x}_{i}^{*}\}_{i=1}^{10^{5}}$ satisfy $\bm{x}_{i}^{*}\in\mathbb{R}^{N}\sim^{i.i.d}\mathcal{BG}(0.1),\forall i$. The initial point of the proposed scheme over the sphere is generated uniformly randomly over the sphere, and the initial point of the proposed scheme over the orthogonal group is generated via QR decomposition of a random matrix with elements following the normal distribution in an i.i.d manner. For the  optimization over  $\mathbb{O}(N)$, we have Error $=\sqrt{\underset{\bm{J}\in\mathcal{J}}{\text{min}}\frac{\Vert\hat{\bm{D}}-\bm{D}^{*}\bm{J}\Vert_{F}^{2}}{\Vert\bm{D}^{*}\bm{J}\Vert_{F}^{2}}}$,  where $\mathcal{J}$ is the set contains all the N-dimensional sign-permutation matrices. For the  optimization over $\mathbb{S}^{N-1}$, we have   Error  $=\sqrt{\underset{\bm{e} \in \mathcal{E}}{\text{min}}\frac{\Vert\hat{\bm{d}}-\bm{D}^*\bm{e}\Vert^2_F}{\Vert\bm{D}^*\bm{e}\Vert^2_F}}$,
				where $\mathcal{E}=\{\pm\bm{e}_n,n\in[N]\}$. }}\label{fig:convsimi}
	\end{figure}
	
	Considering the recovery  of one atom of  the orthogonal dictionary, the dictionary learning in the proposed scheme is specialized as follows:
	\begin{itemize}
		\item {\bf Stage One} solves the optimization problem 
		\begin{equation}\label{eq:sphq1}
		\begin{aligned}
		\hat{\mathscr{P}}_1:&\quad\underset{\bm{d} \in \mathbb{S}^{N-1}}{\text{minimize}}
		&& -\frac{1}{L}\sum^L_{i=1}\Vert\bm {d}^T\bm{y}_i\Vert^3_3 \\
		\end{aligned}
		\end{equation} 
		by GPM:
		\begin{equation}\label{eq:spsol1}
		\bm{d}^{(t_1+1)} = Polar(\nabla \frac{1}{L}\sum^L_{i=1}\Vert(\bm {d}^{(t_1)}) ^T\bm{y}_i\Vert^3_3),
		\end{equation}
		
		where the variable $\bm{d}$ is  constrained on a unit sphere as it is an estimate of one atom in the orthogonal dictionary.
		\item {\bf Stage Two} solves the optimization problem 
		\begin{equation}\label{eq:sphq2}
		\begin{aligned}
		\hat{\mathscr{P}}_2:&\quad\underset{\bm{d}}{\text{minimize}}
		& & \frac{1}{L}\sum^L_{i=1}{\|\bm {d}^T\bm{y}_i\|_1} \\
		&\textrm{subject to} \quad 
		& & \bm{r}^T\bm{d}+\bm{d}^T\bm{r}=2 \\
		\end{aligned}
		\end{equation}
		by RPG:
		\begin{equation}\label{eq:spsol2}
		\bm{d}^{(t_2+1)} =  \bm{d}^{(t_2)}- \tau^{(t_2)} \mathcal{P}_{\bm{r}^\perp}(\nabla \frac{1}{L}\sum^L_{i=1}{\|\bm (\bm{d}^{(t_2)})^T\bm{y}_i\|_1}),
		\end{equation}
	\end{itemize}
	where $\bm{r}$ is the result  from Stage One (\ref{eq:spsol1}), and  $ \mathcal{P}_{\bm{r}^\perp}(\cdot)$ is a projection onto the tangent space of $\bm{r}$ over the sphere. The final estimate is $\hat{\bm{d}} =\mathcal{P}_{\mathbb{S}^{N-1}}(\bm{d}^{(t_2)}) =\frac{\bm{d}^{(t_2)}}{\|\bm{d}^{(t_2)}\|}$, where $\bm{d}^{(t_2)}$ is the converged result from Stage Two.

	There is an intrinsic ambiguity in recovering one atom in the dictionary due to the bilinear nature of the sparse coding, namely the {\em sign ambiguity} \cite{sun2015complete1,sun2015complete2,bai2018subgradient}. Hence, we consider that one atom is exactly recovered if the estimate equals  that atom up to a sign ambiguity, i.e., $\hat{\bm{d}}$ is said to exactly recover one atom in the true dictionary if $\hat{\bm{d}} = \bm{d}^*$ with $\bm{d}^*\in\{\pm\bm{D}^*_{:,1},\pm\bm{D}^*_{:,2},\ldots,\pm\bm{D}^*_{:,N}\}$.
	In the following, we show that  with finite data samples, Stage One of the proposed dictionary learning can obtain a solution very close to one of the atoms in the true dictionary (up to a sign ambiguity), and Stage Two can successfully refine the solution to be exactly  the ground truth atom (up to a sign ambiguity).
	Without loss of generality, we  have the following assumptions in the remaining  theoretical analysis.
	\begin{itemize}
		\item  We assume  $\bm{D}^* = \bm{I}$ since the problem is invariant to rotations, i.e., orthogonal transformation has no impact on the analysis \cite{bai2018subgradient}. Hence, we have any of the true atoms $\bm{d}^*$ satisfying $\bm{d}^*\in \{\pm\bm{e}_1, \pm\bm{e}_2,\ldots, \pm\bm{e}_N \}$.
		\item We assume that elements in the sparse codes $\bm{x}^*_i \in \mathbb{R}^N,\forall i$ are i.i.d. Bernoulli-Gaussian, i.e., $\bm{x}^*_i  \sim^{i.i.d} \mathcal{BG}(\theta),\forall i$, which  is a reasonable model for generic sparse coefficients \cite{bai2018subgradient,sun2015complete1,sun2015complete2,zhai2020complete,gilboa2018efficient,zhai2019understanding,shen2020complete}.
	\end{itemize}
	
	\subsection{Approximate Recovery at Stage One}  
	To bring more insight, we first show the reason why non-convex Problem $\hat{\mathscr{P}}_1$ is tractable by the high-dimensional geometry for the instance of Problem $\hat{\mathscr{P}}_1$ illustrated in Fig. \ref{fig:lands}.
	The figure  shows that as the number of samples $L$ grows large,  Problem $\hat{\mathscr{P}}_1$ tends to have a benign geometry in the sense that it has no spurious local minimizers, and every local optimum  is very close to one of the target atoms up to a sign ambiguity. These geometric properties are the key for a first-order method to find an estimated global solution for a non-convex problem. In the following, we will show that the properties observed from  the toy example in Fig. \ref{fig:lands} hold true with mathematical proof.
	\begin{figure}[htbp]
		\centering
		\subfigure[]{\includegraphics[scale=0.32]{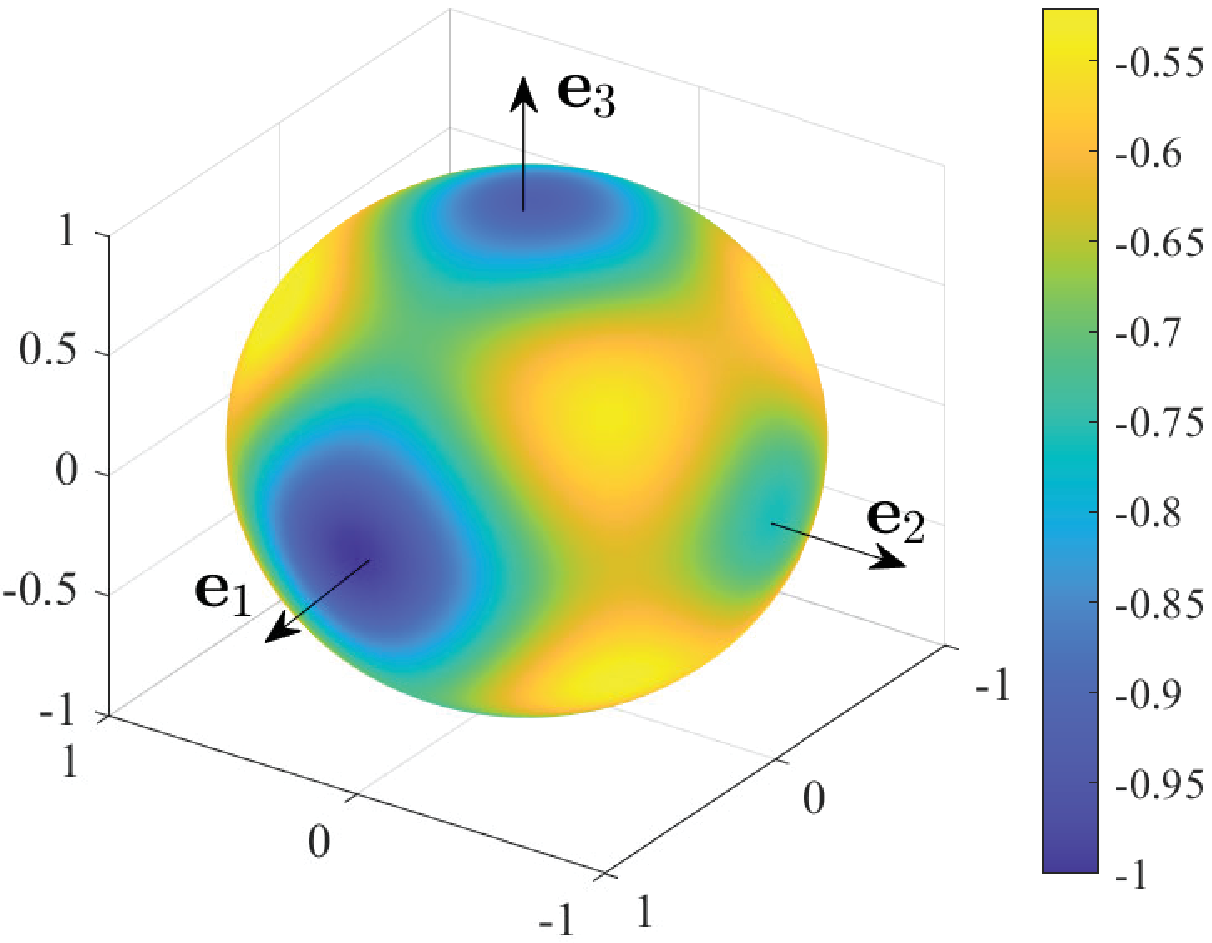}}
		\subfigure[]{\includegraphics[scale=0.32]{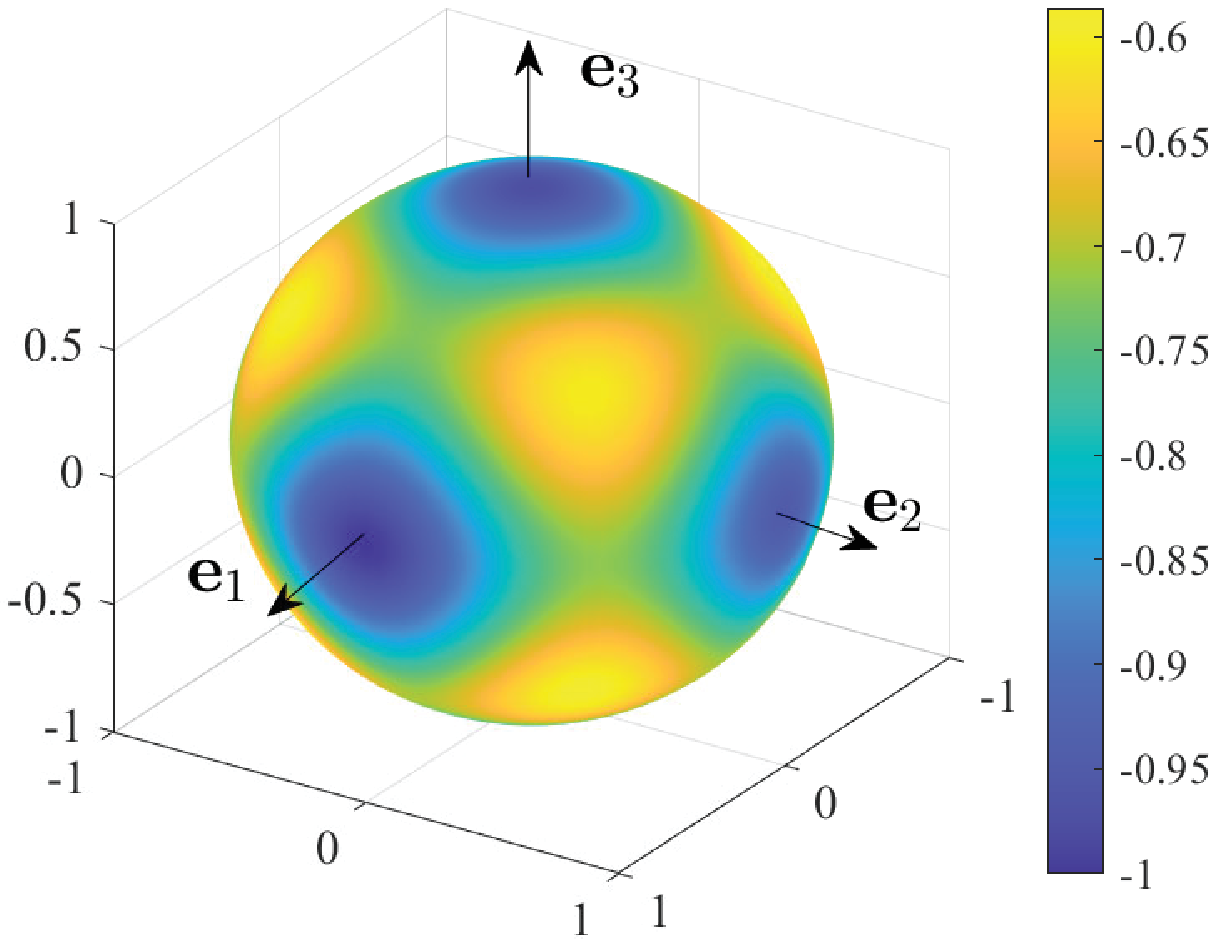}}
		\caption[Text excluding the matrix]{High-dimensional geometry of Problem $\hat{\mathscr{P}}_1$ with  $\bm{x}_i \in \mathbb{R}^{3}, \bm{x}_{i} \sim^{i.i.d} \mathcal{BG}(\theta)$ with $\theta =0.2$, $\bm{y}_i = \bm{D}^*\bm{x}^*_i, \forall i$,   $\bm{D}^*=\left[{\begin{array}{ccc} 1 & 0 &0 \\ 0 & 1&0\\0&0&1 \end{array}}\right]$, and $\bm{d}\in\mathbb{S}^{2}$ as the sample size grows from (a) $L=10^3$  to (b) $L=10^5$.}\label{fig:lands}
	\end{figure}

	\subsubsection{{  Statistical Analysis}}
	We first show that  Problem $\hat{\mathscr{P}}_1$ concentrates to the population problem, i.e., the objective in $\hat{\mathscr{P}}_1$ becomes  $\mathbb{E}[-\frac{1}{L}\sum^L_{i=1}\Vert\bm {d}^T\bm{y}_i\Vert^3_3]$, when the sample size is large enough by the following lemma.
	
	\begin{lemma}(Concentration Property) \label{thm:concent}
		Let $\bm{x}_i \in \mathbb{R}^{N}, \bm{x}_{i} \sim^{i.i.d} \mathcal{BG}(\theta)$ with $\theta \in (\frac{1}{N},\frac{1}{2}{  )}$, $\bm{D}^*=\bm{I}$,  $\bm{y}_i = \bm{D}^*\bm{x}^*_i, \forall i$, and $f(\bm {d})=-\frac{1}{L}\sum^L_{i=1}\Vert\bm {d}^T\bm{y}_i\Vert^3_3$. There exist positive constants $c_1$ and $C_1$, for any $\delta\in(0,c_{1}/(N\log(L)\log(NL)\theta^{\frac{1}{3}})^{3/2})$ and  $L\geq C_{1} \delta^{-2} N\theta\log (\frac{(N\theta }{\delta}) $, such that
		\begin{align*}
		&Pr\Big[\underset{\bm{d} \in \mathbb{S}^{N-1}}{\sup}\left\|f(\bm {d}) - \mathbb{E}[f(\bm {d})] \right\| \leq  \delta\Big]\geq 1-L^{-1}, \\
		&Pr\Big[\underset{\bm{d} \in \mathbb{S}^{N-1}}{\sup}\Vert\nabla_{grad}f(\bm{d})-\nabla_{grad}\mathbb{E}[f(\bm{d})]\Vert\leq \delta\Big]\geq 1-L^{-1}, 
		\end{align*}	
	\end{lemma}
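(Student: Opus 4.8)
The plan is to prove both uniform bounds by the standard two-ingredient recipe for an empirical process on a compact manifold: (i) pointwise (fixed-$\bm{d}$) concentration of the empirical average to its mean, and (ii) an $\epsilon$-net argument over $\mathbb{S}^{N-1}$ combined with a high-probability Lipschitz bound that transfers the net estimates to the supremum. Both the scalar objective $f$ and the vector-valued Riemannian gradient $\nabla_{grad}f$ are treated by the same scheme. Since $\bm{D}^*=\bm{I}$ gives $\bm{y}_i=\bm{x}_i$, each summand of $f$ is $|\bm{d}^T\bm{x}_i|^3$, where $\bm{d}^T\bm{x}_i=\sum_j d_j b_{ij}g_{ij}$ is a centered sub-Gaussian scalar of variance $\theta$ (using $\|\bm{d}\|=1$); its cube is therefore sub-Weibull of order $2/3$, with tail decaying like $\exp(-c(t/\theta^{3/2})^{2/3})$.

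For the pointwise step I would fix $\bm{d}$ and center the summands. Because $|\bm{d}^T\bm{x}_i|^3$ has no finite moment generating function, the key device is \emph{truncation}: condition on the high-probability events $\{\max_i|\bm{d}^T\bm{x}_i|\lesssim\sqrt{\theta\log L}\}$ and $\{\max_{i,j}|x_{ij}|\lesssim\sqrt{\log(NL)}\}$ (each failing with at most a negative power of $NL$ by sub-Gaussian maximal bounds), truncate at a level $B\asymp(\theta\log(NL))^{3/2}$, and apply Bernstein's inequality to the bounded centered variables. The variance of $|\bm{d}^T\bm{x}|^3$ is maximized over the sphere at the spiky directions $\pm\bm{e}_k$ and is there of order $\theta$, so the Bernstein bound gives $|f(\bm{d})-\mathbb{E}[f(\bm{d})]|\le\delta/2$ with probability at least $1-\exp(-cL\delta^2/\theta)$, provided $\delta$ is small enough that the sub-Gaussian (rather than heavy-tailed) regime of Bernstein is in force. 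Since the admissible deviation is limited by the ratio of the variance to the boundedness level $B$, and since $B$ is controlled by the cubic maxima above, this smallness requirement is exactly what produces the stated restriction $\delta\lesssim(N\log L\,\log(NL)\,\theta^{1/3})^{-3/2}$, whose $3/2$ exponent and $\log L$, $\log(NL)$ factors trace back to those maxima.

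I would then discretize: take an $\epsilon$-net $\mathcal{N}_\epsilon$ of $\mathbb{S}^{N-1}$ with $|\mathcal{N}_\epsilon|\le(3/\epsilon)^N$, union-bound the pointwise estimate over $\mathcal{N}_\epsilon$, and control the oscillation of $f$ between net points by a Lipschitz bound. The Lipschitz constant of $f$ (and, for the second claim, of $\nabla_{grad}f$) is bounded with high probability in terms of $\max_i\|\bm{x}_i\|$ and $\max_i|\bm{d}^T\bm{x}_i|$. Choosing $\epsilon$ of order $\delta$ times the reciprocal Lipschitz constant, the net failure probability is at most $(3/\epsilon)^N\exp(-cL\delta^2/\theta)$, and forcing this below $L^{-1}$ is precisely the sample-complexity condition $L\ge C_1\delta^{-2}N\theta\log(N\theta/\delta)$: the factor $N$ is the exponent of the covering number, the factor $\theta$ is the per-sample variance proxy, and $\log(N\theta/\delta)$ comes from $\log(1/\epsilon)$.

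The gradient claim follows the same three steps, with two extra ingredients. The gradient summand $-3(\bm{d}^T\bm{x}_i)|\bm{d}^T\bm{x}_i|\bm{x}_i$ is vector-valued and degree three in $\bm{x}_i$, so I would replace scalar Bernstein by a vector Bernstein inequality (or argue coordinatewise with a union bound over the $N$ coordinates), with relevant variance proxy $\mathbb{E}[|\bm{d}^T\bm{x}|^4\|\bm{x}\|^2]$, and I would show the Riemannian projection $\bm{I}-\bm{d}\bm{d}^T$ is Lipschitz in $\bm{d}$ so that its composition with the Euclidean gradient stays Lipschitz on the net. I expect the main obstacle to be exactly this combination of the missing moment generating function with the demand for uniformity: the truncation level, the residual-tail estimate, and the random Lipschitz constants must all be threaded together so that the final restriction on $\delta$ and the sample complexity carry the precise $\theta$- and $N$-dependence claimed, rather than looser powers; in particular, keeping the vector gradient's variance accounting compatible with the same order $N\theta$ (possibly via a chaining argument over the sphere rather than a crude net) is the delicate part at the heart of the proof.
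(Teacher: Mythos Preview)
Your proposal is essentially correct and follows the same underlying strategy as the paper: truncate the sub-Weibull summands, apply a Bernstein-type pointwise bound, and pass to the supremum via a net/Lipschitz argument on $\mathbb{S}^{N-1}$. The only difference is packaging: the paper does not carry out the net and Bernstein steps by hand but instead verifies the hypotheses of an off-the-shelf ``concentration on the sphere'' lemma (Corollary~F.2 of \cite{qu2019geometric}), namely boundedness and Lipschitzness of $\mathbb{E}[g_{\bm d}(\bm x)]$, and boundedness, second-moment, and Lipschitz estimates for the \emph{entrywise}-truncated process $g_{\bm d}(\bar{\bm x})$ with truncation level $B=2\sqrt{\log(NL)}$; plugging the resulting constants $B_f,L_f,R_1,R_2,\bar L_f$ into that corollary yields the stated $\delta$-range and sample complexity, and the Riemannian gradient is handled by the identical checklist applied to $-3(\bm I-\bm d\bm d^T)(\bm d^T\bm x)|\bm d^T\bm x|\bm x$. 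Your from-scratch plan would reproduce the proof of that corollary; the one place to align with the paper is to truncate the entries $x_{i,j}$ (not the inner product $\bm d^T\bm x_i$), since the Lipschitz-in-$\bm d$ bound on the truncated process must be uniform in $\bm d$ and is cleanest when the truncation itself is $\bm d$-independent.
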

	where $\nabla_{grad}f(\bm{d})$ is the Riemannian gradient of $f(\bm{d})$ at $\bm{d}$.
	\begin{proof}
		See Appendix \ref{proof:concent}.
	\end{proof}

	\subsubsection{{  Landscape Analysis of the $\ell_3$-norm Objective over the Sphere}} 
	Lemma \ref{thm:concent} inspires us to first investigate the geometry for the population problem since optimizing a deterministic problem is easier to analyze. 	 The stationary points of the population problem are characterized in the following Lemma \ref{thm:station}.
	
	\begin{lemma}(Isolated Stationary Points of the Population Problem) \label{thm:station}
		Let $\bm{x}_i \in \mathbb{R}^{N}\sim^{i.i.d} \mathcal{BG}(\theta)$ with $\theta \in (\frac{1}{N},\frac{1}{2})$, $\bm{y}_i = \bm{D}^*\bm{x}^*_i, \forall i$, with $\bm{D}^* = \bm{I}$, and $f(\bm {d})=-\frac{1}{L}\sum^L_{i=1}\Vert\bm {d}^T\bm{y}_i\Vert^3_3$. We have
		\begin{equation}\label{eq:0grad}
		\nabla_{grad}\mathbb{E}[f(\bm{d})]=\bm{0}
		\end{equation} 
		if and only if  $\bm{d} \in\mathcal{S}^\prime$ with 
		\begin{equation}\label{eq:sta}
		\mathcal{S}^\prime=\Big\{\frac{1}{\sqrt{k}}\bm{d}:\bm{d}\in\{-1,0,1\}^N,\Vert\bm{d}\Vert_0=k,k\in[N]\Big\}.
		\end{equation}
		Specifically, $k=1$ corresponds to the $2N$ global optimum $\bm{d}^*\in \{\pm\bm{e}_1, \pm\bm{e}_2,\ldots, \pm\bm{e}_N \}$.
	\end{lemma}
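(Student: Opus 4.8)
The plan is to reduce the population objective to a deterministic function of the squared magnitudes $\{d_j^2\}$, translate the Riemannian stationarity condition into a statement about a per-coordinate functional being constant on the support, and then exploit a cancellation that collapses a seemingly intractable comparison of Bernoulli expectations into a one-line monotonicity argument.

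First I would exploit the Bernoulli--Gaussian structure. Since $\bm{D}^*=\bm{I}$ we have $\bm{y}_i=\bm{x}_i^*$, so $\bm{d}^T\bm{y}_i=\sum_j d_j b_{ij} g_{ij}$ is, conditional on the Bernoulli support, a zero-mean Gaussian with variance $\sum_{j:\,b_{ij}=1} d_j^2$. Taking expectations and using $\mathbb{E}[|w|^3]=C_3$ for $w\sim\mathcal{N}(0,1)$ yields
\[
\mathbb{E}[f(\bm{d})] = -\,C_3\,\mathbb{E}_{\bm{b}}\Big[\big(\textstyle\sum_j b_j d_j^2\big)^{3/2}\Big] =: -\,C_3\, h(\bm{d}),
\]
where $\bm{b}$ has i.i.d.\ $\mathrm{Ber}(\theta)$ entries. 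Differentiating gives $[\nabla h(\bm{d})]_k = 3\,d_k\,\phi_k(\bm{d})$ with $\phi_k(\bm{d}):=\mathbb{E}_{\bm{b}}[b_k(\sum_j b_j d_j^2)^{1/2}]$. Because the Riemannian gradient on the sphere is the tangential projection $(\bm{I}-\bm{d}\bm{d}^T)\nabla$, condition (\ref{eq:0grad}) holds if and only if $\nabla h(\bm{d})\parallel\bm{d}$, i.e.\ $\phi_k(\bm{d})$ takes a common value across all \emph{active} coordinates $k$ (those with $d_k\neq 0$), the constraint being vacuous on inactive coordinates. Since $\mathcal{S}'$ in (\ref{eq:sta}) is a finite set, verifying it is exactly the stationary set also gives the isolation claimed in the title.

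The crux is to show this constancy forces all active $|d_k|$ equal. I would fix two active coordinates $k,\ell$, set $R:=\sum_{j\neq k,\ell} b_j d_j^2$, and use $\phi_k=\theta\,\mathbb{E}_{\bm{b}_{-k}}[(d_k^2+\sum_{j\neq k} b_j d_j^2)^{1/2}]$. Conditioning further on $b_\ell$, the $b_\ell=1$ branch contributes the symmetric quantity $\theta^2\,\mathbb{E}_R[(d_k^2+d_\ell^2+R)^{1/2}]$ to both $\phi_k$ and $\phi_\ell$ and hence cancels in the difference, leaving
\[
\phi_k(\bm{d}) - \phi_\ell(\bm{d}) = \theta(1-\theta)\,\mathbb{E}_R\!\Big[(d_k^2+R)^{1/2}-(d_\ell^2+R)^{1/2}\Big].
\]
Since $x\mapsto(x+R)^{1/2}$ is strictly increasing and $R\ge 0$, and $\theta\in(0,1)$, the right-hand side is nonzero whenever $d_k^2\neq d_\ell^2$. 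Thus stationarity is equivalent to $d_k^2=d_\ell^2$ for all active pairs; together with $\|\bm{d}\|=1$ this forces $|d_k|=1/\sqrt{k}$ on a support of size $k$, which is precisely $\mathcal{S}'$. The conjugate ``if'' direction is immediate: equal active magnitudes make $\phi_k$ invariant under permutations of the support, so all active $\phi_k$ agree and $\nabla h\parallel\bm{d}$.

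Finally, to single out $k=1$ as global optimum I would evaluate $h$ on $\mathcal{S}'$. On a point with $k$ equal-magnitude entries, $\sum_j b_j d_j^2 = m/k$ with $m\sim\mathrm{Bin}(k,\theta)$, so $h=k^{-3/2}\,\mathbb{E}[m^{3/2}]$. The pointwise bound $m^{3/2}\le\sqrt{k}\,m$ (valid since $m\le k$) gives $\mathbb{E}[m^{3/2}]\le\sqrt{k}\,\mathbb{E}[m]=k^{3/2}\theta$, hence $h\le\theta$, with equality only when $m\in\{0,k\}$ almost surely, i.e.\ only for $k=1$. Therefore $h$ is strictly maximized, and $\mathbb{E}[f]$ strictly minimized, at $\bm{d}^*\in\{\pm\bm{e}_n\}_{n=1}^N$, giving exactly the $2N$ global minimizers. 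I expect the main obstacle to be the second display: securing the exact cancellation of the $b_\ell=1$ contribution is what reduces the comparison of two multivariate Bernoulli expectations to strict monotonicity of $\sqrt{\cdot}$; everything else is conditioning and a convexity-free pointwise bound.
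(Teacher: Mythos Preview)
Your proof is correct and follows essentially the same route as the paper. Both arguments compute $\mathbb{E}[f(\bm d)]=-\gamma_3\,\mathbb{E}_\Omega[\|\bm d_\Omega\|^3]$, reduce Riemannian stationarity to equality of the per-coordinate quantities $\phi_k=\theta\,\mathbb{E}_{\Omega\setminus\{k\}}\big[(d_k^2+\|\bm d_{\Omega\setminus\{k\}}\|^2)^{1/2}\big]$ on the support, and then use exactly your conditioning-on-$b_\ell$ cancellation to obtain a $\theta(1-\theta)$-weighted difference that is strictly monotone in $d_k^2$; the paper simply packages this last step inside Lemma~\ref{thm:Benipop} (writing $(d_k^2+R)^{1/2}-(d_\ell^2+R)^{1/2}$ as an integral rather than invoking monotonicity directly), and its global-optimality bound $\mathbb{E}_\Omega[\|\bm d_\Omega\|^3]\le\mathbb{E}_\Omega[\|\bm d_\Omega\|^2]=\theta$ is the same pointwise inequality $\|\bm d_\Omega\|\le 1$ underlying your $m^{3/2}\le\sqrt{k}\,m$.
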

	\begin{proof}
		See Appendix \ref{proof:station}.
	\end{proof}
	
	Lemma \ref{thm:station} shows that at a population level, the stationary points are isolated and there are $2N$ equivalent global minimizers that have a symmetric property.  These results motivate us to partition the  whole $\mathcal{S}^{N-1}$ into $2N$ symmetric regions, and consider $2N$ (disjoint) subsets of each region \cite{bai2018subgradient,qu2019nonconvex}. We define these subsets as {\em good subsets}:
	\begin{defn}(Good  Subsets)\label{def:goodset}
		For any ${\zeta}\in(0,\infty)$ and $n\in[N]$, the good subsets are defined as
		\begin{equation}
		\begin{aligned}
		\mathcal{S}^{(n+)}_{\zeta} &\doteq \Big\{\bm{d}\in\mathbb{S}^{N-1}:d_n>0,\frac{d^2_n}{\Vert\bm{d}_{-n}\Vert^2_{\infty}}\geq 1+{\zeta}\Big\}, \\
		\mathcal{S}^{(n-)}_{\zeta} &\doteq \Big\{\bm{d}\in\mathbb{S}^{N-1}:d_n<0,\frac{d^2_n}{\Vert\bm{d}_{-n}\Vert^2_{\infty}}\geq 1+{\zeta}\Big\}. \\
		\end{aligned}
		\end{equation}
	\end{defn}
	Fig. \ref{fig:expectaps} shows an example of subset $\mathcal{S}_0^{(3+)}$ for $N=3$.
	\begin{figure}[htpb]
		\centering
		\includegraphics[width=0.4\linewidth]{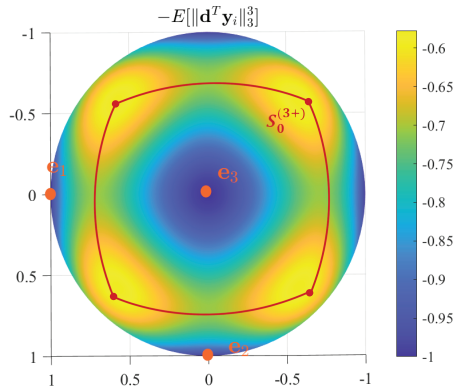}
		\caption{Top view of an example subset  $\mathcal{S}_0^{(3+)}$ for $N=3$ of the partition in Definition \ref{def:goodset}.}
		\label{fig:expectaps}
	\end{figure}
	The above partition of the sphere ensures that, for the population problem, there are no stationary points other than the $2N$ global minimizers in the good subsets, and  there is only one optimal point  within  each  $\{\mathcal{S}_{\zeta}^{(n\pm)}\quad \forall n\in [N]\}$. Furthermore,  the points in $\mathcal{S}_{\zeta}^{(n+)}$ are closer to $\bm{e}_n$ than all the other $2N-1$ optimal points. If the algorithm initialized in
	this region  finally converges to an approximation of $\bm{e}_n$ rather than the other optimal  solutions, then  the
	results will automatically carry over to all the other $2N-1$ subsets by the symmetry of $\mathbb{S}^{N-1}$.  To formalize the above insights, we first characterize the  benign geometry for the population problem   in the following lemma.
	
	\begin{lemma}(Benign Global Geometry for the Population Problem) \label{thm:Benipop}
		Under the conditions of Lemma \ref{thm:station}, with any ${\zeta_{0}}\in (0,1)$, the below statement holds simultaneously for
		all the $2N$ subsets $\{\mathcal{S}^{(n\pm)}_{{\zeta_{0}}},\forall n\in[N]\}$ (stated only for  $\mathcal{S}^{(N+)}_{{\zeta_{0}}}$):
		For all $\bm{d}\in\mathcal{S}^{(N+)}_{{\zeta_{0}}}$ and all $n'\in[N], n'\neq N$ with $d_{n'}\neq0$, we have
		\begin{equation}\label{eq:REGpop}
		\begin{aligned}
		&\langle -\nabla_{grad}\mathbb{E}[f(\bm{d})],\frac{1}{d_{N}}\bm{e}_{N}-\frac{1}{d_n'}\bm{e}_{n'} \rangle\geq \frac{3\sqrt{2\pi}}{N}\theta(1-\theta)\frac{{{\zeta_{0}}}}{1+{{\zeta_{0}}}}.
		\end{aligned}
		\end{equation}
	\end{lemma}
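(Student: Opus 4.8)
The plan is to turn the Riemannian gradient inner product into an ordinary Euclidean one, compute the population gradient in closed form under the Bernoulli--Gaussian model, and then extract the lower bound from the defining inequality of $\mathcal{S}^{(N+)}_{\zeta_0}$ via a symmetric coupling that produces a single monotone square-root gap.

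First I would notice that the escape direction $\bm{v}\doteq\frac{1}{d_N}\bm{e}_N-\frac{1}{d_{n'}}\bm{e}_{n'}$ is tangent to the sphere at $\bm{d}$, since $\langle\bm{d},\bm{v}\rangle=\frac{d_N}{d_N}-\frac{d_{n'}}{d_{n'}}=0$. Because the Riemannian gradient is merely the Euclidean gradient projected onto $T_{\bm{d}}\mathbb{S}^{N-1}$ and $\bm{v}$ already lies in that space, the projection is transparent to the inner product. Writing $g(\bm{d})\doteq\mathbb{E}[\,|\bm{d}^T\bm{y}|^3\,]$ so that $\mathbb{E}[f(\bm{d})]=-g(\bm{d})$ (with $\bm{y}\sim\mathcal{BG}(\theta)$, as $\bm{D}^*=\bm{I}$), this reduces the claim to $\langle-\nabla_{grad}\mathbb{E}[f(\bm{d})],\bm{v}\rangle=\langle\nabla g(\bm{d}),\bm{v}\rangle$, a statement about the Euclidean gradient of a scalar third-moment function.

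Next I would compute $\nabla g(\bm{d})=3\,\mathbb{E}[\,|s|\,s\,\bm{y}\,]$ with $s\doteq\bm{d}^T\bm{y}$, so $\langle\nabla g(\bm{d}),\bm{v}\rangle=3\,\mathbb{E}[\,|s|\,s\,(y_N/d_N-y_{n'}/d_{n'})\,]$. The core computation is the mixed moment $\mathbb{E}[\,|s|\,s\,y_j\,]$. Conditioning on the random support $S=\{k:b_k=1\}$, the entry $y_j$ vanishes unless $j\in S$, and conditionally on $S$ the pair $(s,y_j)$ is jointly Gaussian with $\mathrm{Cov}(s,y_j)=d_j$ and $\mathrm{Var}(s)=\|\bm{d}_S\|^2$. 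A regression/Stein step then collapses the moment onto the third absolute Gaussian moment: for $j\in S$, $\mathbb{E}[\,|s|\,s\,y_j\mid S]=\tfrac{d_j}{\|\bm{d}_S\|^2}\,\mathbb{E}[|s|^3\mid S]=2\sqrt{2/\pi}\,d_j\,\|\bm{d}_S\|$. Averaging over the Bernoulli support gives $\mathbb{E}[\,|s|\,s\,y_j\,]=\tfrac{2\sqrt{2}}{\sqrt{\pi}}\,\theta\,d_j\,\beta_j$, where $\beta_j\doteq\mathbb{E}_{S'}\big[\sqrt{d_j^2+\textstyle\sum_{k\in S'}d_k^2}\big]$ and $S'$ is the random support on the remaining coordinates. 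Hence $\langle\nabla g(\bm{d}),\bm{v}\rangle=\tfrac{6\sqrt{2}}{\sqrt{\pi}}\,\theta\,(\beta_N-\beta_{n'})$, and it remains to lower bound $\beta_N-\beta_{n'}$.

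Finally I would compare the two quantities by coupling them on a common support $S$ over $[N]\setminus\{N,n'\}$ and splitting off the Bernoulli indicators for coordinates $N$ and $n'$; the term in which both indicators equal one is identical for $\beta_N$ and $\beta_{n'}$ and cancels, leaving the clean identity $\beta_N-\beta_{n'}=(1-\theta)\,\mathbb{E}_S[\sqrt{d_N^2+A}-\sqrt{d_{n'}^2+A}]$ with $A\doteq\sum_{k\in S}d_k^2$. I would then use $\sqrt{b}-\sqrt{a}\ge(b-a)/(2\sqrt{b})$ together with $\sqrt{d_N^2+A}\le\|\bm{d}\|=1$ and the two consequences of $\bm{d}\in\mathcal{S}^{(N+)}_{\zeta_0}$ --- namely $d_N^2-d_{n'}^2\ge\frac{\zeta_0}{1+\zeta_0}d_N^2$ and, since $d_N$ dominates every coordinate, $d_N^2\ge\frac1N$ --- to obtain $\beta_N-\beta_{n'}\ge\frac{(1-\theta)\zeta_0}{2N(1+\zeta_0)}$, and therefore $\langle-\nabla_{grad}\mathbb{E}[f(\bm{d})],\bm{v}\rangle\ge\tfrac{3\sqrt{2}}{\sqrt{\pi}}\frac{\theta(1-\theta)}{N}\frac{\zeta_0}{1+\zeta_0}$, which is the asserted bound (the precise numerical constant tracing back to $\mathbb{E}|Z|^3=2\sqrt{2/\pi}$). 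The main obstacle I anticipate is the closed-form evaluation of $\mathbb{E}[|s|\,s\,y_j]$: the Bernoulli--Gaussian mixture must be handled by first conditioning on the support to recover exact Gaussianity, then applying the regression identity, while carrying the dependence on $\|\bm{d}_S\|$ all the way through; the subsequent cancellation in $\beta_N-\beta_{n'}$ is the elegant step that converts a messy difference of expectations into a single monotone square-root gap and makes the good-subset geometry directly usable.
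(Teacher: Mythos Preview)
Your proposal is correct and follows essentially the same route as the paper: both drop the tangent projection, reduce the inner product to $3\gamma_3\theta(\beta_N-\beta_{n'})$ with $\beta_j=\mathbb{E}_{S'}\big[\sqrt{d_j^2+\sum_{k\in S'}d_k^2}\,\big]$, cancel the ``other index in the support'' term to obtain $\beta_N-\beta_{n'}=(1-\theta)\,\mathbb{E}_S\big[\sqrt{d_N^2+A}-\sqrt{d_{n'}^2+A}\,\big]$, and then invoke $d_N^2-d_{n'}^2\ge\frac{\zeta_0}{1+\zeta_0}d_N^2$ together with $d_N^2\ge 1/N$. The only cosmetic differences are that the paper gets the gradient by differentiating the closed form $\gamma_3\mathbb{E}_\Omega[\|\bm d_\Omega\|^3]$ (rather than via your Stein/regression step) and writes the square-root gap as $\int_{d_{n'}^2}^{d_N^2}\tfrac12(t+A)^{-1/2}\,dt$ before bounding the integrand by $\tfrac12$; your final constant $3\sqrt{2/\pi}$ coincides with the paper's own proof (the $3\sqrt{2\pi}$ in the lemma display is a typo for $\tfrac{3\gamma_3}{2}=3\sqrt{2/\pi}$).
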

	\begin{proof}
		See Appendix \ref{proof:Benipop}.
	\end{proof}
	Lemma \ref{thm:Benipop} shows that for all points in $\mathcal{S}_{\zeta_0}^{(N+)}$, the negative Riemannian gradient of the population objective points towards the target solution $\bm{e}_N$ coordinate-wisely. Leveraging the concentration property in
	Lemma \ref{thm:concent}, we further show in the following Lemma \ref{thm:Beni} a similar result on the {\em benign global geometry} of the empirical Problem $\hat{\mathscr{P}}_1$.
	
	\begin{lemma}(Benign Global Geometry for the Empirical Problem) \label{thm:Beni}
		Under the conditions of Lemma \ref{thm:concent}, there exist positive constants $c_1$ and $C_1$ for any $\delta\in(0,c_{1}/(N\log(L)\log(NL)\theta^{\frac{1}{3}})^{3/2})$ and $\zeta_0\in(0,1)$. When $L\geq C_{1} \delta^{-2} N\theta\log (\frac{(N\theta}{\delta})$, the below statement holds simultaneously with a probability of at least $1-L^{-1}$ for all the $2N$ subsets $\{\mathcal{S}^{(n\pm)}_{\zeta_0},\forall n\in[N]\}$ (stated only for  $\mathcal{S}^{(N+)}_{\zeta_0}$):
		For all $\bm{d}\in\mathcal{S}^{(N+)}_{\zeta_0}$ and all $n'\in[N]$, $n'\neq N$ with $d_{n'}\neq0$, we have
		\begin{equation}\label{eq:REG}
		\begin{aligned}
		&\langle -\nabla_{grad}f(\bm{d}),\frac{1}{d_{N}}\bm{e}_{N}-\frac{1}{d_n'}\bm{e}_{n'} \rangle\\
		&\geq \frac{3\sqrt{2\pi}}{N}\theta(1-\theta)\frac{{\zeta_{0}}}{1+{\zeta_{0}}}-\underbrace{\delta\Vert\frac{1}{d_{n'}}\bm{e}_{n'}-\frac{1}{d_N}\bm{e}_N \Vert}_{\text{ trade-off term}}.
		\end{aligned}
		\end{equation}		
	\end{lemma}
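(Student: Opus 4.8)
The plan is to transfer the benign population geometry established in Lemma \ref{thm:Benipop} to the empirical objective by controlling the gap between the empirical and population Riemannian gradients through the uniform concentration bound of Lemma \ref{thm:concent}. Throughout, I abbreviate the coordinate-wise test direction as $\bm{v}\doteq\frac{1}{d_N}\bm{e}_N-\frac{1}{d_{n'}}\bm{e}_{n'}$. A useful preliminary observation is that $\bm{v}$ is tangent to the sphere at $\bm{d}$, since $\langle\bm{d},\bm{v}\rangle=\frac{d_N}{d_N}-\frac{d_{n'}}{d_{n'}}=0$; consequently the pairing $\langle\nabla_{grad}f(\bm{d}),\bm{v}\rangle$ is well posed and the Riemannian-gradient deviation in Lemma \ref{thm:concent} can be applied directly inside the inner product.

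The central step is the additive splitting
\[
\langle-\nabla_{grad}f(\bm{d}),\bm{v}\rangle=\langle-\nabla_{grad}\mathbb{E}[f(\bm{d})],\bm{v}\rangle+\langle\nabla_{grad}\mathbb{E}[f(\bm{d})]-\nabla_{grad}f(\bm{d}),\bm{v}\rangle,
\]
which isolates a deterministic population contribution and a stochastic fluctuation. For the first term I would invoke Lemma \ref{thm:Benipop} verbatim, lower-bounding it by $\frac{3\sqrt{2\pi}}{N}\theta(1-\theta)\frac{\zeta_0}{1+\zeta_0}$ uniformly over all $\bm{d}\in\mathcal{S}^{(N+)}_{\zeta_0}$ and all admissible $n'$, since the hypotheses on $\theta$ and $\zeta_0$ are identical to those assumed here.

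For the fluctuation term, Cauchy--Schwarz gives
\[
\langle\nabla_{grad}\mathbb{E}[f(\bm{d})]-\nabla_{grad}f(\bm{d}),\bm{v}\rangle\geq-\big\|\nabla_{grad}f(\bm{d})-\nabla_{grad}\mathbb{E}[f(\bm{d})]\big\|\,\|\bm{v}\|.
\]
Under the stated ranges for $\delta$ and $L$, which are precisely those of Lemma \ref{thm:concent}, the uniform event $\{\sup_{\bm{d}\in\mathbb{S}^{N-1}}\|\nabla_{grad}f(\bm{d})-\nabla_{grad}\mathbb{E}[f(\bm{d})]\|\leq\delta\}$ holds with probability at least $1-L^{-1}$. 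Because this is a \emph{supremum} bound over the entire sphere, on that single good event the deviation estimate holds simultaneously at every point of every one of the $2N$ good subsets and for every $n'$; no additional union bound over subsets or over $n'$ is required, which is exactly what delivers the ``holds simultaneously'' clause of the statement. Combining the two bounds on this event yields $\langle-\nabla_{grad}f(\bm{d}),\bm{v}\rangle\geq\frac{3\sqrt{2\pi}}{N}\theta(1-\theta)\frac{\zeta_0}{1+\zeta_0}-\delta\|\bm{v}\|$, which is exactly (\ref{eq:REG}); the symmetry of $\mathbb{S}^{N-1}$ then carries the bound from $\mathcal{S}^{(N+)}_{\zeta_0}$ to all $2N$ subsets.

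The argument is essentially a triangle-inequality decomposition, so the substantive content has already been supplied upstream: the main obstacle is really the uniform (rather than pointwise) gradient concentration of Lemma \ref{thm:concent}, which is the ingredient that lets a single high-probability event cover the whole sphere at once. Given that lemma and Lemma \ref{thm:Benipop}, the only remaining work is the routine bookkeeping of matching the $\delta$- and $L$-ranges across the two lemmas and checking the elementary tangency of $\bm{v}$.
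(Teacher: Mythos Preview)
Your proposal is correct and follows essentially the same route as the paper: split the empirical inner product into the population term plus a fluctuation, lower-bound the population term via Lemma~\ref{thm:Benipop}, and control the fluctuation by Cauchy--Schwarz together with the uniform Riemannian-gradient concentration of Lemma~\ref{thm:concent}. Your write-up is in fact more careful than the paper's (you explicitly note the tangency of $\bm{v}$ and explain why no union bound over subsets or indices $n'$ is needed), but the underlying argument is identical.
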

	\begin{proof}
		See Appendix \ref{proof:Beni}.
	\end{proof}
	\begin{remark}(Trade-off between Sample Complexity and  Recovery Accuracy)
		From Lemma \ref{thm:Beni}, we know that with $L\to\infty$, $\delta\to 0$ and for all points in $\mathcal{S}_{\zeta_{0}}^{(N+)}$, the negative Riemannian gradient points towards the target solution $\bm{e}_N$ coordinate-wisely. While with finite $L$, this benign geometry holds only when $\frac{3\sqrt{2\pi}}{N}\theta(1+\theta)\frac{{\zeta_{0}}}{1+{\zeta_{0}}}>\delta\Vert\frac{1}{d_{n'}}\bm{e}_{n'}-\frac{1}{d_N}\bm{e}_N \Vert$ with high probability. However, when $L$ is sufficiently large, the benign geometry can  hold in a large fraction of  $\mathcal{S}_{\zeta_{0}}^{(N+)}$, except an area that is very close to the target solution ($\Vert\frac{1}{d_{n'}}\bm{e}_{n'}-\frac{1}{d_N}\bm{e}_N \Vert$ is very large).  This means that with a sufficiently large  number of samples, the possible stationary points of Problem $\hat{\mathscr{P}}_1$ fall in the region that is very close to the target with high probability. 
	\end{remark}
	\subsubsection{{  Convergence Analysis of the GPM}}
	Then, we will show the convergence properties of GPM in the following lemma.
	\begin{lemma}(Convergence of GPM to the  Stationary Point in the Initialized Good Set)\label{lem:conver}
		Let $\bm{x}_i \in \mathbb{R}^{N}, \bm{x}_{i} \sim^{i.i.d} \mathcal{BG}(\theta)$ with $\theta \in (\frac{1}{N},\frac{1}{12}\sqrt{\frac{\pi}{2}})$, $\bm{D}^*=\bm{I}$,  $\bm{y}_i = \bm{D}^*\bm{x}^*_i, \forall i$, and $f(\bm {d})=-\frac{1}{L}\sum^L_{i=1}\Vert\bm {d}^T\bm{y}_i\Vert^3_3$. There exist some positive constants $c_2$ and $C_2$, for any $\delta\in(0,c_{2}/(N\log(L)\log(NL)\theta^{\frac{1}{3}})^{3/2})$ and ${\zeta_{0}}\in(0,1)$.  Whenever $L\geq C_{2} \delta^{-2} N\theta\log (\frac{N\theta}{\delta})$, the below statement holds simultaneously with high probability for all the $2N$ subsets $\{\mathcal{S}^{(n\pm)}_{\zeta_0},\forall n\in[N]\}$ (stated only for  $\mathcal{S}^{(N+)}_{\zeta_0}$): When $\bm{d}^{(0)}\in \mathcal{S}^{(N+)}_{\zeta_{0}}$, we have the iterates $\bm{d}^{(t_1)}$, $t_1=1,2,\ldots$, generated by the GPM in (\ref{eq:spsol1}) stay in $\mathcal{S}^{(N+)}_{\zeta_{0}}$ and 
		\begin{equation}
		\nabla_{grad}f(\bm{r})=\bm{0},
		\end{equation}
		where $\bm{r}=\underset{t_1\to\infty}{\lim}\bm{d}^{(t_1)}$.
	\end{lemma}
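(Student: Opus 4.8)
The plan is to combine three ingredients: (i) a monotone descent property of the GPM, (ii) invariance of the good set $\mathcal{S}^{(N+)}_{\zeta_0}$ under one GPM step, and (iii) a compactness-and-isolation argument upgrading convergence of the objective to convergence of the iterates. Write $\bm{g}^{(t_1)}\doteq-\nabla f(\bm{d}^{(t_1)})$; since the polar factor of a nonzero vector is its normalization, the update (\ref{eq:spsol1}) is $\bm{d}^{(t_1+1)}=\bm{g}^{(t_1)}/\|\bm{g}^{(t_1)}\|$. For descent, concavity of $f$ gives the tangent bound $f(\bm{s})\le f(\bm{d}^{(t_1)})+\langle\nabla f(\bm{d}^{(t_1)}),\bm{s}-\bm{d}^{(t_1)}\rangle$ on $\mathbb{S}^{N-1}$; as the GPM step maximizes $\langle\bm{s},\bm{g}^{(t_1)}\rangle$ and $\bm{d}^{(t_1)}$ is feasible, $\langle\nabla f(\bm{d}^{(t_1)}),\bm{d}^{(t_1+1)}-\bm{d}^{(t_1)}\rangle\le0$, so $f(\bm{d}^{(t_1+1)})\le f(\bm{d}^{(t_1)})$. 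Since $f$ is continuous on the compact sphere, $\{f(\bm{d}^{(t_1)})\}$ converges, and the per-step gap $\mathrm{gap}^{(t_1)}\doteq\langle\nabla f(\bm{d}^{(t_1)}),\bm{d}^{(t_1)}\rangle+\|\bm{g}^{(t_1)}\|\ge0$ satisfies $\sum_{t_1}\mathrm{gap}^{(t_1)}<\infty$, hence $\mathrm{gap}^{(t_1)}\to0$.

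The hard part will be invariance: proving that $\bm{d}^{(t_1)}\in\mathcal{S}^{(N+)}_{\zeta_0}$ forces $g^{(t_1)}_N>0$ and $(g^{(t_1)}_N)^2\ge(1+\zeta_0)\max_{n\neq N}(g^{(t_1)}_n)^2$, which is exactly $\bm{d}^{(t_1+1)}\in\mathcal{S}^{(N+)}_{\zeta_0}$. Using $-\nabla_{grad}f(\bm{d})=\bm{g}-(\bm{d}^T\bm{g})\bm{d}$, the benign-geometry bound (\ref{eq:REG}) of Lemma \ref{thm:Beni} rewrites as $\frac{g_N}{d_N}-\frac{g_{n'}}{d_{n'}}\ge\frac{3\sqrt{2\pi}}{N}\theta(1-\theta)\frac{\zeta_0}{1+\zeta_0}-\delta\|\frac{1}{d_{n'}}\bm{e}_{n'}-\frac{1}{d_N}\bm{e}_N\|$, so $\frac{g_N}{d_N}>\frac{g_{n'}}{d_{n'}}$ on the bulk of the good set. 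I would pair this with the sign pattern $g_N>0$ and $\mathrm{sign}(g_{n'})=\mathrm{sign}(d_{n'})$ — read off from the explicit population gradient underlying Lemma \ref{thm:station} (where $\mathbb{E}[(\bm{d}^T\bm{x})|\bm{d}^T\bm{x}|x_{n'}]$ vanishes at $d_{n'}=0$ and inherits its sign) together with the concentration of Lemma \ref{thm:concent} — so that the coordinate dominance $d_N\ge\sqrt{1+\zeta_0}\,|d_{n'}|$ transfers to $g_N\ge\sqrt{1+\zeta_0}\,|g_{n'}|$ and the ratio survives normalization. The delicate zone is the small cap near $\bm{e}_N$, where $d_{n'}\to0$ inflates the trade-off term and makes (\ref{eq:REG}) vacuous; there I would instead note that $d_N^2/\|\bm{d}_{-N}\|_\infty^2$ is already far above $1+\zeta_0$ and that the $N$-th gradient coordinate dominates by an even wider margin, so one GPM step cannot push the ratio below the fixed threshold, closing invariance over all of $\mathcal{S}^{(N+)}_{\zeta_0}$.

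Finally, I would establish convergence. On $\mathcal{S}^{(N+)}_{\zeta_0}$ one has $d_N\ge\sqrt{(1+\zeta_0)/(N+\zeta_0)}>0$, so $g_N$ — and therefore $\|\bm{g}^{(t_1)}\|=\|\nabla f(\bm{d}^{(t_1)})\|$ — is bounded below uniformly; combined with the identity $\|\bm{d}^{(t_1+1)}-\bm{d}^{(t_1)}\|^2=2\,\mathrm{gap}^{(t_1)}/\|\bm{g}^{(t_1)}\|$, the vanishing gap forces $\|\bm{d}^{(t_1+1)}-\bm{d}^{(t_1)}\|\to0$. The accumulation set of the iterates is then nonempty (compactness), connected (consecutive iterates coalesce), and made of GPM fixed points: any subsequential limit $\bm{r}^*$ obeys $\bm{r}^*=\bm{g}(\bm{r}^*)/\|\bm{g}(\bm{r}^*)\|$ by continuity, i.e. $\nabla_{grad}f(\bm{r}^*)=\bm{0}$. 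By Lemma \ref{thm:station} the only population stationary point inside $\mathcal{S}^{(N+)}_{\zeta_0}$ is $\bm{e}_N$ (every other one has ratio exactly $1<1+\zeta_0$), and by the gradient concentration of Lemma \ref{thm:concent} the empirical stationary points in the good set remain isolated within a small neighborhood of $\bm{e}_N$; a connected set of isolated points is a singleton, so the whole sequence converges to one $\bm{r}$ with $\nabla_{grad}f(\bm{r})=\bm{0}$, which is the claim.
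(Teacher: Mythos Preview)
Your invariance argument follows essentially the same route as the paper: both recast one GPM step as a coordinate-ratio update and invoke the benign-geometry bound of Lemma~\ref{thm:Beni} to show that the dominance $d_N^2/d_{n'}^2\ge1+\zeta_0$ is preserved. The paper's execution is slightly different: it reinterprets GPM as Riemannian gradient descent with adaptive stepsize $\tau^{(t_1)}=-1/((\bm{d}^{(t_1)})^T\nabla f(\bm{d}^{(t_1)}))$, proves an auxiliary bound $\tau^{(t_1)}/(1-\tau^{(t_1)}g^{(t_1)}_{n'}/d^{(t_1)}_{n'})\ge1$ (its Lemma~\ref{lem:staylargethan1}), and then writes $(d^{(t_1+1)}_N/d^{(t_1+1)}_{n'})^2$ directly in terms of the inner product in (\ref{eq:REG}). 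Your sign-pattern plus ratio-transfer computation is a legitimate alternative and lands on the same conclusion; both you and the paper are informal about the zone near $\bm{e}_N$ where the trade-off term in (\ref{eq:REG}) inflates.

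Where you genuinely diverge is the convergence step, and there your proposal has a gap. The paper does not argue via accumulation sets at all: it notes that $\mathbb{S}^{N-1}=St_1(\mathbb{R}^N)$ and imports a black-box GPM convergence result for Stiefel manifolds from~\cite{9246702}, using only the boundedness of $f$ supplied by Lemma~\ref{thm:concent}, to obtain $\nabla_{grad}f(\bm{d}^{(t_1)})\to\bm{0}$. Your self-contained compactness route is more explicit, but the assertion ``by the gradient concentration of Lemma~\ref{thm:concent} the empirical stationary points in the good set remain isolated within a small neighborhood of $\bm{e}_N$'' is not justified: concentration only \emph{localizes} empirical stationary points near the population one, it does not rule out a continuum of them inside that small ball. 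Hence ``a connected set of isolated points is a singleton'' does not apply as stated. Closing this would require additional structure---a local sharpness/Kurdyka--Lojasiewicz inequality or nondegeneracy of the Riemannian Hessian near $\bm{e}_N$---which none of Lemmas~\ref{thm:concent}, \ref{thm:station}, \ref{thm:Beni} supply. The paper sidesteps the issue entirely by outsourcing convergence to the cited Stiefel-manifold result.
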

	\begin{proof}
		See Appendix \ref{proof:stage1GPMconv}.
	\end{proof}
	\subsubsection{{  Summary of Stage One}}
	The benign geometry in Lemma \ref{thm:Beni} and the convergence of the GPM in Lemma \ref{lem:conver} enable the following theorem for the approximate recovery at Stage One.
	
	\begin{theorem}(Approximate Recovery  at Stage One) \label{thm:stage1rec}
		Let $\bm{x}_i \in \mathbb{R}^{N}\sim^{i.i.d} \mathcal{BG}(\theta)$ with $\theta  \in(\frac{1}{N},\frac{1}{12}\sqrt{\frac{\pi}{2}})$ and $\bm{y}_i = \bm{D}^*\bm{x}^*_i,  i=1,\ldots,L,$  with $\bm{D}^* = \bm{I}$. Given an initialization $\bm{d}^{(0)}\in \mathcal{S}^{(n\pm)}_{{\zeta_{0}}}, \forall {\zeta_{0}}\in(0,1) $ for any $n\in [N]$. Whenever $\epsilon\in \Big(0,\frac{c_{3}(1+{\zeta_{0}})N^{\frac{1}{2}}}{{\zeta_{0}}\theta^{\frac{5}{2}}(\log(L)\log(NL))^{3/2}} \Big)$ and $L\geq C_{3} (\frac{1+{\zeta_{0}}}{{\zeta_{0}}})^2\frac{N^5}{\theta^3\epsilon^2} \log (\frac{(1+{\zeta_{0}})N}{{\zeta_{0}}\theta \epsilon})$, with high probability, the sequence generated by the GPM in (\ref{eq:spsol1}) converges to a point $\bm{r}$ that is close to  the true atom $\pm\bm{e}_n$ in $\mathcal{S}^{(n\pm)}_{\zeta_{0}}$ in the sense that
		\begin{equation}\label{eq:conv}
		\begin{aligned}
		\Vert\bm{r}-(\pm\bm{e}_n)\Vert\leq \epsilon,
		\end{aligned}
		\end{equation}
		where $c_3$ and $C_3$  are  some positive constants, and  $\bm{r}=\underset{t_1\to\infty}{\lim}\bm{d}^{(t_1)}$.
	\end{theorem}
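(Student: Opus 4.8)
The plan is to let the concentration level $\delta$ be the single knob that trades sample size for accuracy, and to combine the GPM convergence guarantee with the empirical benign geometry. By the rotational invariance already used to set $\bm{D}^*=\bm{I}$ and by the symmetry of $\mathbb{S}^{N-1}$ underlying Lemmas \ref{thm:Benipop}--\ref{lem:conver}, it suffices to prove (\ref{eq:conv}) for the representative good subset $\mathcal{S}^{(N+)}_{\zeta_0}$ with target atom $\bm{e}_N$; the statement for every other $\mathcal{S}^{(n\pm)}_{\zeta_0}$ and target $\pm\bm{e}_n$ is then identical. On the high-probability event of Lemma \ref{lem:conver}, an initialization $\bm{d}^{(0)}\in\mathcal{S}^{(N+)}_{\zeta_0}$ forces the GPM iterates (\ref{eq:spsol1}) to remain in $\mathcal{S}^{(N+)}_{\zeta_0}$ and to converge to a limit $\bm{r}=\lim_{t_1\to\infty}\bm{d}^{(t_1)}$ with $\nabla_{grad}f(\bm{r})=\bm{0}$. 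At the population level Lemma \ref{thm:station} already tells us that $\bm{e}_N$ is the \emph{unique} stationary point inside $\mathcal{S}^{(N+)}_{\zeta_0}$, so the remaining task is to control how far the perturbed empirical stationary point $\bm{r}\in\mathcal{S}^{(N+)}_{\zeta_0}$ can lie from $\bm{e}_N$.

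The crux is to insert the stationarity $\nabla_{grad}f(\bm{r})=\bm{0}$ into the empirical geometry inequality (\ref{eq:REG}) of Lemma \ref{thm:Beni}. For each $n'\neq N$ with $r_{n'}\neq 0$ the inner product on the left of (\ref{eq:REG}) vanishes, so after dividing by $\delta$ the inequality collapses to
\[
\Big\|\tfrac{1}{r_{n'}}\bm{e}_{n'}-\tfrac{1}{r_N}\bm{e}_N\Big\|\geq \frac{3\sqrt{2\pi}\,\theta(1-\theta)\zeta_0}{N\delta(1+\zeta_0)}=:B .
\]
Since $\bm{e}_{n'}\perp\bm{e}_N$ this reads $\tfrac{1}{r_{n'}^2}+\tfrac{1}{r_N^2}\geq B^2$, whence $r_{n'}^2\leq (B^2-\tfrac{1}{r_N^2})^{-1}$ for every off-target coordinate. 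Membership $\bm{r}\in\mathcal{S}^{(N+)}_{\zeta_0}$ gives $r_N^2\geq(1+\zeta_0)r_{n'}^2$ for all $n'$, hence $r_N^2\geq\frac{1+\zeta_0}{N+\zeta_0}$ and $\tfrac{1}{r_N^2}=O(N)$. Provided $\delta$ is small enough that $B^2$ dominates this $O(N)$ term, summing the coordinate bound over the at most $N-1$ off-target indices yields $\|\bm{r}_{-N}\|^2\lesssim N/B^2$. Because $\|\bm{r}-\bm{e}_N\|^2=(1-r_N)^2+\|\bm{r}_{-N}\|^2$ and $1-r_N\leq\|\bm{r}_{-N}\|^2$ on the sphere, the distance is dominated by $\|\bm{r}_{-N}\|$, giving an accuracy bound of the form $\|\bm{r}-\bm{e}_N\|\lesssim \frac{1+\zeta_0}{\zeta_0}\frac{N^{3/2}}{\theta}\,\delta$.

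Finally I would close the loop through the concentration Lemma \ref{thm:concent}. The tightest admissible level scales as $\delta\sim\sqrt{N\theta\log(\cdot)/L}$, while the feasibility requirement $B^2\gtrsim N/\epsilon^2$ (needed for the coordinate bound above to certify $\|\bm{r}_{-N}\|\le\epsilon$) imposes a further upper bound on $\delta$, hence a lower bound on $L$; substituting and bookkeeping all factors of $N$, $\theta$, and $(1+\zeta_0)/\zeta_0$ reproduces both the admissible window for $\epsilon$ and the stated polynomial lower bound on $L$, with everything holding on the common high-probability event ($\ge 1-L^{-1}$) shared by Lemmas \ref{thm:concent}, \ref{thm:Beni}, and \ref{lem:conver}. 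I expect the middle step to be the main obstacle: converting the vanishing-gradient condition into a quantitative distance bound requires the empirical benign geometry of Lemma \ref{thm:Beni} to stay valid arbitrarily close to $\bm{e}_N$, which forces $B^2$ to beat the $O(N)$ size of $1/r_N^2$. Carefully propagating this competition—rather than the routine norm algebra—is what pins down the exact exponents of $N$ and $\theta$ in the sample complexity.
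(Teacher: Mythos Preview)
Your proposal is correct and follows essentially the same route as the paper: reduce by symmetry to $\mathcal{S}^{(N+)}_{\zeta_0}$, invoke Lemma~\ref{lem:conver} so that the limit $\bm r$ is a stationary point in that subset, feed $\nabla_{grad}f(\bm r)=\bm 0$ into the empirical benign-geometry inequality~(\ref{eq:REG}) of Lemma~\ref{thm:Beni} to force $\|\tfrac{1}{r_{n'}}\bm e_{n'}-\tfrac{1}{r_N}\bm e_N\|$ to be large, convert this to a bound on $\|\bm r-\bm e_N\|$, and finally set $\delta$ so that this bound equals $\epsilon$ and read off the sample requirement from Lemma~\ref{thm:concent}.

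One remark on the step you flag as the main obstacle: the competition ``$B^2$ must beat $1/r_N^2=O(N)$'' is self-inflicted and can be bypassed. Inside $\mathcal{S}^{(N+)}_{\zeta_0}$ you have $|r_{n'}|\le|r_N|$, so $1/r_{n'}^2\ge 1/r_N^2$ and hence $2/r_{n'}^2\ge 1/r_{n'}^2+1/r_N^2\ge B^2$, giving $r_{n'}^2\le 2/B^2$ directly without any smallness condition on $\delta$ beyond what Lemma~\ref{thm:concent} already imposes. This is exactly the shortcut the paper takes (via $d_{n'}=c_{n'}d_N$ with $|c_{n'}|\le 1$), and it removes the ``carefully propagating this competition'' step from the argument. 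With this simplification your chain $\|\bm r_{-N}\|^2\lesssim N/B^2$ and $\|\bm r-\bm e_N\|\lesssim\|\bm r_{-N}\|$ goes through cleanly; the paper uses the slightly looser $\|\bm r-\bm e_N\|^2\le \|\bm r_{-N}\|^2/r_N^2\le N\|\bm r_{-N}\|^2$, which costs an extra $\sqrt N$ but lands on the same sample-complexity order after setting $\epsilon$ and solving for $L$.
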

	
	\begin{proof}
		See Appendix \ref{proof:stage1rec}.
	\end{proof}
	
	Theorem \ref{thm:stage1rec} concludes that once the initialization falls into one good subset, Stage One can recover an approximated solution at that subset and the approximation error can be made arbitrarily small with a sufficiently large number of samples.
	
	\subsection{Refined Recovery at Stage Two}  
	After obtaining the approximated solution from Stage One, we then show that Stage Two can refine the solution to be exactly any one of the true atoms in the following theorem. 
	\begin{theorem}(Refine the Result at Stage Two) \label{thm:stage2rec}
		Let $\bm{x}_i \in \mathbb{R}^{N}\sim^{i.i.d} \mathcal{BG}(\theta)$ with $\theta\in(\frac{1}{N},\frac{1}{12}\sqrt{\frac{\pi}{2}})$, $\bm{y}_i = \bm{D}^*\bm{x}^*_i,  i=1,\ldots,L,$ with  $\bm{D}^* = \bm{I}$. Suppose $\bm{r}$ obtained from Stage One satisfies $\Vert\bm{r}-(\pm\bm{e}_n)\Vert\leq\frac{1}{10}$. For any $n\in [N]$, there exist positive constants $c_4, c_5$ and $C_4$.  Whenever $L\geq C_4N\log(N/\theta)/\theta^2$,  with high probability, the sequence $\{\mathbf{d}^{(t_2)}\}_{t_2=0}^{\infty}$ generated by the RPG method in (\ref{eq:spsol2}) with $\bm{d}^{(0)}=\bm{r}$, $\tau^{(t_2)}=\eta\tau^{(t_2-1)}$, $\tau^{(0)}=c_4(\theta^{-3} N^{-4})$, and $\eta\in[\sqrt{1-c_5\theta^{-2}N^{-4}},1]$ converges to the target solution in the sense that
		\begin{equation}\label{eq:exact}
		\underset{t_2\to\infty}{\lim}\Vert{\Vert\bm{r}\Vert_\infty}\bm{d}^{(t_2)}-(\pm\bm{e}_n)\Vert=0.
		\end{equation}
	\end{theorem}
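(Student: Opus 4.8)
Since the assumption $\bm{D}^*=\bm{I}$ gives $\bm{y}_i=\bm{x}_i^*$, the Stage-Two objective is the convex, piecewise-linear function $g(\bm{d})\doteq\frac{1}{L}\sum_{i=1}^{L}|\bm{d}^T\bm{x}_i^*|$, and $\hat{\mathscr{P}}_2$ minimizes $g$ over the affine hyperplane $\mathcal{H}\doteq\{\bm{d}:\bm{r}^T\bm{d}=1\}$ (using $\|\bm{r}\|=1$). By the sign/permutation symmetry of the $\mathcal{BG}(\theta)$ model I may take the target to be $+\bm{e}_N$; from $\|\bm{r}-\bm{e}_N\|\le\frac{1}{10}$ it follows that $r_N\ge\frac{9}{10}$ and $|r_j|\le\frac{1}{10}$ for $j\neq N$, so $\|\bm{r}\|_\infty=r_N$. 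The plan is to show that the unique minimizer of $\hat{\mathscr{P}}_2$ is the $1$-sparse point $\bm{d}^\star\doteq\frac{1}{r_N}\bm{e}_N$, which lies on $\mathcal{H}$ and satisfies $\|\bm{r}\|_\infty\bm{d}^\star=\bm{e}_N$, and then to prove that the RPG iterates (\ref{eq:spsol2}) converge to $\bm{d}^\star$; together these give (\ref{eq:exact}). Note first that each RPG step stays on $\mathcal{H}$, since $\mathcal{P}_{\bm{r}^\perp}$ annihilates the $\bm{r}$-component and $\bm{d}^{(0)}=\bm{r}\in\mathcal{H}$, so throughout the analysis the iteration acts as an unconstrained subgradient method on the $(N-1)$-dimensional set $\mathcal{H}$.

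The exact optimality of $\bm{d}^\star$ I would certify by a dual argument. Because $g$ is convex, $\bm{d}^\star$ minimizes $\hat{\mathscr{P}}_2$ iff there is a multiplier $\lambda$ with $\lambda\bm{r}\in\partial g(\bm{d}^\star)$. At $\bm{d}^\star$ one has $\bm{d}^{\star T}\bm{x}_i^*=x^*_{i,N}/r_N$, so the subgradient selects $\operatorname{sign}(x^*_{i,N})$ whenever $x^*_{i,N}\neq0$ and a free $s_i\in[-1,1]$ whenever $x^*_{i,N}=0$. The coordinate-$N$ equation fixes $\lambda=\frac{1}{Lr_N}\sum_{i:x^*_{i,N}\neq0}|x^*_{i,N}|>0$, and the remaining $N-1$ equations require the free multipliers to satisfy, for every $j\neq N$,
\begin{equation*}
\frac{1}{L}\sum_{i:x^*_{i,N}=0}s_i x^*_{i,j}=\lambda r_j-\frac{1}{L}\sum_{i:x^*_{i,N}\neq0}\operatorname{sign}(x^*_{i,N})\,x^*_{i,j}.
\end{equation*}
I would take $\{s_i\}$ to be the minimum-$\ell_2$-norm solution of this heavily underdetermined system and certify $\|s\|_\infty<1$ with high probability. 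The independence of the coordinates of $\bm{x}_i^*$ makes the right-hand side mean-zero up to the small term $\lambda r_j$ (with $|r_j|\le\frac{1}{10}$), while the $\approx(1-\theta)L$ free variables correspond to a random matrix whose least singular value is of order $\sqrt{\theta L}$; standard concentration then keeps every $s_i$ strictly below $1$ once $L\gtrsim N\log(N/\theta)/\theta^2$. The strict feasibility $\|s\|_\infty<1$ simultaneously yields uniqueness of $\bm{d}^\star$.

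To drive the iteration to $\bm{d}^\star$ I would establish a local sharpness (error bound) on $\mathcal{H}$: for a feasible direction $\bm{h}\perp\bm{r}$,
\begin{equation*}
g'(\bm{d}^\star;\bm{h})=\frac{1}{L}\sum_{i:x^*_{i,N}=0}|\bm{x}_i^{*T}\bm{h}|+\frac{1}{L}\sum_{i:x^*_{i,N}\neq0}\operatorname{sign}(x^*_{i,N})\,\bm{x}_i^{*T}\bm{h}\ge\mu\|\bm{h}\|
\end{equation*}
for some $\mu\gtrsim\theta$ up to lower-order factors, bounding the first sum below by its mean (a sum of absolute random projections) and using that a feasible $\bm{h}$ carries most of its energy off coordinate $N$, since $|h_N|\le\frac{1}{r_N}\|\bm{r}_{-N}\|\,\|\bm{h}_{-N}\|$. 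Combined with a uniform bound $G$ of order $\sqrt{N\theta}$ on the projected subgradient norm, this makes $g$ a sharp convex function on $\mathcal{H}$ with unique minimizer $\bm{d}^\star$. I would then invoke the standard linear-convergence guarantee for projected subgradient descent on a sharp convex objective with geometrically decaying step size: with $\tau^{(t_2)}=\tau^{(0)}\eta^{t_2}$, choosing $\tau^{(0)}\sim\mu/G^2$ and $\eta^2\ge 1-c_5(\mu/G)^2$ yields $\|\bm{d}^{(t_2)}-\bm{d}^\star\|\le C\eta^{t_2}\to 0$. Matching these to the stated $\tau^{(0)}=c_4(\theta^{-3}N^{-4})$ and $\eta\in[\sqrt{1-c_5\theta^{-2}N^{-4}},1]$ pins down the constants, and finally $\|\bm{r}\|_\infty\bm{d}^{(t_2)}\to r_N\cdot\frac{1}{r_N}\bm{e}_N=\bm{e}_N$, which is exactly (\ref{eq:exact}).

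The main obstacle is the dual-certificate step: verifying that the box constraint $\|s\|_\infty<1$ can be met for the finite, random linear system is precisely what upgrades the merely approximate recovery of Theorem \ref{thm:stage1rec} to \emph{exact} recovery at finite $L$, and it demands sharp control of both the least singular value of the relevant random sub-matrix and the per-coordinate spread of the minimum-norm dual solution. A secondary difficulty is quantifying the sharpness $\mu$ and the subgradient bound $G$ precisely enough to justify the stated step-size schedule and sample complexity.
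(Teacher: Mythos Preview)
Your approach is essentially the same as the paper's (which simply defers to \cite[Lemma SM 4.7, Lemma SM 4.9]{qu2019nonconvex}): establish a sharpness inequality for the $\ell_1$ objective on the affine hyperplane $\mathcal{H}$, bound the subgradient, and then invoke the standard linear convergence of projected subgradient descent with geometrically decaying step sizes. The induction argument you sketch in the last paragraph is exactly the one the paper has in mind.

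The one structural difference is your separate dual-certificate step, and it is unnecessary. For a convex function, the directional-derivative lower bound $g'(\bm{d}^\star;\bm{h})\ge\mu\|\bm{h}\|$ already implies the global sharpness $g(\bm{d}^\star+\bm{h})-g(\bm{d}^\star)\ge\mu\|\bm{h}\|$ for all feasible $\bm{h}$, which in turn forces $\bm{d}^\star$ to be the \emph{unique} minimizer of $\hat{\mathscr{P}}_2$. So once you have sharpness, the exact-recovery conclusion and the convergence analysis both follow; you do not need to build a dual variable $\{s_i\}$ and certify $\|s\|_\infty<1$ separately. In fact, the paper (and \cite{qu2019nonconvex}) bypass the dual certificate entirely and prove sharpness directly by decomposing
\[
g(r_N\bm{d})-g(\bm{e}_N)\;\ge\;\frac{1}{L}\big(|r_N d_N|-1\big)\sum_i|x^*_{i,N}|\;+\;\frac{|r_N|}{L}\Big(\sum_{i:\,x^*_{i,N}=0}\big|\bm{d}_{-N}^T\bm{x}^*_{i,-N}\big|-\sum_{i:\,x^*_{i,N}\neq0}\big|\bm{d}_{-N}^T\bm{x}^*_{i,-N}\big|\Big),
\]
and then controlling the two terms with the concentration bounds from \cite[Lemma 21]{sun2015complete2} (the second term furnishes the $c\theta\|\bm{d}_{-N}\|$ you want, while the first is either nonnegative or dominated because $\|\bm{r}_{-N}\|/|r_N|\le\frac{1}{10}$). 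This is equivalent to your directional-derivative computation but written as a finite difference rather than a limit, and it avoids the extra random-matrix analysis you flagged as the ``main obstacle.''
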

	\begin{proof}
		{  The proof can be done in a similar way to the proof of  \cite[Lemma SM 4.7, Lemma SM 4.9]{qu2019nonconvex}.}
	\end{proof}
	{  
		\begin{remark}
			The $\|\bm{r}\|_\infty$ in (\ref{eq:exact}) is eliminated by the normalization after the convergence of Stage Two, as explained after (\ref{eq:spsol2}).
	\end{remark}}

	\subsection{{  HRP Exactly Recovers One Atom with Finite Data Samples}}\label{randini}
	The results in Theorem \ref{thm:stage1rec} and Theorem \ref{thm:stage2rec} hold when the initialization falls in any one of the good subsets, hence choosing an initialization within one of the good subsets is required for the exact recovery of one atom. Fortunately, as shown in \cite{bai2018subgradient,qu2019nonconvex}, uniformly random initialization over the sphere falls into one of the $2N$  good subsets $\{\mathcal{S}^{(n\pm)}_{\zeta_{0}=\frac{1}{5\log N}},\forall n\in[N]\}$ with a  probability of at least $1/2$. Hence, a few random initializations will guarantee that at least one of these initializations falls in one of the good subsets, with high probability.

	Finally, we summarize the statement for the exact recovery of one atom in the
	following theorem.
	\begin{theorem}(HRP Exactly Recovers One Atom with Finite Data Samples) \label{thm:final}
		Let $\bm{x}_i \in \mathbb{R}^{N}\sim^{i.i.d} \mathcal{BG}(\theta)$ with $\theta\in(\frac{1}{N},\frac{1}{12}\sqrt{\frac{\pi}{2}})$ and $\bm{y}_i = \bm{D}^*\bm{x}^*_i,  i=1,\ldots,L,$ with  $\bm{D}^* = \bm{I}$. There exist some positive constants $C_5$ and $C_6$. Whenever $L\geq C_5\theta ^{-3}N^5 \log (N/\theta)$, with high probability, HRP can recover any one of the atoms in the true dictionary in the sense that 
		\begin{equation}\label{eq:final}
		\Vert \hat{\bm{d}}-\bm{d}^*\Vert=0,
		\end{equation}
		with $R>C_6\log N$ times uniformly random initialization  over the sphere.
	\end{theorem}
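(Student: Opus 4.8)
The plan is to prove the theorem by \emph{composing} the two stage-wise guarantees (Theorem~\ref{thm:stage1rec} and Theorem~\ref{thm:stage2rec}) with the random-initialization fact, and then assembling the pieces by a union bound. Concretely, I would set up the following chain of events and show each holds with high probability: (i) among the $R$ uniformly random initializations over $\mathbb{S}^{N-1}$, at least one lands in some good subset $\mathcal{S}^{(n\pm)}_{\zeta_0}$; (ii) conditioned on this, the GPM of Stage One returns a point $\bm{r}$ with $\|\bm{r}-(\pm\bm{e}_n)\|\le\epsilon$; (iii) since $\epsilon$ is chosen small enough to satisfy the precondition of Theorem~\ref{thm:stage2rec}, the RPG of Stage Two drives $\|\bm{r}\|_\infty\bm{d}^{(t_2)}$ to $\pm\bm{e}_n$; and (iv) the final projection $\mathcal{P}_{\mathbb{S}^{N-1}}$ normalizes away the residual scalar $\|\bm{r}\|_\infty$, leaving $\hat{\bm{d}}=\bm{d}^*$ exactly. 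Each arrow of this chain is already proved in an earlier result, so the genuine work is to pick the free parameters consistently and to control the three failure probabilities jointly.

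First I would fix the parameters. I set $\zeta_0=\tfrac{1}{5\log N}$, which is precisely the value for which the argument cited from \cite{bai2018subgradient,qu2019nonconvex} guarantees that a single uniformly random point on $\mathbb{S}^{N-1}$ lies in one of the $2N$ good subsets with probability at least $1/2$. By independence of the draws, the probability that \emph{no} initialization lands in a good subset is at most $2^{-R}$, and choosing $R>C_6\log N$ makes this polynomially small in $N$, so step (i) holds with high probability. I then set $\epsilon=\tfrac{1}{10}$ in Theorem~\ref{thm:stage1rec}, which matches exactly the hypothesis $\|\bm{r}-(\pm\bm{e}_n)\|\le\tfrac{1}{10}$ of Theorem~\ref{thm:stage2rec}. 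Before invoking Theorem~\ref{thm:stage1rec} I would check that this $\epsilon$ lies inside its admissible interval; this holds for large $N$ because the interval's upper endpoint grows like $N^{1/2}$ up to polylogarithmic factors, which dominates the constant $\tfrac{1}{10}$ once $N^{1/2}\gtrsim(\log N)^2$.

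The sample-complexity threshold is then read off from whichever stage is more demanding. Substituting $\zeta_0=\tfrac{1}{5\log N}$ and $\epsilon=\tfrac{1}{10}$ into the condition of Theorem~\ref{thm:stage1rec} yields a requirement of order $\tfrac{N^5}{\theta^3}$ up to polylogarithmic factors, whereas Theorem~\ref{thm:stage2rec} only needs $L\ge C_4 N\log(N/\theta)/\theta^2$. The Stage-One term dominates through its $N^5$ factor, so the combined requirement reduces to $L\ge C_5\,\theta^{-3}N^5\log(N/\theta)$, the polynomial factor being governed by Stage One and the logarithmic and $(\tfrac{1+\zeta_0}{\zeta_0})^2,\epsilon^{-2}$ factors being folded into $C_5$. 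I expect this to be the main obstacle: the choice $\zeta_0=\tfrac{1}{5\log N}$ injects extra $\log^2 N$ factors through $(\tfrac{1+\zeta_0}{\zeta_0})^2$, and the delicate point is to track these polylog contributions and argue that they are subdominant to the $N^5$ scaling so that the clean single-$\log(N/\theta)$ form in the statement is justified.

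Finally I would close the argument. Conditioned on steps (i)--(iii), Theorem~\ref{thm:stage2rec} gives $\lim_{t_2\to\infty}\|\bm{r}\|_\infty\bm{d}^{(t_2)}=\pm\bm{e}_n$, hence $\bm{d}^{(t_2)}\to\pm\bm{e}_n/\|\bm{r}\|_\infty$; applying $\mathcal{P}_{\mathbb{S}^{N-1}}$ renormalizes this to the unit vector $\pm\bm{e}_n=\bm{d}^*$, so that $\|\hat{\bm{d}}-\bm{d}^*\|=0$. A union bound over the failure events of steps (i), (ii), and (iii)---bounded respectively by $2^{-R}$, by the $L^{-1}$ concentration failure inherited through Theorem~\ref{thm:stage1rec}, and by the failure probability of Theorem~\ref{thm:stage2rec}---shows the entire pipeline succeeds with high probability under $L\ge C_5\,\theta^{-3}N^5\log(N/\theta)$ and $R>C_6\log N$, which completes the proof.
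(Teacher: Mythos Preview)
Your proposal is correct and follows essentially the same approach as the paper: the paper's own proof is a one-line sketch that says to combine Theorem~\ref{thm:stage1rec}, Theorem~\ref{thm:stage2rec}, and the random-initialization argument from \cite[Theorem~3.10]{bai2018subgradient}, which is exactly the chain (i)--(iv) you assemble, with the same parameter choices $\zeta_0=\tfrac{1}{5\log N}$ and $\epsilon=\tfrac{1}{10}$. Your write-up is in fact more careful than the paper's, since you explicitly flag the polylog factors coming from $(\tfrac{1+\zeta_0}{\zeta_0})^2$; the paper simply absorbs these into the constant $C_5$ without comment.
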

	\begin{proof}
		Proof can be done by summarizing Theorem \ref{thm:stage1rec}, Theorem \ref{thm:stage2rec} and the proof of \cite[Theorem 3.10]{bai2018subgradient}.
	\end{proof}
	
	\section{Experiments}\label{sec:exp}
	This section presents experiments demonstrating the
	efficiency of our scheme compared to state-of-the-art prior works. All the experiments 
	are conducted in  MATLAB 2019b with a 3.6 GHz Intel quad-core i7 processor.
	\subsection{List of Methods} The  methods adopted in the experiments are listed as follows.
	\begin{itemize}
		{  	\item \textbf{Proposed scheme} (Proposed): Proposed denotes the two-stage scheme in this work, where the parameters for Stage Two are fixed to be $\tau^{(t_2)}=\eta\tau^{(t_2-1)}$, $\tau^{(0)}=0.1$ and $\eta=0.8$ in all the experiments.
			
			\item \textbf{Baseline 1} (K-SVD) \cite{Aharon2006,rubinstein2008efficient}: K-SVD denotes the  efficient K-SVD implementation  provided in  the MATLAB toolbox KSVD-Box v13 \footnote{\url{https://www.cs.technion.ac.il/~ronrubin/software.html}}.
			
			\item \textbf{Baseline 2} (SPAMS) \cite{jenatton2010proximal,mairal2010online}: SPAMS denotes the  SPArse Modeling Software (SPAMS)\footnote{\url{http://spams-devel.gforge.inria.fr}} for efficient dictionary learning. In the simulation, we adopt version 2.6 for MATLAB and directly use the codes provided at \url{http://spams-devel.gforge.inria.fr/doc/html/doc_spams004.html#sec5}.
			
			\item \textbf{Baseline 3} (TransLearn) \cite{ravishankar2012learning}: TransLearn  denotes the sparse transform learning method proposed in  \cite{ravishankar2012learning}. In the simulation, we adopt the same parameters as those provided in \cite[Section V-A-2)]{ravishankar2012learning} for the orthogonal dictionary case.  
			
			\item \textbf{Baseline 4} ($logcosh$-RTR) \cite{sun2015complete1,sun2015complete2}: $logcosh$-RTR denotes the two-stage atom-by-atom method proposed in  \cite{sun2015complete1,sun2015complete2} with the first stage solving a $logcosh$ objective  by the Riemannian trust region algorithm (RTR). The overall dictionary is recovered via deflation \cite[Section III]{sun2015complete2}. In the simulation, we directly use the codes provided at \url{https://github.com/sunju/dl_focm}, where the true dictionary is required.
			
			\item \textbf{Baseline 5} ($\ell_1$-RGD) \cite{bai2018subgradient}: $\ell_1$-RGD denotes the method proposed in  \cite{bai2018subgradient}  with the $\|\cdot\|_1$ sparsity-promoting function in Problem (\ref{eq:DLo}). The problem is solved using the Riemannian subgradient method in an atom-by-atom manner. The overall dictionary is recovered by $5N\log N$ random runs. In the simulation, we directly use the codes provided at \url{https://github.com/sunju/ODL_L1},  where the true dictionary is required.
			
			\item \textbf{Baseline 6} ($\ell_4$-MSP) \cite{zhai2020complete}: $\ell_4$-MSP denotes the method proposed in  \cite{zhai2020complete} with the $-\|\cdot\|^4_4$ sparsity-promoting function  in Problem (\ref{eq:DLo}). The problem is  solved by the MSP method.
			
			\item\textbf{Baseline 7} ($\ell_3$-s1): $\ell_3$-s1 refers to using only  Stage One in the proposed scheme to recover the dictionary. }
	\end{itemize}
	
	\subsection{Experiments with Synthetic Data}
	\subsubsection{Experiment settings}
	For all synthetic experiments, we generate the measurements $\bm{y}_i=\bm{D}^*\bm{x}^*_i,  i=1,\ldots, L$, with the ground truth dictionary $\bm{D}^*$ drawn randomly  by applying QR decomposition to the random matrix with i.i.d normal elements. The sparse signals $\bm{x}^*_i\in\mathbb{R}^N, i=1,\ldots, L$,  are drawn from the i.i.d. Bernoulli-Gaussian distribution, i.e., $\bm{x}^*_{i} \sim^{i.i.d} \mathcal{BG}(\theta), i=1,\ldots, L$. { }
	
	\subsubsection{{  Empirical sample complexity of the proposed scheme}}\label{sec:expsamp}
	In Fig. \ref{fig:sample}, we verify the empirical sample complexity by showing the empirical success rates for recovery by the proposed scheme on both the orthogonal group and the sphere. In the simulation, the sample sizes are set to $L = 10N^{\{0.5,1,1.5,2,2.5\}}$, and the sparsity levels are at $\theta= 0.2,0.5$. The recovery metric for the proposed scheme over the orthogonal group is $\Bigg(\sqrt{\underset{\bm{J} \in \mathcal{J}}{\text{min}}\frac{\Vert\hat{\bm{D}}-\bm{D}^*\bm{J}\Vert^2_F}{\Vert\bm{D}^*\bm{J}\Vert^2_F}}\Bigg)<10^{-3}$,
	where $\mathcal{J}=\{\boldsymbol{\Sigma}\boldsymbol{\Pi}|\boldsymbol{\Pi} \in  \mathcal{P}, \boldsymbol{\Sigma}  \in  \mathcal{X}\}$ is the set containing all the $N$ dimensional sign-permutation matrices. Specifically,  $\mathcal{P}$ contains all the permutation matrices and $\mathcal{X}$ contains all the diagonal matrices whose diagonal elements are $\pm 1$. The recovery metric for the proposed scheme over the sphere is   $\Bigg(\sqrt{\underset{\bm{e} \in \mathcal{E}}{\text{min}}\frac{\Vert\hat{\bm{d}}-\bm{D}^*\bm{e}\Vert^2_F}{\Vert\bm{D}^*\bm{e}\Vert^2_F}}\Bigg)<10^{-3}$,
	where $\mathcal{E}=\{\pm\bm{e}_n,n\in[N]\}$. Each curve is generated from 10  Monte Carlo trials, corresponding to re-sampling the sparse codes 10 times with respect to  a fixed random true dictionary. {  The results in Fig. \ref{fig:sample} suggest that the proposed scheme works well with the sample complexity at $\bar{O}(N^{2})$ to recover the whole dictionary or any of the true atoms\footnote{ The $\bar{O}(\cdot)$ notation ignores the dependency on logarithmic terms and other factors.}. The empirical result matches the conjectured sample complexity in \cite{schramm2017fast,bai2018subgradient}, which shows that the sample complexity from our theory, $\bar{O}(N^{5  })$, is amid the suboptimal sample complexity results, e.g., $\bar{O}(N^{4})$ \cite[Theorem 3.8]{bai2018subgradient}, $\bar{O}(N^{3})$ \cite[Theorem 3]{sun2015complete2}. }
	\begin{figure}[htbp]
		\centering
		\subfigure[Orthogonal group ($\theta =0.2$)]{\includegraphics[scale=0.3]{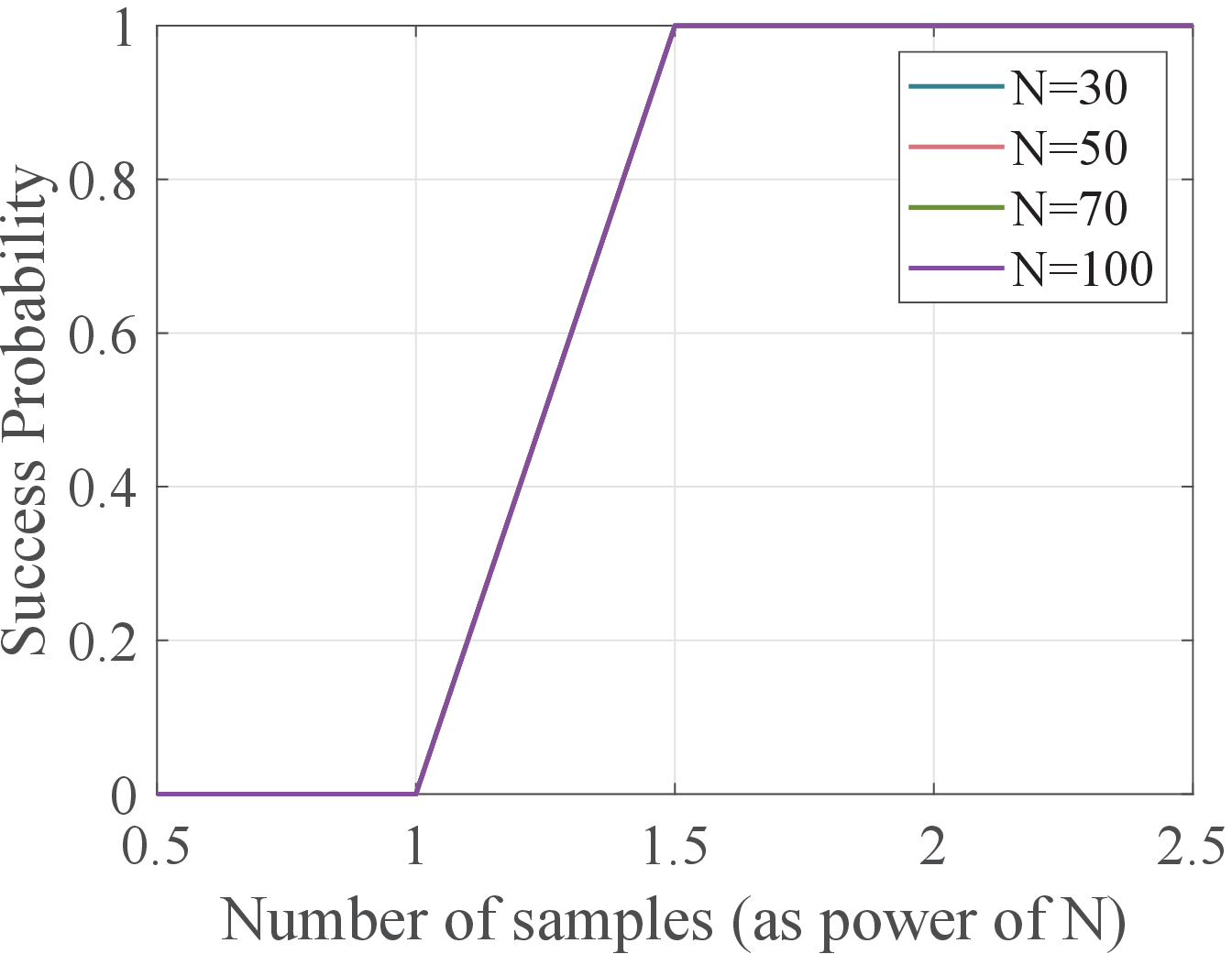}}
		\subfigure[Orthogonal group ($\theta =0.5$)]{\includegraphics[scale=0.3]{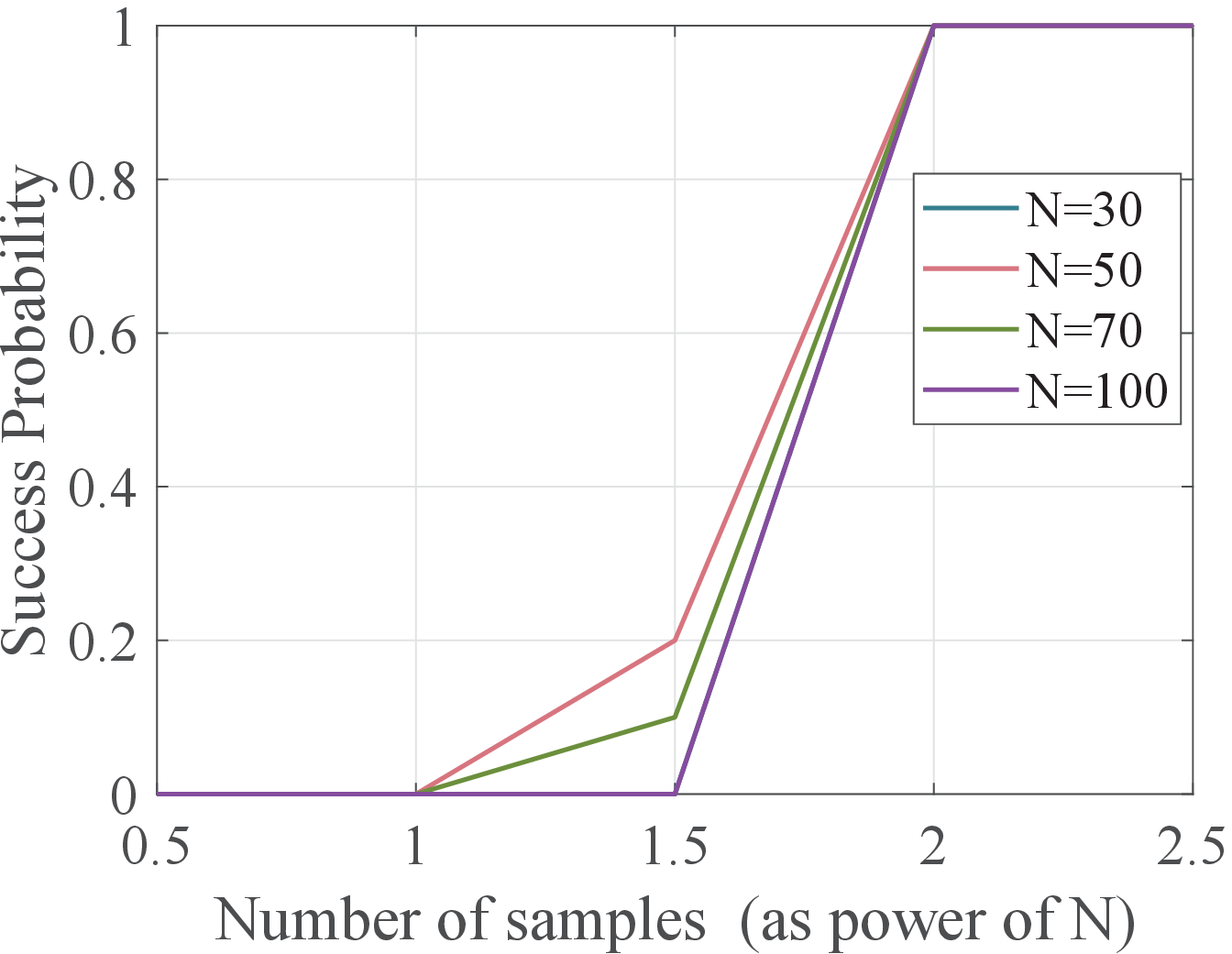}}
		\subfigure[Sphere ($\theta =0.2$)]{\includegraphics[scale=0.3]{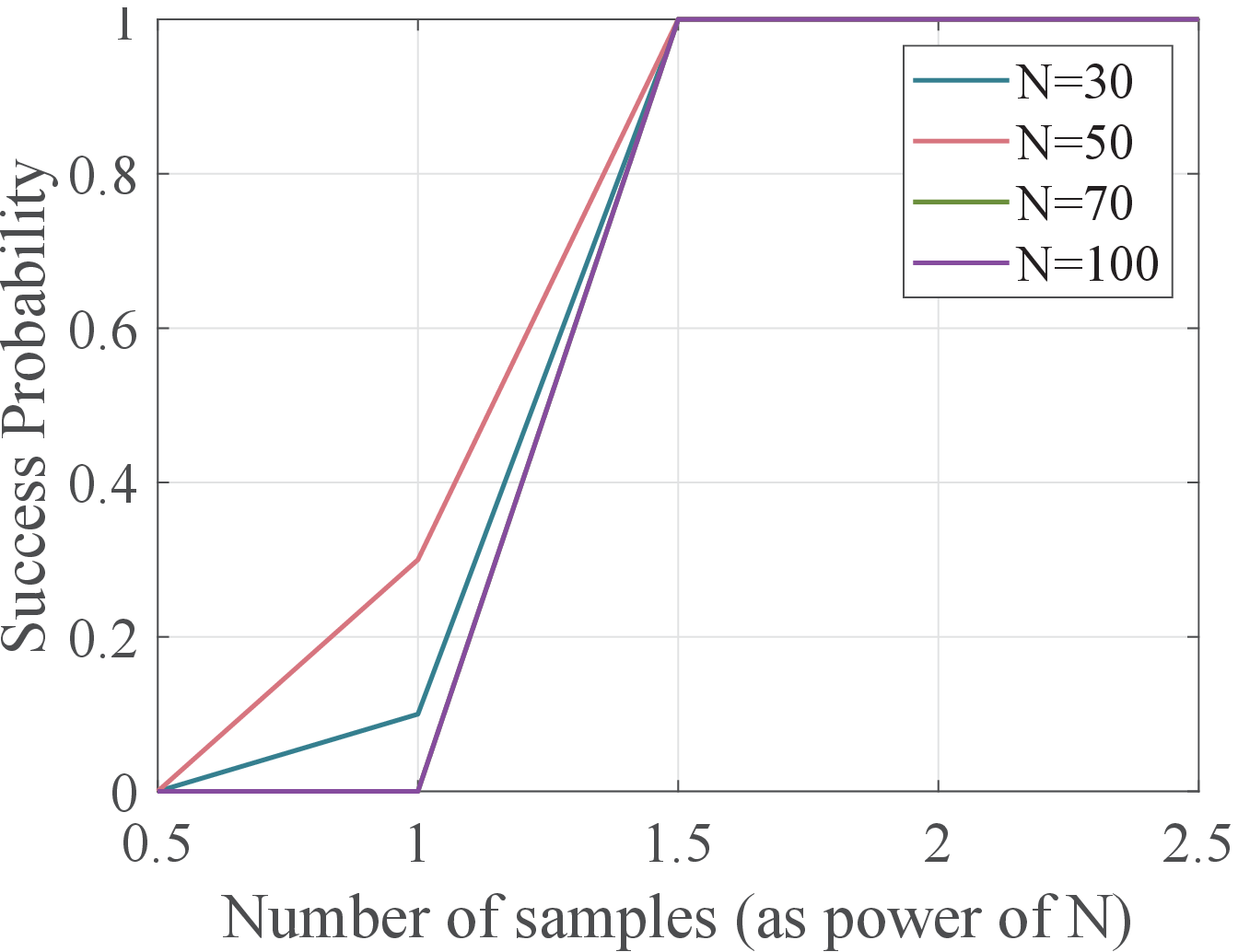}}
		\subfigure[Sphere ($\theta =0.5$)]{\includegraphics[scale=0.3]{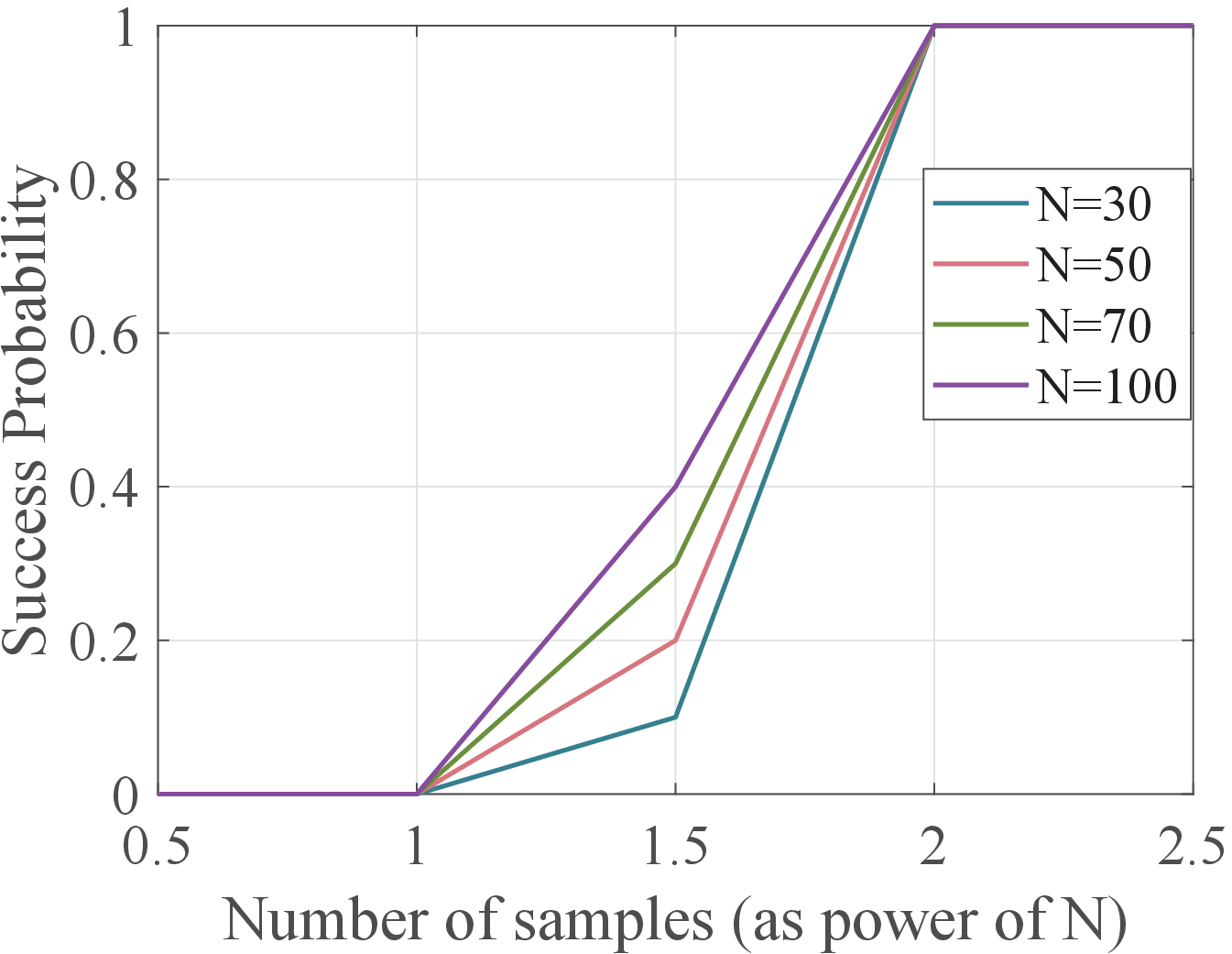}}
		\caption{{  Empirical success rates of recovery of the proposed scheme. Each curve is generated from 10  Monte Carlo trials, corresponding to re-sampling the sparse codes  10 times with respect to  a fixed random true dictionary. The top row is for the recovery over the orthogonal group, and the bottom row is for the recovery over the sphere. Left to right columns: $\theta = 0.2$, $0.5$, respectively.}}\label{fig:sample}
	\end{figure}
	\subsubsection{Dictionary recovery performance comparison}
	To evaluate the dictionary recovery performance, the accuracy of an estimated dictionary
	$\hat{\bm{D}}$ is quantified using the relative RMSE:
	\begin{equation}\label{eq:NMSE}
	\text{RMSE}=\sqrt{\underset{\bm{J} \in \mathcal{J}}{\text{min}}\frac{\Vert\hat{\bm{D}}-\bm{D}^*\bm{J}\Vert^2_F}{\Vert\bm{D}^*\bm{J}\Vert^2_F}},
	\end{equation}
	where $\mathcal{J}=\{\boldsymbol{\Sigma}\boldsymbol{\Pi}|\boldsymbol{\Pi} \in  \mathcal{P}, \boldsymbol{\Sigma}  \in  \mathcal{X}\}$ defined in the same way as that in subsection \ref{sec:expsamp}.  In the experiments, the sparse-coding target  of Baseline 1 (K-SVD)  and the desired sparsity level of Baseline 3 (TransLearn) are set to be the true average sparsity $\theta N$, which means they are assumed to have the prior knowladge of the true sparsity level.

	{  Fig. \ref{fig:nmse} shows the RMSE results averaged over 50 Monte Carlo trials under various  sparsity levels $\theta$  and sample sizes $L$  with a fixed dictionary size $N= 20$. The results demonstrate that the proposed method and Baseline 4 ($logcosh$-RTR) are the two methods with the best dictionary recovery performance. In addition, the proposed method slightly outperforms Baseline 4 under a larger value of the sparsity level ($\theta>0.4$).}
	\begin{figure*}[htbp]
		\centering
		\subfigure[Proposed]{\includegraphics[scale=0.3]{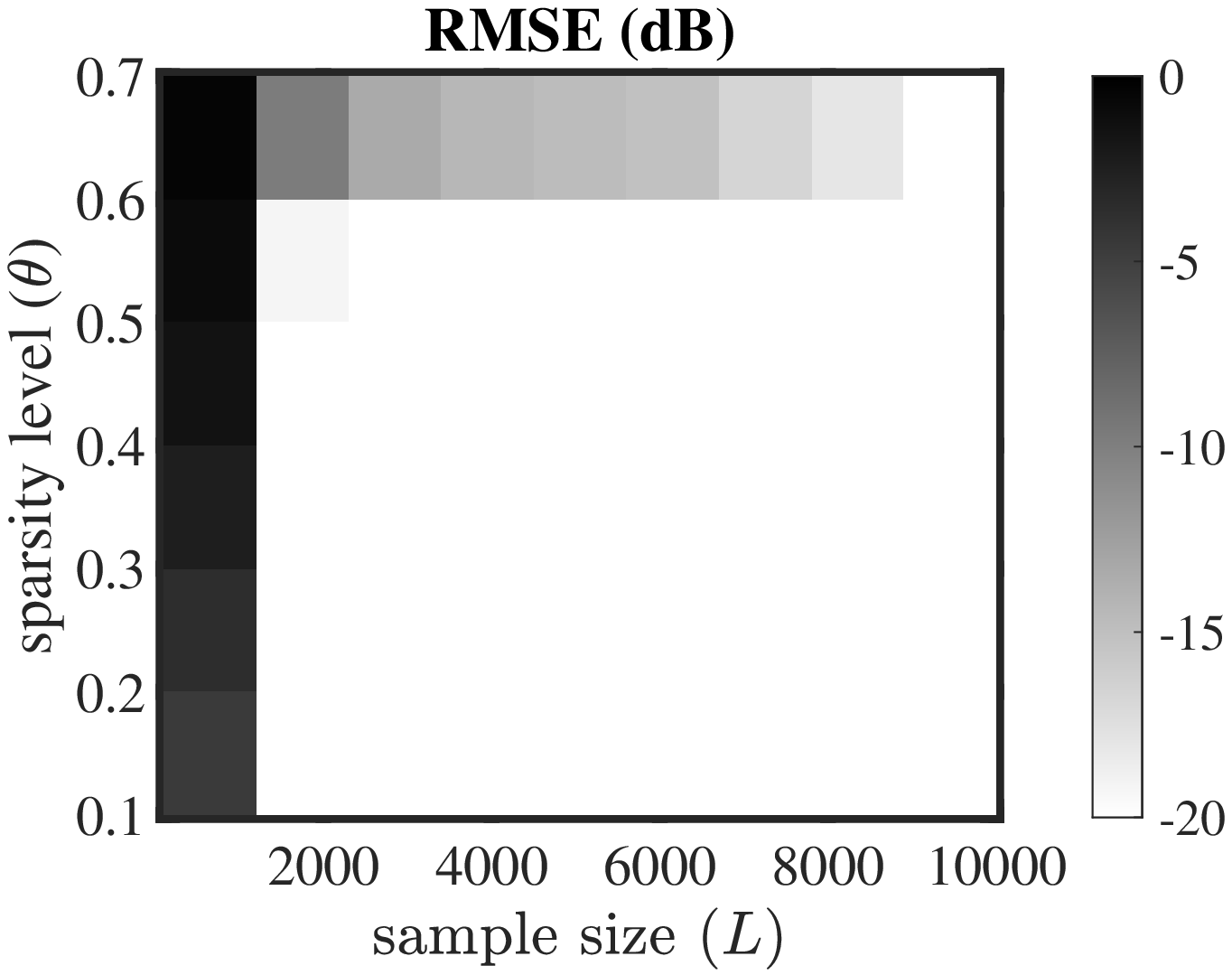}}
		\subfigure[Baseline 1: K-SVD]{\includegraphics[scale=0.3]{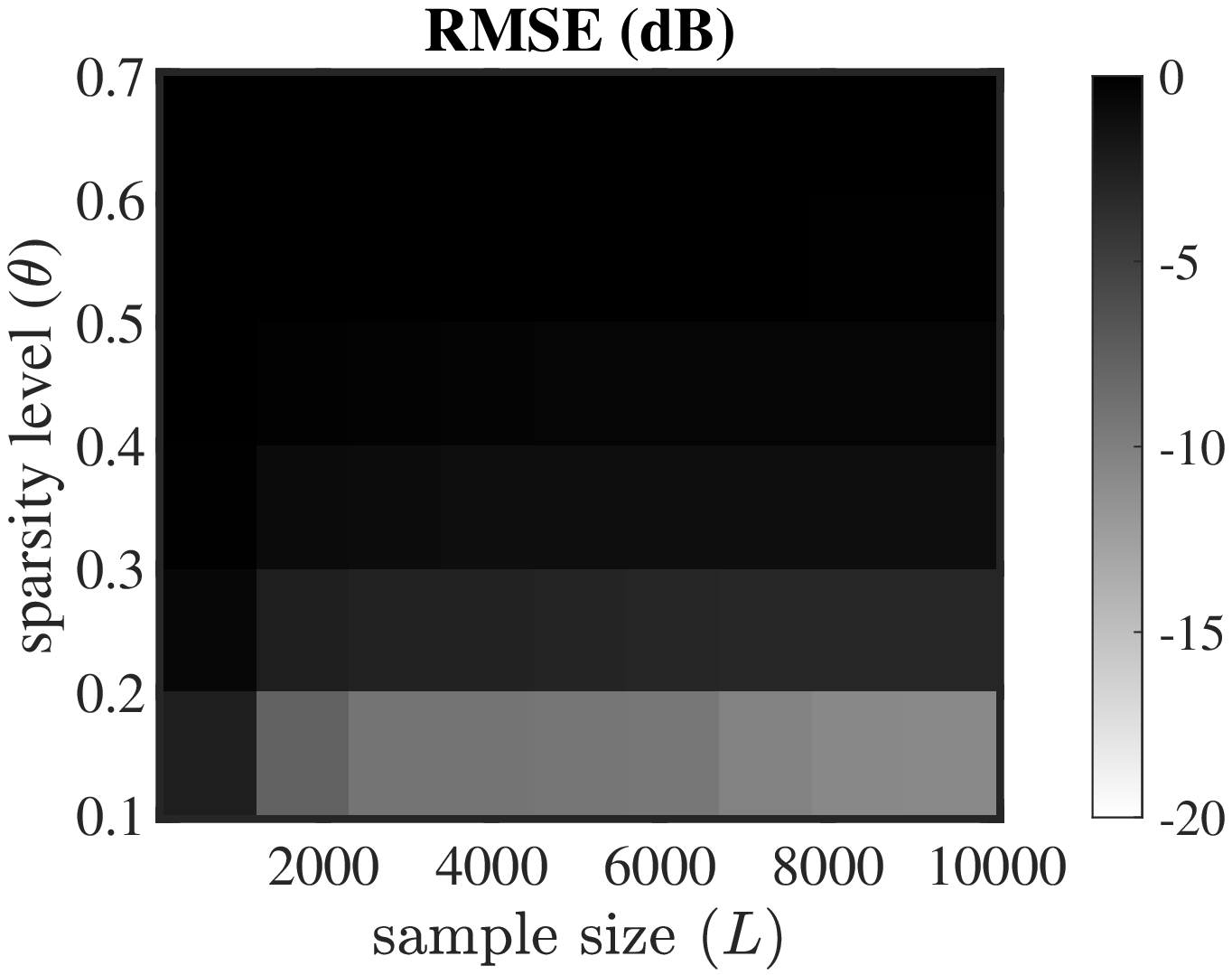}}
		\subfigure[{  Baseline 2: SPAMS}]{\includegraphics[scale=0.3]{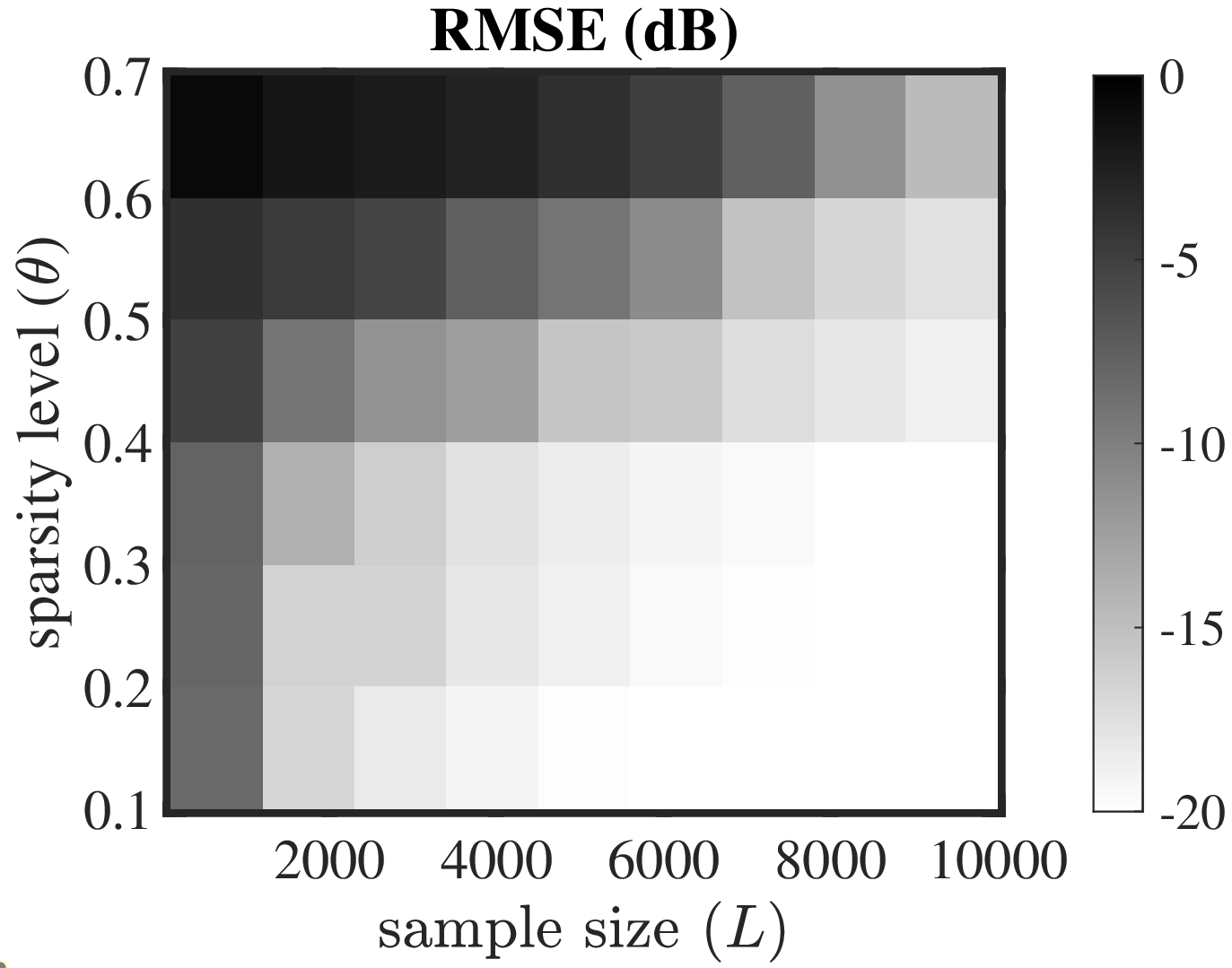}}
		\subfigure[{  Baseline 3: TransLearn}]{\includegraphics[scale=0.3]{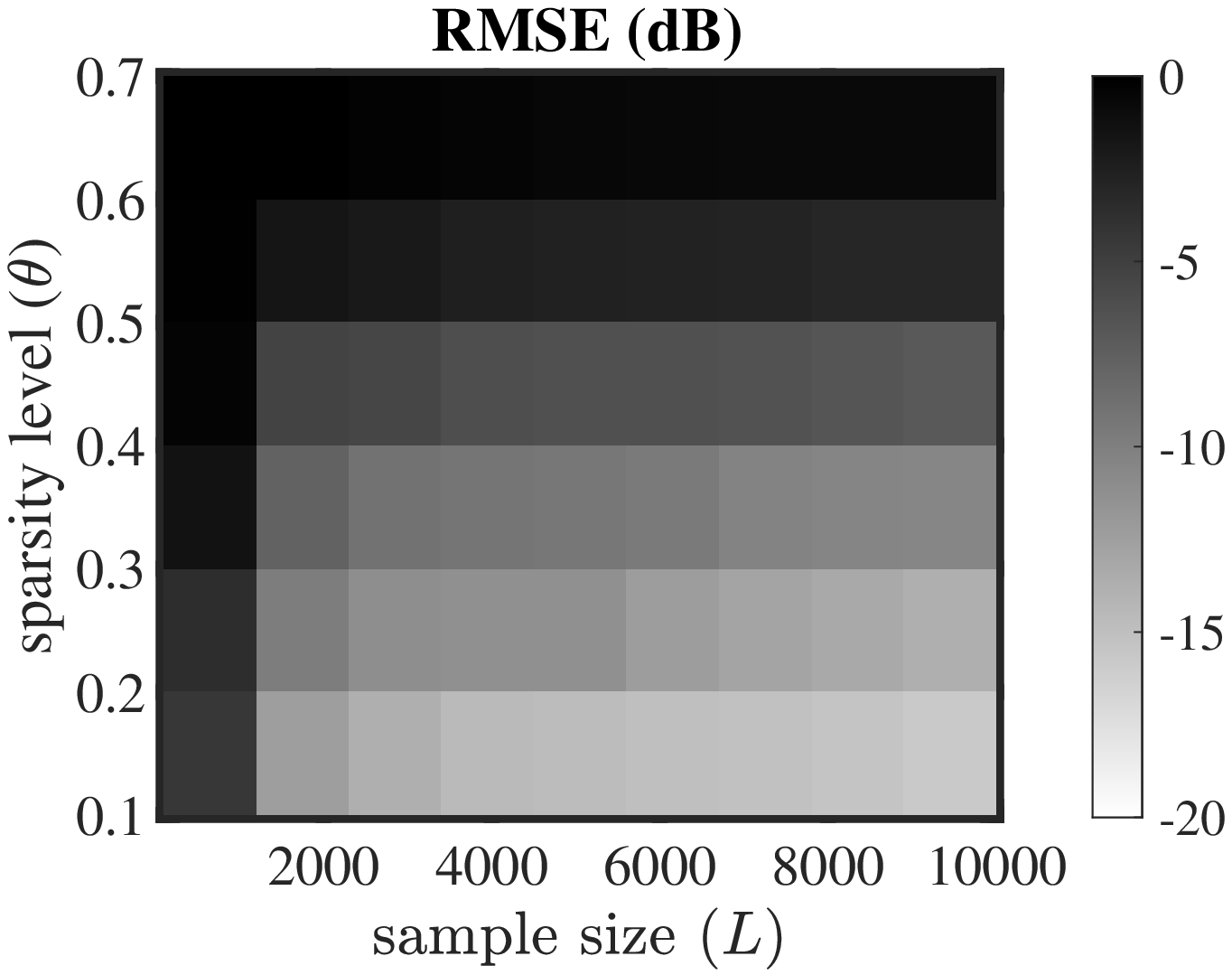}}
		\subfigure[{  Baseline 4: $logcosh$-RTR}]{\includegraphics[scale=0.3]{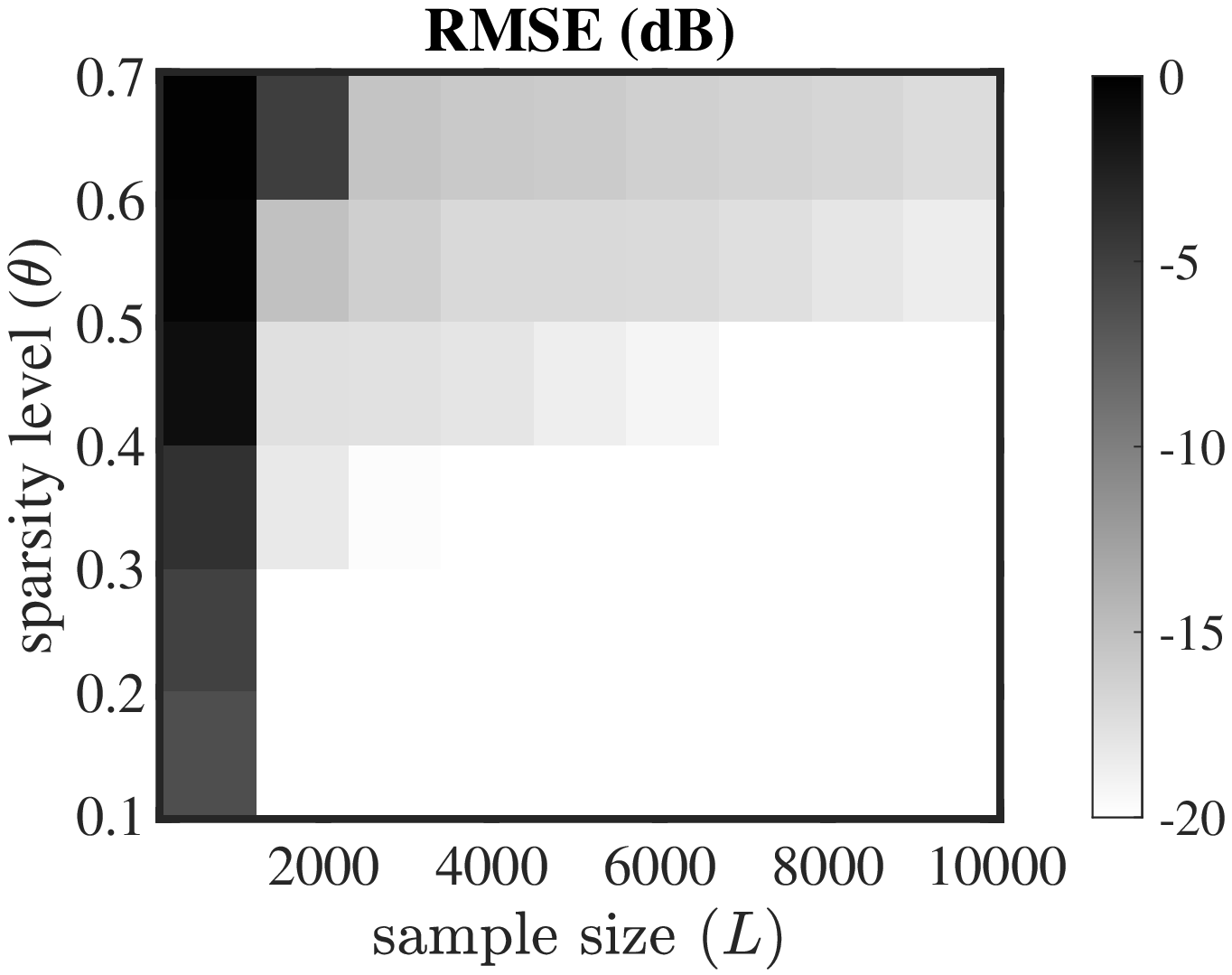}}
		\subfigure[Baseline 3: $\ell_1$-RGD]{\includegraphics[scale=0.3]{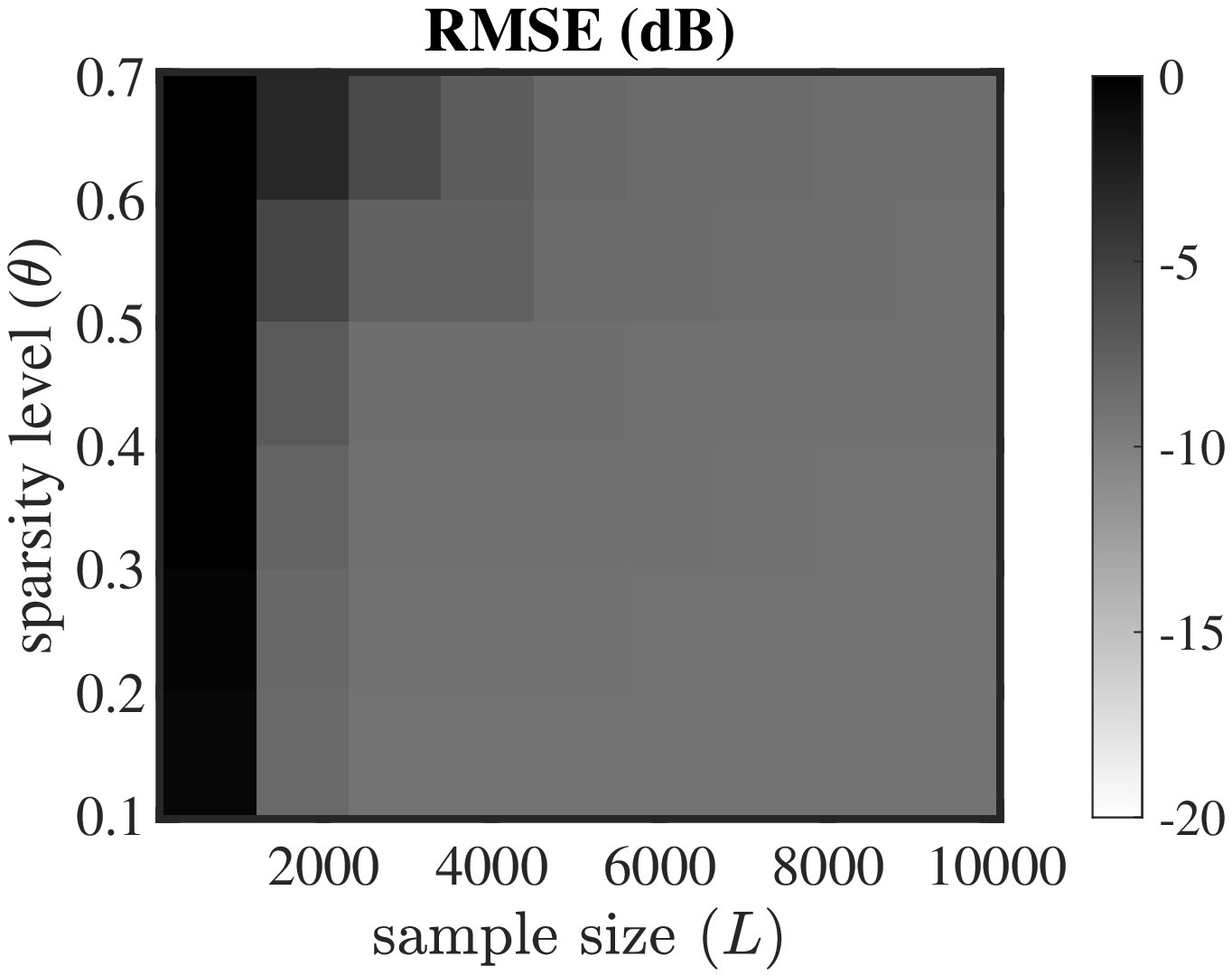}}
		\subfigure[Baseline 2: $\ell_4$-MSP]{\includegraphics[scale=0.3]{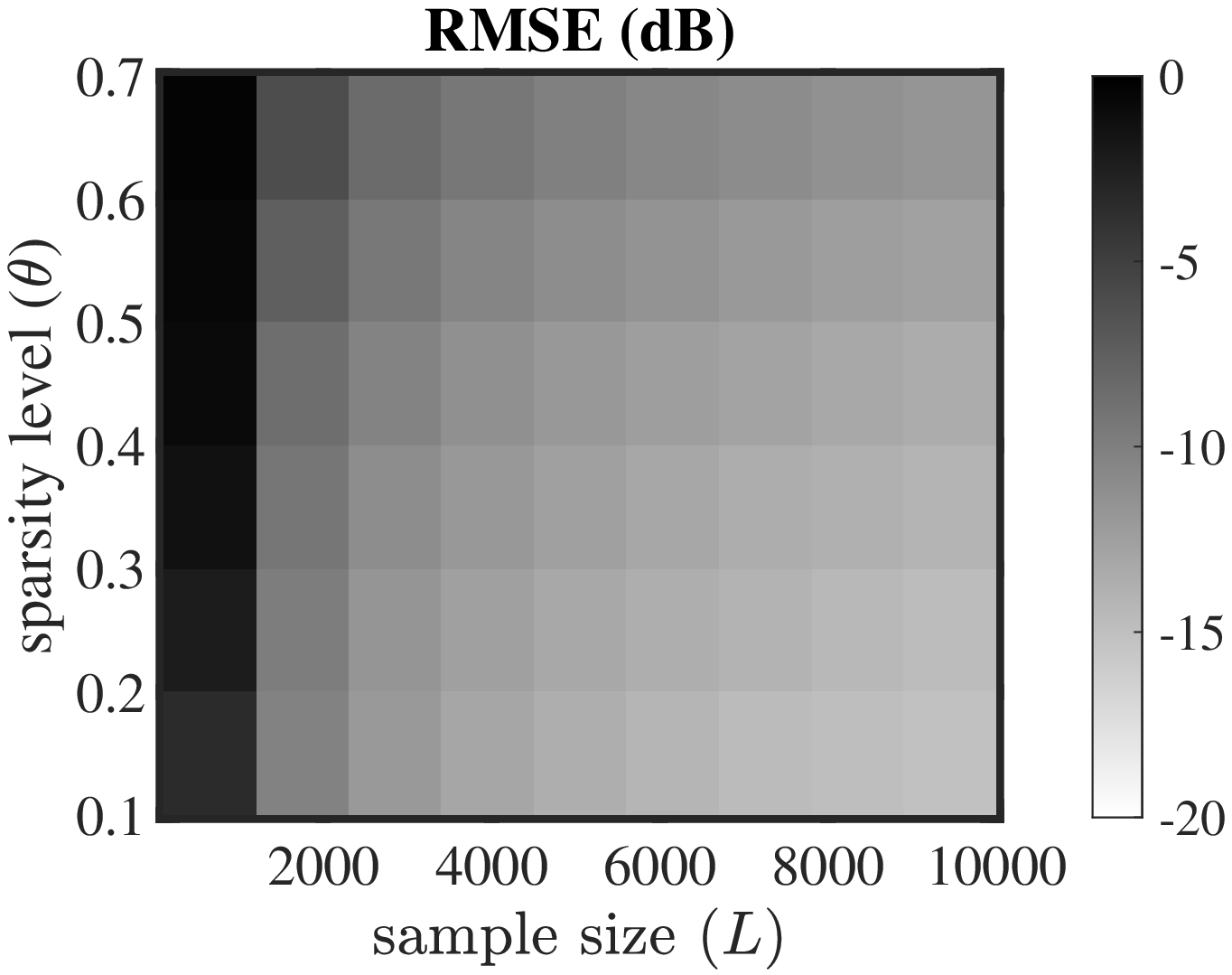}}
		\subfigure[Baseline 1: $\ell_3$-s1]{\includegraphics[scale=0.3]{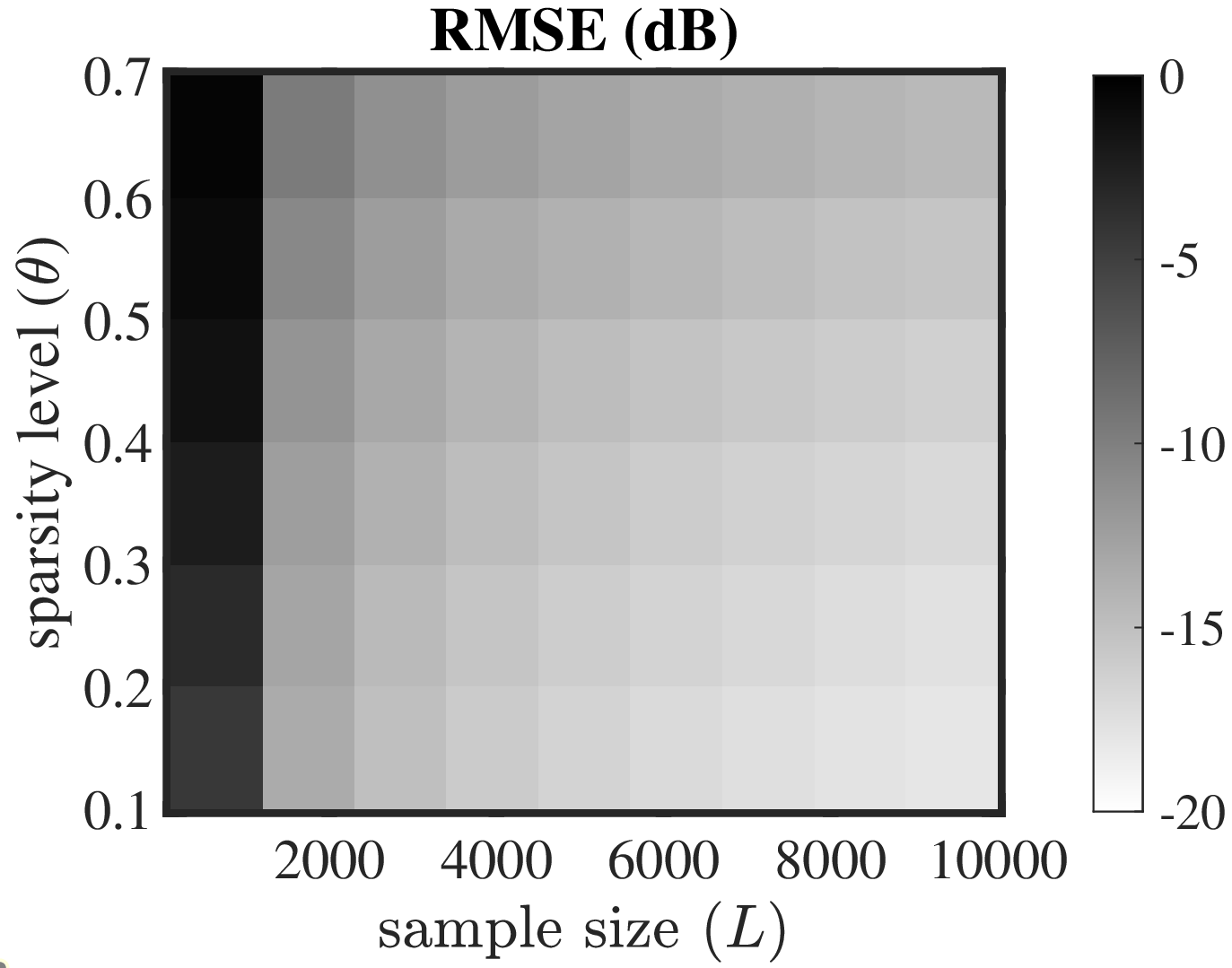}}

		\caption{ RMSE performance comparison on synthetic data.}\label{fig:nmse}
	\end{figure*}
	
	\subsubsection{Execution time comparison}\label{sim:compl}
	To evaluate the computational efficiency of the proposed method, we show the CPU time  comparison against the  dictionary size $N$ and the sample size $L$  in Table \ref{tab:comp1} and Table \ref{tab:comp2}. Each result is averaged over 50 Monte Carlo trials. We see that solely using Stage One of the proposed method, i.e., Baseline 7 ($\ell_3$-s1),  achieves the shortest CPU time, and the proposed scheme has the second  shortest CPU time with a superior dictionary recovery performance, as indicated in Fig. \ref{fig:nmse}.
	
	\begin{table}[htpb]
		\small
		\centering
		\caption{  CPU time comparison  with $L = 1000, \theta = 0.2$ and different $N$}
		\begin{threeparttable}
			{  	\begin{tabular}{ccccc}
					\toprule[2pt]
					{$N$} &{$10$}  &  {$20$}&  {$40$} & {$50$} \\ \hline
					Proposed&0.010s &0.014s& 0.03s &0.04s   \\
					\hline
					K-SVD \cite{rubinstein2008efficient}&0.04s &0.04s & 0.10s &0.14s     \\
					\hline
					SPAMS \cite{jenatton2010proximal,mairal2010online}&0.54s &1.45s & 4.48s &6.82s    \\
					\hline
					TransLearn \cite{ravishankar2012learning}&3.64s &5.80s & 12.38s &18.92s    \\
					\hline
					$logcosh$-RTR \cite{sun2015complete2} &22.60s &51.70s & 128.13s &127.61s   \\
					\hline
					$\ell_1$-RGD \cite{bai2018subgradient} &0.051s &0.26 & 2.32s &5.15s     \\
					\hline
					$\ell_4$-MSP \cite{zhai2020complete} &0.02s &0.04s & 0.09s &0.11s    \\
					\hline
					$\ell_3$-s1  &0.006s &0.010 & 0.02s &0.03s    \\
					\bottomrule[2pt]
			\end{tabular}}	
			\label{tab:comp1}
		\end{threeparttable}
	\end{table}
	\begin{table}[htpb]
		\small
		\centering
		\caption{  CPU time comparison  with $N = 20, \theta = 0.2$ and different $L$}
		\begin{threeparttable}
			{  \begin{tabular}{cccccc}
					\toprule[2pt]
					{$L$} &{$100$}  &  {$2300$}&  {$4500$}  & {$10000$} \\ \hline
					Proposed&0.005s &0.019s& 0.023s &0.09s   \\
					\hline
					K-SVD \cite{rubinstein2008efficient}&0.02s &0.08s & 0.14s &0.30s     \\
					\hline
					SPAMS  \cite{jenatton2010proximal,mairal2010online}&0.35s &1.58s & 1.74s &2.02s    \\
					\hline
					TransLearn  \cite{ravishankar2012learning}&1.95s &13.87s & 23.36s &82.73s    \\
					\hline
					$logcosh$-RTR  \cite{sun2015complete2} &49.25s &50.59s & 54.87s &287.20s   \\
					\hline
					$\ell_1$-RGD \cite{bai2018subgradient} &0.23s &0.29s & 0.34s &2.47s     \\
					\hline
					$\ell_4$-MSP \cite{zhai2020complete} &0.01s &0.09s & 0.16s &0.36s    \\
					\hline
					$\ell_3$-s1  &0.005s &0.016s & 0.019s &0.07s    \\
					\bottomrule[2pt]
			\end{tabular}}	\label{tab:comp2}
		\end{threeparttable}
	\end{table}
	\subsection{Experiments with Real-world Data}
	To evaluate the robustness of the proposed method, we conduct experiments on sensor data compression for WSNs. In a typical WSN, there is a  DC to collect sensor readings and transmit the data to the central server for further processing. Due to the limited communication capability of the DC, efficient data compression is required. In a realistic scenario,  low-cost sensors may fail to report data, which causes a missing data issue in the system, as shown in Fig. \ref{fig:missingdata}.
	\begin{figure}[H]
		\centering
		\includegraphics[width=0.5\linewidth]{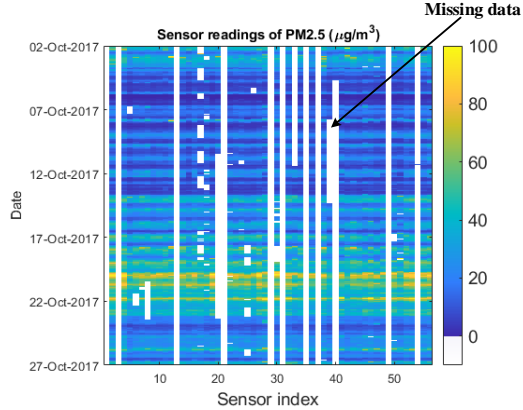}
		\caption{Raw  data of the concentration of  PM2.5 from $56$ sensors in October, 2017 from Krakow, Poland \cite{poland}. The missing data in the dataset are set to $-10$ for visualization. }
		\label{fig:missingdata}
	\end{figure}
	
	Following, we will show that our scheme is not only efficient in compressing the sensor readings but robust to the missing data issue in the sensor datasets \cite{poland}. In the experiment, sensor readings  $\bm{y}_i\in\mathbb{R}^{56}$ are sampled at time index $i$ from  $56$ sensors located at different places. The missing data at each sample time are first filled with the mean value at that time. Then, the $56\times744$ readings, $\{\bm{y}_i\}^{744}_{i=1}$, are sent to the sparse coding methods. After obtaining the dictionary $\hat{\bm{D}}\in\mathbb{O}(56)$ and the sparse code $\hat{\bm{x}}_i\in\mathbb{R}^{56}, \forall i = 1,\ldots, 744$, we can obtain $\hat{\bm{y}_i}$ by  $\hat{\bm{y}_i}=\hat{\bm{D}}\hat{\bm{x}}_i$.  We compare the RMSE and  CPU time  under different compression ratios. The RMSE and  compression ratio in this real-world sensor data compression task  are defined as
	\[
	\text{RMSE}=\sqrt{\frac{\sum^{744}_{i=1}(\hat{\bm{y}}_i({\Lambda})-\bm{y}_i({\Lambda}))^2}{\sum^{744}_{i=1}(\bm{y}_i({\Lambda}))^2}},
	\]
	
	\[
	\text{compression ratio}=\lfloor\frac{56}{ T_0}\rfloor,
	\]
	where $\Lambda$ denotes the complementary set of the indices for the missing data.
	{  As shown by the results  in Table \ref{tab:lp}, the proposed method and Stage One of our proposed method enjoy the fastest  implementation time  ($<0.1$ s) and the proposed two-stage scheme achieves  a superior RMSE performance  on this real-world data compression task.}

	\begin{table*}[htpb]
		\normalsize
		\centering
		\caption{  The performance of different methods for compressing  sensor readings of  PM2.5 in October, 2017 from Krakow, Poland \cite{poland}}
		\begin{threeparttable}
			{  \begin{tabular}{ccccccccccc}
					\toprule[2pt]
					\multicolumn{1}{c}{	Compression ratio} &\multicolumn{2}{c}{$11$  $(T_0=5)$}  &  \multicolumn{2}{c}{$8$ $(T_0=7)$}&  \multicolumn{2}{c}{$5$ $(T_0=11)$} & \multicolumn{2}{c}{$3$ $(T_0=18)$} & \multicolumn{2}{c}{$2$ $(T_0=28)$} \\ \hline
					Methods \tnote{*} &    Time      &    RMSE     & Time    &   RMSE          & Time          &     RMSE  &Time          &     RMSE    &Time          &     RMSE    \\ 
					\hline
					Proposed&     \bf{0.05s}      &     \bf{9.42\%}      &    \bf{0.05s}      &   \bf{  8.10\%}       &   \bf{0.05s}     &   \bf{6.22\%}   & \bf{0.05s}  & \bf{4.07\%} &\bf{0.05s} &  \bf{2.13\%}   \\
					\hline
					K-SVD \cite{rubinstein2008efficient}&     0.07s     &      18.55\%     &     0.09s    &     15.68\%     &   0.12s    &   8.57\%     &0.59s&6.99\%   &0.41s&   4.04\% \\ 
					\hline
					SPAMS \cite{jenatton2010proximal,mairal2010online}&    46.97s    &      26.03\%     &     46.97s      &      20.35\%     &    46.97s       &   13.50\%    &46.97s& 8.16\% &46.97s& 4.36\%   \\
					\hline
					TransLearn \cite{ravishankar2012learning}&    17.91s      &      92.45\%     &     16.35s      &     75.49\%     &   16.51s    &    75.00\%    &16.31s&73.28\%&16.41s&65.05\%   \\
					\hline
					$logcosh$-RTR  \cite{sun2015complete2} &     1704s      &      10.19\%    &      1704s    &      8.59\%    &    1704s       &    6.44\%  & 1704s&4.09\%& 1704s& 2.32\%    \\
					\hline
					$\ell_1$-RGD \cite{bai2018subgradient} &    17.30s      &      11.42\%    &       17.30s    &      9.94\%     &   17.30s      &    7.91\%   & 17.30s&5.56\%& 17.30s&  3.26\%  \\
					\hline
					$\ell_4$-MSP \cite{zhai2020complete} &    0.19s      &      10.18\%     &     0.19s    &      8.83\%      &   0.19s        &    6.93\%     &0.19s&4.60\%&0.19s& 2.44\% \\
					\hline
					$\ell_3$-s1 &    \bf{0.04s}      &      9.58\%     &     \bf{0.04s}     &      8.20\%      &   \bf{0.04s}         &    6.32\%     &\bf{0.04s} &4.18\%&\bf{0.04s} & 2.18\% \\
					\bottomrule[2pt]
			\end{tabular}}	\label{tab:lp}
			\begin{tablenotes}{  
					\footnotesize
					\item[*] The sparse-coding target for K-SVD and the desired sparsity level for TransLearn are set to be $56T_0$. The $\lambda$ for  SPAMS is tuned to be $0.002$ for  better performance. We adopt the deflation in $\ell_1$-RGD to recover the whole dictionary  for it to achieve  better performance. Since both $logcosh$-RTR and $\ell_1$-RGD require the true dictionary as an input, we set the true dictionary to be the result from the proposed method for both methods.}
				
			\end{tablenotes}
		\end{threeparttable}
		
	\end{table*}
	
	\section{Conclusion}\label{sec:con}
	
	In this paper, we proposed a novel HRP sparse coding scheme that facilitates fast implementation for  superb dictionary recovery performance under a finite number of samples. The theoretical interpretation is also established. Experiments on both synthetic  and real-world data verify the efficiency and the robustness of the proposed scheme. Future focus on the application aspect includes the distributed and online version of the proposed scheme.  Since the theoretical results, e.g., the sample complexity and the convergence rate, in this work are  suboptimal compared to the numerical simulation, further work will also focus on pursuing  tighter theoretical results.
	\appendix
	\section{Notations and Technical Lemmas}
	\subsection{Notations}
	\subsubsection{Bernoulli-Gaussian random variable}
	The random vector  $\bm{x}\in\mathbb{R}^N$ with i.i.d Bernoulli-Gaussian elements can be expressed as 
	\begin{equation}
	\bm{x} = \bm{b}\odot\bm{g}.
	\end{equation}
	We have $\bm{b}\in\mathbb{R}^N$ with  elements $\{b_j\}\sim^{i.i.d} \text{Ber}(\theta) \forall j$ and  $\bm{g}\in\mathbb{R}^N$ with elements $\{g_j\}\sim^{i.i.d}\mathcal{N}(0,1)\quad \forall j$. 
	$\Omega$ is used to denote the generic support set	of $\bm{b}$.
	\subsubsection{High-order absolute moments  of the Gaussian random variable}
	The $p$-th-order central absolute moment of Gaussian random variable $g\sim\mathcal{N}(0,\sigma^2)$ is
	\begin{equation}
	\mathbb{E}[|g|^p]=\sigma^p\frac{2^{p/2}\Gamma(\frac{p+1}{2})}{\sqrt{\pi}}.
	\end{equation}
	In the following, we denote $\gamma_p =\frac{2^{p/2}\Gamma(\frac{p+1}{2})}{\sqrt{\pi}}$ for simplicity.
	\subsubsection{Riemannian gradient over sphere} 
	Given a point $\bm{d}\in\mathbb{S}^{N}$, the Riemannian gradient of $f(\bm{d})$ at $\bm{d}$ is obtained by
	\begin{equation}
	\nabla_{grad}f(\bm{d}) = (\bm{I}-\bm{d}\bm{d}^T)\nabla f(\bm{d}).
	\end{equation} 
	
	\subsection{ Technical Lemmas}
	
	\begin{lemma}(Expectation of $\ell_p$ norm for Bernoulli-Gaussian vectors)\label{lem:hiber}
		If the random vector  $\bm{x}\in\mathbb{R}^N$ is with i.i.d Bernoulli-Gaussian elements, then we have
		\begin{equation}
		\begin{aligned}
		&\mathbb{E}_{\bm{x}}[\Vert\bm{d}^T\bm{x}\Vert^p_p]=\mathbb{E}_{\bm{b},\bm{g}}[\Vert\bm{d}^T(\bm{b}\odot\bm{g})\Vert^p_p]\\
		=&\mathbb{E}_{\bm{b},\bm{g}}[\Vert(\bm{d}\odot\bm{b})^T\bm{g}\Vert^p_p]	\overset{(a)}{=}\gamma_p\mathbb{E}_{\Omega}[\Vert\bm{d}_{\Omega}\Vert^p]\overset{(b)}\leq\gamma_p\theta,\\		
		\end{aligned}	
		\end{equation}
		where $(a)$ is because of the rotation-invariant property of the Gaussian random vectors, and $(b)$ is obtained from \cite[Lemma B.1]{shen2020complete}.
	\end{lemma}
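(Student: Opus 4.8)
The plan is to justify the displayed chain of (in)equalities one link at a time, the only genuinely analytic content being the conditional Gaussianity behind step $(a)$; the remaining links are either definitional, algebraic, or the cited bound $(b)$. First I would observe that $\bm{d}^T\bm{x}$ is a scalar, so $\Vert\bm{d}^T\bm{x}\Vert_p^p=|\bm{d}^T\bm{x}|^p$, and then substitute the Bernoulli--Gaussian representation $\bm{x}=\bm{b}\odot\bm{g}$ to obtain the first equality. The second equality is the elementary coordinate identity $\bm{d}^T(\bm{b}\odot\bm{g})=\sum_j d_j b_j g_j=(\bm{d}\odot\bm{b})^T\bm{g}$, which merely regroups the Hadamard product and moves the masking $\bm{b}$ onto $\bm{d}$.

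The heart of the argument is step $(a)$, which I would establish by conditioning on the random support $\Omega=\{j:b_j=1\}$ of $\bm{b}$. Given $\Omega$, the quantity $(\bm{d}\odot\bm{b})^T\bm{g}=\sum_{j\in\Omega}d_j g_j$ is a fixed linear combination of the i.i.d.\ standard Gaussians $\{g_j\}_{j\in\Omega}$; by the rotation-invariance (isotropy) of the Gaussian vector $\bm{g}_\Omega$, this combination is distributed as $\mathcal{N}(0,\Vert\bm{d}_\Omega\Vert^2)$, with variance equal to the squared norm of the coefficient subvector $\bm{d}_\Omega$. Applying the $p$-th-order central absolute moment formula recalled in the appendix, namely $\mathbb{E}[|g'|^p]=\sigma^p\gamma_p$ with $\sigma=\Vert\bm{d}_\Omega\Vert$, gives $\mathbb{E}_{\bm{g}}[\,|(\bm{d}\odot\bm{b})^T\bm{g}|^p\mid\Omega\,]=\gamma_p\Vert\bm{d}_\Omega\Vert^p$. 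Taking the outer expectation over $\Omega$ via the tower property then produces $\gamma_p\,\mathbb{E}_\Omega[\Vert\bm{d}_\Omega\Vert^p]$, which is exactly step $(a)$.

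Finally, step $(b)$ is the bound $\mathbb{E}_\Omega[\Vert\bm{d}_\Omega\Vert^p]\leq\theta$, valid for any $\bm{d}\in\mathbb{S}^{N-1}$ whenever $p\geq2$; this is precisely \cite[Lemma B.1]{shen2020complete}, so I would invoke it directly. Its mechanism is worth recording in case a self-contained version is wanted: since $\Vert\bm{d}_\Omega\Vert\leq\Vert\bm{d}\Vert=1$ for every support $\Omega$ and $p-2\geq0$, one has the pointwise bound $\Vert\bm{d}_\Omega\Vert^p=\Vert\bm{d}_\Omega\Vert^2\,\Vert\bm{d}_\Omega\Vert^{p-2}\leq\Vert\bm{d}_\Omega\Vert^2$, and then $\mathbb{E}_\Omega[\Vert\bm{d}_\Omega\Vert^2]=\mathbb{E}_\Omega[\sum_{j\in\Omega}d_j^2]=\theta\Vert\bm{d}\Vert^2=\theta$. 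I would flag that a naive Jensen step goes the wrong way here: because $\Omega\mapsto\Vert\bm{d}_\Omega\Vert^p=(\sum_{j\in\Omega}d_j^2)^{p/2}$ is a \emph{convex} function of the membership vector $\bm{b}$, Jensen only yields the lower bound $\theta^{p/2}$, so the norm-monotonicity argument above (rather than convexity) is the natural route to the desired upper bound. The only genuinely probabilistic obstacle is therefore the conditional Gaussianity of $(\bm{d}\odot\bm{b})^T\bm{g}$ in $(a)$; everything else is a definition, an algebraic regrouping, or the cited norm bound, so no serious difficulty remains.
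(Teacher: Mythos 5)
Your proposal is correct and follows essentially the same route as the paper: the paper's own justification is exactly the annotated chain (substitute $\bm{x}=\bm{b}\odot\bm{g}$, regroup the Hadamard product, condition on the support $\Omega$ and use rotation invariance plus the Gaussian absolute-moment formula $\mathbb{E}[|g'|^p]=\sigma^p\gamma_p$ for $(a)$, then the cited bound for $(b)$), which you have merely spelled out in full. Your self-contained mechanism for $(b)$ — using $\Vert\bm{d}_\Omega\Vert\leq\Vert\bm{d}\Vert=1$ to get $\Vert\bm{d}_\Omega\Vert^p\leq\Vert\bm{d}_\Omega\Vert^2$ and then $\mathbb{E}_\Omega[\Vert\bm{d}_\Omega\Vert^2]=\theta$ — is also precisely the argument the paper itself invokes in the analogous step of its concentration proof, so there is no substantive divergence.
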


	\section{Proof of Lemma \ref{lem:tang}}\label{proof:tang}
	\begin{itemize}	
		\item We can express any matrix $\bm{\Omega}\in\mathcal{S}_{skew}(N)$ by $\bm{\Omega}=(\bm{A}-\bm{A}^T)$ with some $\bm{A}\in\mathbb{R}^N$. Hence, we have $\bm{\Delta}=\bm{X}(\bm{A}-\bm{A}^T)$. Therefore, $\bm{X}^T(\bm{X}+\bm{\Delta})+(\bm{X}+\bm{\Delta})^T\bm{X}=2\bm{I}$. \\
		\item If $\bm{X}^T\bm{Z}+\bm{Z}^T\bm{X}=2\bm{I}$, let $\bm{\Delta}=\bm{X}\bm{\Omega}$, and we have $\bm{X}^T(\bm{X}+\bm{\Delta})+(\bm{X}+\bm{\Delta})^T\bm{X}=2\bm{I}$. Finally we have $\bm{\Omega}+\bm{\Omega}^T = \bm{0}$, that indicates $\bm{\Omega}\in \mathcal{S}_{skew}(N)$.
	\end{itemize}
	
	\section{Proof of Lemma \ref{thm:concent}}\label{proof:concent}
	\subsection{Concentration of Function Value }\label{confuncval}
	Since $\{\bm{x}_i\}^L_{i=1}$ are i.i.d with each element $x_{i,j} \sim_{i.i.d} \mathcal{BG}(\theta)$, it is easy to verify that $x_{i,j}\forall i= 1,\ldots,L; j=1,\ldots, N$ has a sub-Gaussian tail in the sense that $Pr[|x_{i,j}|\geq t]\leq e^{\frac{t^2}{2}}$, and $\Vert\bm {d}^T\bm{x}_i\Vert^3_3$ has a heavy tail in the sense that $Pr[\Vert\bm {d}^T\bm{x}_i\Vert^3_3\geq t]\leq 2e^{-C\sqrt{t}},\forall i = 1,\ldots,L$. Therefore, we  show the proof by checking the conditions in  \cite[Corollray F.2]{qu2019geometric}.
	Define
	\begin{equation}
	g_{\bm{d}}(\bm{x}_i)=-\Vert\bm {d}^T\bm{x}_i\Vert^3_3.
	\end{equation} 
	We have that  $\mathbb{E}_{\bm{x}_i}[g_{\bm{d}}(\bm{x}_i)]$ is bounded and Lipschitz by (\ref{eq:P1Bf}) and (\ref{eq:P1Lf}), as shown in the following:
	\begin{equation}\label{eq:P1Bf}
	\begin{aligned}	
	&\Vert\mathbb{E}_{\bm{x}_i}[g_{\bm{d}}(\bm{x}_i)]\Vert\\
	\overset{(a)}{=}&  \gamma_3 \mathbb{E}_{\Omega}[\Vert(\bm{d}^T)_{\Omega}\Vert^3] 	\overset{(b)}{\leq} \gamma_3 \mathbb{E}_{\Omega}[\Vert(\bm{d}^T)_{\Omega}\Vert^2]= \gamma_3 \theta,
	\end{aligned}
	\end{equation}
	where  $(a)$ is due to Lemma \ref {lem:hiber}, $(b)$ is because of $\|(\bm{d}^T)_{\Omega}\| \leq 1$, and the equality holds only if $\|\bm{d}^T\|_0 = 1$. We also have
	\begin{equation}\label{eq:P1Lf}
	\begin{aligned}	
	&\Vert\mathbb{E}_{\bm{x}_i}[g_{\bm{d}_1}(\bm{x}_i)]-\mathbb{E}_{\bm{x}_i}[g_{\bm{d}_2}(\bm{x}_i)]\Vert\\
	=& \gamma_3 \Vert\mathbb{E}_{\Omega}[\Vert(\bm{d}_1^T)_{\Omega}\Vert^3] -	\mathbb{E}_{\Omega}[\Vert(\bm{d}_2^T)_{\Omega}\Vert^3]\Vert\\
	\leq& \gamma_3\mathbb{E}_{\Omega}[\Vert(\bm{d}_1^T)_{\Omega}\Vert^3-\Vert(\bm{d}_2^T)_{\Omega}\Vert^3]\leq 4\gamma_3\Vert\bm{d}_1-\bm{d}_2\Vert.\\
	\end{aligned}
	\end{equation}

	Then, consider $\bar{\bm{x}}_i$ as a truncation of $\bm{x}_i$, such that
	\begin{equation}
	\begin{aligned} \label{eq:f3}
	&\bm{x}_i = \bar{\bm{x}}_i + \hat{\bm{x}}_i, &&\bar{x}_{i,j} = \left\{
	\begin{aligned}
	&x_{i,j}, &&\text{if } |x_{i,j}| \leq B \\
	&0, &&\text{otherwise},
	\end{aligned}
	\right.
	\end{aligned} 
	\end{equation}
	with $B = 2\sqrt{\log(NL)}$. For the truncated vector $\bar{\bm{x}_i}$, we have
	
	\begin{equation}\label{eq:P1R1}
	\begin{aligned}	
	&\Vert g_{\bm{d}}(\bar{\bm{x}_i})\Vert=\Vert\bm{d}^T\bar{\bm{x}_i}\Vert_3^3 \leq\Vert\bm{d}\Vert^3\Vert\bar{\bm{x}_i}\Vert^3 \leq\Vert\bar{\bm{x}_i}\Vert^3\\
	\leq&(B^2\Vert\bar{\bm{x}_i}\Vert_0)^{\frac{3}{2}}=(4B^2\theta N\log L)^{\frac{3}{2}},
	\end{aligned}
	\end{equation}
	with a probability of at least $1-L^{-2\theta N}$. The last inequality follows \cite[Lemma A.4]{zhang2019structured}.
	In addition, we have
	\begin{equation}\label{eq:P1R2}
	\begin{aligned}	
	&\mathbb{E}_{\bar{\bm{x}_i}}[\Vert g_{\bm{d}}(\bar{\bm{x}_i})\Vert^2]\leq \mathbb{E}_{\bm{x}_i}[\Vert g_{\bm{d}}({\bm{x}_i})\Vert^2]
	= \gamma_6 \mathbb{E}_{\Omega}[\Vert(\bm{d}^T)_{\Omega}\Vert^6]\\
	\leq&\gamma_6\theta.\\
	\end{aligned}
	\end{equation}
	Furthermore, we have
	\begin{equation}\label{eq:P1barLf}
	\begin{aligned}	
	&\Vert g_{\bm{d}_1}(\bar{\bm{x}_i})-g_{\bm{d}_2}(\bar{\bm{x}_i})\Vert\leq 3 \Vert\bm{d}_1-\bm{d}_2\Vert^3\Vert\bar{\bm{x}_i}\Vert^3\\
	\leq& 3(4B^2\theta N\log L)^{\frac{3}{2}}\Vert\bm{d}_1-\bm{d}_2\Vert.
	\end{aligned}
	\end{equation}
	
	Summarizing the  results from (\ref{eq:P1Bf}),(\ref{eq:P1Lf}),(\ref{eq:P1R1}),(\ref{eq:P1R2}), and (\ref{eq:P1barLf}), we obtain
	\begin{equation}\label{eq:final value}
	\begin{aligned}
	&B_f = \gamma_3 \theta,\quad L_f = 4\gamma_3, \quad R_2 = \gamma_6\theta\\
	&R_1 = (4B^2\theta N\log L)^{\frac{3}{2}},\quad \bar{L}_f=3(4B^2\theta N\log L)^{\frac{3}{2}}.\\
	\end{aligned}
	\end{equation}
	We also have $\frac{1}{L}\sum^L_{i=1}g_{\bm{d}}({\bm{x}_i}) = f(\bm{d})$ and $\mathbb{E}[g_{\bm{d}}({\bm{x}})]=\mathbb{E}[f(\bm {d})]$. 
	Substituting values in (\ref{eq:final value}) into \cite[Corollray F.2]{qu2019geometric}, for any $\delta\in(0,c_{f1}/(N\log(L)\log(NL)\theta^{\frac{1}{3}})^{3/2})$ and  $L\geq C_{f1} \delta^{-2} N\theta\log (\frac{(N\theta \log(NL)\log(L))^{3/2}}{\delta}) $, we have
	\begin{align*}
	&Pr\Big[\underset{\bm{d} \in \mathbb{S}^{N-1}}{\sup}\left\|f(\bm{d}) - \mathbb{E}[f(\bm {d})] \right\| \leq  \delta\Big]\geq 1-L^{-1}. \\
	\end{align*}
	This finishes the proof.
	
	\subsection{Concentration of Riemannian Gradient }\label{sec:conrg}
	Similarly, we define 
	\begin{equation}
	g_{\bm{d}}(\bm{x}_i)=\nabla_{grad}g(\bm{d})=-3(\bm{I}-\bm{d}\bm{d}^T)(\bm{d}^T\bm{x}_i)|\bm{d}^T\bm{x}_i|\bm{x}_i,
	\end{equation}
	and follow the same procedures as in Section \ref{confuncval}, 
	and we obtain 
	\begin{equation}\label{eq:final g}
	\begin{aligned}
	&B_f = C_{bf} \theta,\quad  L_f =C_{lf}N\theta,\quad R_2 = 9C_{r2}\theta \\
	&R_1 = 3(4B^2\theta N\log L)^{\frac{3}{2}},\quad \bar{L}_f=12(4B^2\theta N\log L)^{\frac{3}{2}},\\
	\end{aligned}
	\end{equation}
	where $ C_{bf},C_{lf}$, and $C_{r2}$ are some absolute constants. Substituting the values in (\ref{eq:final g}) into  \cite[Corollray F.2]{qu2019geometric},  for any $\delta\in(0,c_{d1}/(N\log(L)\log(NL)\theta^{\frac{1}{3}})^{3/2})$ and  $L\geq C_{d1} \delta^{-2} N\theta\log (\frac{(N\theta \log(NL)\log(L))^{3/2}}{\delta}) $, we have
	\begin{align*}
	&Pr[\underset{\bm{d} \in \mathbb{S}^{N-1}}{\sup}\Vert\nabla_{grad}f(\bm{d})-\nabla_{grad}\mathbb{E}[f(\bm{d})]\Vert\leq \delta]\geq 1-L^{-1}. \\
	\end{align*}
	Summarizing the results in \ref{confuncval} and \ref{sec:conrg}, we finish the proof.
	
	\section{Proof of Lemma \ref{thm:station}}\label{proof:station}
	We first show that if $\bm{d}\in\mathcal{S}^\prime$, then $\bm{d}$ is the stationary point:
	\begin{equation}\label{eq:stationset}
	\mathcal{S}^\prime=\Big\{\frac{1}{\sqrt{k}}\bm{d}:\bm{d}\in\{-1,0,1\}^N,\Vert\bm{d}\Vert_0=k,k\in[N]\Big\}.
	\end{equation}
	According to  Lemma \ref{lem:hiber} and the interchange of a derivative and an integral, we have $\mathbb{E}[\nabla f(\bm{d})] = -3\gamma_3\theta\mathbb{E}_\Omega[\Vert\bm{d}_\Omega\Vert\bm{d}_\Omega]$.  Let $\bm{d}\in\mathcal{S}^\prime$ with $\Vert\bm{d}\Vert_0= k$ for any $k\in[N]$, and define the support of $\bm{d}$ as $\Omega_{\bm{d}}$. We have,
	for all $j\in\Omega_{\bm{d}}$,
	\begin{equation}\label{eq:stationcal}
	\begin{aligned}
	&\bm{e}^T_j\mathbb{E}[\nabla f(\bm{d})]
	= -3\gamma_3\theta d_j\mathbb{E}_\Omega\Big[\sqrt{d_j^2+\Vert\bm{d}_{\Omega\backslash\{j\}}\Vert^2}\Big]\\
	= & -3\gamma_3 d_j\sum_{i=1}^k\theta^i(1-\theta)^{k-i}\sqrt{i/k}
	= \chi(\theta,k)d_j,
	\end{aligned}
	\end{equation}
	and for all $j\notin\Omega_{\bm{d}}$,
	\begin{equation}
	\bm{e}^T_j\mathbb{E}[\nabla f(\bm{d})]=0.
	\end{equation}
	Therefore, we have $\mathbb{E}[\nabla f(\bm{d})] =  \chi(\theta,k)\bm{d}$. Hence, if $\bm{d}\in\mathcal{S}^\prime$, we have 
	\begin{equation}
	\nabla_{grad}\mathbb{E}[f(\bm{d})]=(\bm{I}-\bm{d}\bm{d}^T)\mathbb{E}[\nabla f(\bm{d})]=\bm{0}.
	\end{equation}
	Then, if $\bm{d}\notin\mathcal{S}^\prime$,  suppose $\bm{d}$ has a maximum
	absolute value coordinate, $n$, and another non-zero coordinate, $n'$, with a strictly smaller absolute
	value. Going through all the indices $n\in[N], n'\neq n$ in the proof of Lemma \ref{thm:Benipop}, we have
	$\langle \nabla_{grad}\mathbb{E}[f(\bm{d})],\frac{1}{d_{n'}}\bm{e}_{n'}-\frac{1}{d_n}\bm{e}_n \rangle > 0$. This implies that $\nabla_{grad}\mathbb{E}[f(\bm{d})]\neq\bm{0}.$

	Next, we  show that $\bm{d}=\pm\bm{e}_i, \forall i\in[N]$ are the global optimal points by
	\begin{equation}
	\begin{aligned}	
	&\mathbb{E}[f(\bm{d})]=\mathbb{E}[-\frac{1}{L}\sum^L_{i=1}\Vert\bm {d}^T\bm{y}_i\Vert^3_3]\\
	=&\gamma_3\mathbb{E}_\Omega[-\Vert\bm {d}_\Omega\Vert^3]\geq\gamma_3\mathbb{E}_\Omega[-\Vert\bm {d}_\Omega\Vert^2]=-\theta\gamma_3.
	\end{aligned}
	\end{equation}
	This finishes the proof.

	\section{Proof of Lemma \ref{thm:Benipop}}\label{proof:Benipop}
	The proof is similar to  \cite[B.4]{bai2018subgradient}. Specifically, for all $n'\in [N], n'\neq N$ and $d_{n'}\neq0$, we have
	\begin{equation}
	\begin{aligned}
	&\langle \nabla_{grad}\mathbb{E}[f(\bm{d})],\frac{1}{d_{n'}}\bm{e}_{n'}-\frac{1}{d_N}\bm{e}_N \rangle\\
	=&\langle -3(\bm{I}-\bm{d}\bm{d}^T)\mathbb{E}[\nabla f(\bm{d})],\frac{1}{d_{n'}}\bm{e}_{n'}-\frac{1}{d_N}\bm{e}_N \rangle\\
	=&\langle -3\mathbb{E}[\nabla f(\bm{d})],\frac{1}{d_{n'}}\bm{e}_{n'}-\frac{1}{d_N}\bm{e}_N \rangle\\
	=&\langle-3\gamma_3\mathbb{E}_{\Omega}[\Vert\bm{d}_{\Omega}\Vert\bm{d}_{\Omega}] ,\frac{1}{d_{n'}}\bm{e}_{n'}-\frac{1}{d_N}\bm{e}_N \rangle\\
	=&3\gamma_3\theta\mathbb{E}_\Omega\Big[\sqrt{d^2_N+\Vert\bm{d}_{\Omega\backslash \{N\}}\Vert^2}\Big]-\gamma_3\theta\mathbb{E}_\Omega\Big[\sqrt{d^2_{n'}+\Vert\bm{d}_{\Omega\backslash\{n'\}}\Vert^2}\Big]\\
	=&3\gamma_3\theta(1-\theta)\mathbb{E}_\Omega\int_{d^2_{n'}}^{d^2_{N}}\frac{1}{2}\big(t+\Vert\bm{d}_{\Omega\backslash \{N,n'\}}\Vert^2\big)^{-\frac{1}{2}}dt\\
	&\geq\frac{3\gamma_3}{2}\theta(1-\theta)\frac{{\zeta_{0}}}{1+{\zeta_{0}}}d^2_N\geq\frac{3\gamma_3}{2N}\theta(1-\theta)\frac{{\zeta_{0}}}{1+{\zeta_{0}}}.
	\end{aligned}
	\end{equation}
	This finishes the proof.
	
	\section{Proof of Lemma \ref{thm:Beni}}\label{proof:Beni}
	To prove Lemma  \ref{thm:Beni}, we have
	\begin{equation}
	\begin{aligned}
	&\langle \nabla_{grad}f(\bm{d}),\frac{1}{d_{n'}}\bm{e}_{n'}-\frac{1}{d_N}\bm{e}_N \rangle\\
	=&\langle \nabla_{grad}\mathbb{E}[f(\bm{d})]+\nabla_{grad}f(\bm{d})-\nabla_{grad}\mathbb{E}[f(\bm{d})]\\
	&,\frac{1}{d_{n'}}\bm{e}_{n'}-\frac{1}{d_N}\bm{e}_N \rangle\\
	\geq&\langle \nabla_{grad}\mathbb{E}[f(\bm{d})],\frac{1}{d_{n'}}\bm{e}_{n'}-\frac{1}{d_N}\bm{e}_N \rangle\\
	\geq&\langle \nabla_{grad}\mathbb{E}[f(\bm{d})],\frac{1}{d_{n'}}\bm{e}_{n'}-\frac{1}{d_N}\bm{e}_N \rangle\\
	&-\underset{\bm{d} \in \mathbb{S}^{N-1}}{\sup}\Vert\nabla_{grad}f(\bm{d})-\nabla_{grad}\mathbb{E}[f(\bm{d})]\Vert\cdot\Vert\frac{1}{d_{n'}}\bm{e}_{n'}-\frac{1}{d_N}\bm{e}_N \Vert.\\
	\end{aligned}
	\end{equation}
	Using Lemma \ref{thm:concent}, we finish the proof.
	
	\section{Proof of Lemma \ref{lem:conver}}\label{proof:stage1GPMconv}
	We first  prove that the GPM iteration remains in that subset. Then, we show that the GPM converges to the stationary point in that good subset.
	
	\subsection{Sequence Generated by GPM  Remains in the Initialized Good Subset}\label{seq}
	We then prove that the sequence generated by GPM will remain within the subset $\mathcal{S}^{(n+)}_{\zeta_{0}}$ if $\bm{d}^{(0)}\in\mathcal{S}^{(n+)}_{\zeta_{0}}$.
	As indicated in \cite{qu2019geometric}, we have that the GPM can be regarded as a Riemannian gradient descent with adaptive stepsize $\tau^{(t_1)} = -\frac{1}{(\bm{d}^{(t_1)})^T\nabla f(\bm{d}^{(t_1)})}$ since
	\begin{equation}
	\begin{aligned}
	&\mathcal{P}_{\mathbb{S}^{N-1}}(\bm{d}^{(t_1)}-\tau^{(t_1)} \nabla_{grad}f(\bm{d}^{(t_1)}))\\
	=&\mathcal{P}_{\mathbb{S}^{N-1}}(-\tau^{(t_1)} \nabla f(\bm{d}^{(t_1)})+\underbrace{(1+\tau^{(t_1)} (\bm{d}^{(t_1)})^T\nabla f(\bm{d}^{(t_1)}))}_{=0}\bm{d}^{(t_1)})\\
	=&\mathcal{P}_{\mathbb{S}^{N-1}}(-\nabla f(\bm{d}^{(t_1)}))=Polar(-\nabla f(\bm{d}^{(t_1)})),
	\end{aligned}
	\end{equation}
	with $\tau^{(t_1)} = -\frac{1}{(\bm{d}^{(t_1)})^T\nabla f(\bm{d}^{(t_1)})}$. Without loss of generality, we assume that  $\bm{d}^{(0)}\in\mathcal{S}^{(N+)}_{\zeta_{0}}$. Then defining $\bm{g}=\nabla_{grad} f(\bm{d})$, we have the following lemma.
	\begin{lemma}\label{lem:staylargethan1}
		Let $\bm{x}_i \in \mathbb{R}^{N}, \bm{x}_{i} \sim^{i.i.d} \mathcal{BG}(\theta)$ with $\theta \in (\frac{1}{N},\frac{1}{12}\sqrt{\frac{\pi}{2}})$, $\bm{D}^*=\bm{I}$,  $\bm{y}_i = \bm{D}^*\bm{x}^*_i, \forall i$, and $\bm{d}\in \mathcal{S}^{(N+)}_{\zeta_{0}}, \zeta_0\in(0,1)$. For any $n'\in[N]$, $n'\neq N$,  whenever  $\delta\in(0,c'_{2}/(N\log(L)\log(NL)\theta^{\frac{1}{3}})^{3/2})$ and  $L\geq C'_{2} \delta^{-2} N\theta\log (\frac{(N\theta \log(NL)\log(L))^{3/2}}{\delta}) $, we have
		\begin{equation}
		Pr[\frac{\tau^{(t_1)}}{1-\tau^{(t_1)}g^{(t_1)}_{n'}/d^{(t_1)}_{n'}}\geq 1]\geq 1-L^{-1}.
		\end{equation}
	\end{lemma}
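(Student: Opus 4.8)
The plan is to strip away the adaptive step size, reduce the claimed inequality to a one-coordinate estimate on the Euclidean gradient, establish that estimate at the population level with a fixed margin, and then transfer it to finite samples via Lemma~\ref{thm:concent}.

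First I would eliminate $\tau^{(t_1)}$. Since $f(\bm d)=-\frac1L\sum_i|\bm d^T\bm y_i|^3$ is homogeneous of degree $3$, Euler's identity gives $\bm d^T\nabla f(\bm d)=3f(\bm d)<0$, so $\tau^{(t_1)}=-1/(\bm d^T\nabla f)>0$ is well defined and positive. Writing the Riemannian gradient as $\bm g=\nabla_{grad}f=\nabla f-(\bm d^T\nabla f)\bm d$, a direct substitution yields
\[
1-\tau^{(t_1)}\frac{g_{n'}^{(t_1)}}{d_{n'}^{(t_1)}}=\frac{(\nabla f)_{n'}}{(\bm d^T\nabla f)\,d_{n'}^{(t_1)}}=-\tau^{(t_1)}\frac{(\nabla f)_{n'}}{d_{n'}^{(t_1)}},
\]
and hence
\[
\frac{\tau^{(t_1)}}{1-\tau^{(t_1)}g_{n'}^{(t_1)}/d_{n'}^{(t_1)}}=\frac{d_{n'}^{(t_1)}}{-(\nabla f)_{n'}}.
\]
Thus the lemma is \emph{equivalent} to showing $0<-(\nabla f)_{n'}/d_{n'}^{(t_1)}\le 1$ (which also makes the denominator positive), and that is what I would prove.

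Next I would settle the population version. From the computation in the proof of Lemma~\ref{thm:station}, $\mathbb E[\nabla f(\bm d)]=-3\gamma_3\theta\,\mathbb E_\Omega[\|\bm d_\Omega\|\bm d_\Omega]$, so
\[
\frac{-\mathbb E[(\nabla f)_{n'}]}{d_{n'}}=3\gamma_3\theta\,\mathbb E_\Omega\!\big[\sqrt{d_{n'}^2+\|\bm d_{\Omega\setminus\{n'\}}\|^2}\big]\in(0,\,3\gamma_3\theta],
\]
where the upper bound uses $\sqrt{d_{n'}^2+\|\bm d_{\Omega\setminus\{n'\}}\|^2}\le\|\bm d\|=1$. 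Since $\gamma_3=2\sqrt{2/\pi}$ and $\theta<\frac{1}{12}\sqrt{\pi/2}$, one gets $3\gamma_3\theta<\tfrac12$, so the population ratio lies in $(0,\tfrac12]$, bounded away from $1$ by a fixed margin. This is precisely where the sparsity restriction $\theta<\frac{1}{12}\sqrt{\pi/2}$ enters. To pass to finite samples I would invoke Lemma~\ref{thm:concent}: writing $-(\nabla f)_{n'}/d_{n'}=-g_{n'}/d_{n'}-3f$, the function-value bound $|f-\mathbb E[f]|\le\delta$ controls the $-3f$ term to within $3\delta$, and the Riemannian-gradient bound $\sup_{\bm d}\|\bm g-\mathbb E[\bm g]\|\le\delta$ controls $|g_{n'}-\mathbb E[g_{n'}]|\le\delta$, both simultaneously with probability at least $1-L^{-1}$ under the stated ranges of $\delta$ and $L$. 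Combined with the population value, this keeps $-(\nabla f)_{n'}/d_{n'}$ inside $(0,1]$ and yields the claim.

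The delicate point, and the step I expect to be the main obstacle, is exactly the division by $d_{n'}^{(t_1)}$. In $\mathcal S^{(N+)}_{\zeta_0}$ the off-target coordinate $d_{n'}$ is not bounded away from $0$, so the additive error $|g_{n'}-\mathbb E[g_{n'}]|\le\delta$ is amplified by $1/|d_{n'}|$ and could, for very small $d_{n'}$, overwhelm the margin $1/2$. I would resolve this by showing that the fluctuation of $(\nabla f)_{n'}$ itself scales with $|d_{n'}|$: decomposing $\bm d^T\bm y_i=d_{n'}x_{i,n'}+\bm d_{-n'}^T\bm x_{i,-n'}$ and isolating the $d_{n'}$-linear contribution, so that the object to be concentrated is the ratio $(\nabla f)_{n'}/d_{n'}$ (whose population value is the benign $3\gamma_3\theta\,\mathbb E_\Omega[\cdot]$) rather than $(\nabla f)_{n'}$ alone. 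The concentration error then enters multiplicatively and is absorbed by the fixed margin. The remaining pieces — the algebraic reduction and the population estimate — are routine once this uniform control over the good subset is in place.
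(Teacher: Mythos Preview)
Your approach matches the paper's: both reduce the ratio algebraically to $-d_{n'}/(\nabla f)_{n'}$, compute the population value $3\gamma_3\theta\,\mathbb E_\Omega\big[\sqrt{d_{n'}^2+\|\bm d_{\Omega\setminus\{n'\}}\|^2}\big]\le 3\gamma_3\theta<1$ (this is exactly where $\theta<\tfrac{1}{12}\sqrt{\pi/2}$ is used), and then transfer to finite samples by a uniform concentration bound. The one difference is that the paper does not route through Lemma~\ref{thm:concent} via your decomposition $-(\nabla f)_{n'}/d_{n'}=-g_{n'}/d_{n'}-3f$; it instead invokes a concentration bound for the \emph{Euclidean} gradient, $\sup_{\bm d}\|\nabla f-\nabla\mathbb E[f]\|\le\delta$, proved by the same template as Lemma~\ref{thm:concent}. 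This is marginally more direct since it avoids splitting into Riemannian-gradient and function-value pieces, but the substance is identical.

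On the obstacle you flag --- the $1/|d_{n'}|$ amplification --- the paper's proof confronts exactly the same issue and handles it by relabeling the fluctuation as $\delta=|d_{n'}|\epsilon'$ and then asserting that, under the stated conditions, $3\gamma_3\theta\,\mathbb E_\Omega[\cdot]+\epsilon'\le 1$. This is no more rigorous than your proposed fix: because the sample quantity $(\nabla f)_{n'}$ does \emph{not} vanish at $d_{n'}=0$ (only its expectation does), a uniform-in-$\bm d$ bound of the form $|(\nabla f)_{n'}-\mathbb E[(\nabla f)_{n'}]|\le |d_{n'}|\epsilon'$ with fixed $\epsilon'$ cannot hold over all of $\mathcal S^{(N+)}_{\zeta_0}$, and ``concentrating the ratio'' as you suggest runs into the same wall. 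Your instinct that this step needs a genuinely separate argument (e.g.\ restricting to coordinates with $|d_{n'}|$ bounded below, or treating $d_{n'}\approx 0$ by a different mechanism) is correct; the paper simply does not supply one either.
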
 
	
	\begin{proof} For $\epsilon' >0$, we have
		\begin{equation}
		\begin{aligned}
		&\frac{\tau^{(t_1)}}{1-\tau^{(t_1)}g^{(t_1)}_{n'}/d^{(t_1)}_{n'}}=-\frac{1}{(\bm{d}^{(t_1)})^T\nabla f(\bm{d}^{(t_1)})}\\
		&\cdot\frac{d^{(t_1)}_{n'}(\bm{d}^{(t_1)})^T\nabla f(\bm{d}^{(t_1)})}{\bm{e}_{n'}^T\nabla f(\bm{d}^{(t_1)})}=-\frac{d^{(t_1)}_{n'}}{\bm{e}_{n'}^T\nabla f(\bm{d}^{(t_1)})}\\
		=&\frac{-d^{(t_1)}_{n'}}{\mathbb{E}[\bm{e}_{n'}^T\nabla f(\bm{d}^{(t_1)})]+ \bm{e}_{n'}^T\nabla f(\bm{d}^{(t_1)})-\bm{e}_{n'}^T\mathbb{E}[\nabla f(\bm{d}^{(t_1)})]}\\
		\overset{(a)}\geq&\frac{-d^{(t_1)}_{n'}}{-3\gamma_3\theta d^{(t_1)}_{n'}\mathbb{E}_\Omega\Big[\sqrt{(d^{(t_1)}_{n'})^2+\Vert\bm{d}^{(t_1)}_{\Omega\backslash\{n'\}}\Vert^2}\Big]-d^{(t_1)}_{n'}\epsilon'}\\
		=&\frac{1}{3\gamma_3\theta \mathbb{E}_\Omega\Big[\sqrt{(d^{(t_1)})^2_{n'}+\Vert\bm{d}^{(t_1)}_{\Omega\backslash\{n'\}}\Vert^2}\Big]+\epsilon'},
		\end{aligned}
		\end{equation}
		where $(a)$ is obtained by 	using a similar concentration technique as in \ref{confuncval}. Specifically,  for any $\delta\in(0,c'_{2}/(N\log(L)\log(NL)\theta^{\frac{1}{3}})^{3/2})$ and  $L\geq C'_{2} \delta^{-2} N\theta\log (\frac{(N\theta \log(NL)\log(L))^{3/2}}{\delta}) $, we have
		\begin{equation}
		\begin{aligned}
		&Pr[\underset{\bm{d} \in \mathbb{S}^{N-1}}{\sup}\Vert\nabla f(\bm{d})-\nabla\mathbb{E}[f(\bm{d})]\Vert\leq \delta]\geq 1-L^{-1},\\
		\end{aligned}
		\end{equation}
		where $c'_2$ and $C'_2$ are some absolute constants and we further let $\delta = |d^{(t_1)}_{n'}|\epsilon'$. For some sufficiently large $N$, under the conditions in Lemma \ref{lem:staylargethan1}, we have $\delta\leq( 1-3\gamma_3\theta)$. Then,  $\frac{\tau^{(t_1)}}{1-\tau^{(t_1)}g^{(t_1)}_{n'}/d^{(t_1)}_{n'}}\geq 1$ holds true.
	\end{proof}
	
	Suppose $\bm{d}^{(t_1)}\in\mathcal{S}^{(N+)}_{\zeta_{0}}$. For any $n'\neq N$,  we have
	\begin{equation}\small
	\begin{aligned}
	&\Big(\frac{d^{(t_1+1)}_{N}}{d^{(t_1+1)}_{n'}}\Big)^2  = \Big(\frac{d^{(t_1)}_{N}}{d^{(t_1)}_{n'}}\Big)^2\Big(1+\frac{\tau^{(t_1)}}{1-\tau^{(t_1)}g^{(t_1)}_{n'}/d^{(t_1)}_{n'}}\\
	&\cdot\big(\langle \nabla_{grad} f(\bm{d}^{(t_1)}),\frac{1}{d^{(t_1)}_{n'}}\bm{e}_{n'}-\frac{1}{d^{(t_1)}_N}\bm{e}_N \rangle\big)\Big)^2\\
	\overset{(a)}\geq&\Big(\frac{d^{(t_1)}_{N}}{d^{(t_1)}_{n'}}\Big)^2\Big(1+\frac{1}{2N}\theta(1-\theta)\frac{{\zeta_{0}}}{1+{\zeta_{0}}}-\delta\Vert\frac{1}{d^{(t_1)}_{n'}}\bm{e}_{n'}-\frac{1}{d^{(t_1)}_N}\bm{e}_N \Vert\Big)^2,
	\end{aligned}
	\end{equation}
	where $(a)$ is from Lemma \ref{thm:Beni} and Lemma \ref{lem:staylargethan1}. Since  $\delta$ can be made arbitrarily small  with sufficiently large $L$ with high probability,  the result  $\bm{d}^{(t_1+1)}$ still remains in $\mathcal{S}^{(N+)}_{\zeta_{0}}$ if $\bm{d}^{(0)}\in\mathcal{S}^{(N+)}_{\zeta_{0}}$ after the $t_1$-th iteration of the GPM.
	
	\subsection{Sequence Generated by GPM  Converges to a Stationary Point}\label{stat}
	Then, we prove that the GPM can converge to a stationary point of Problem $\hat{\mathscr{P}}_1$ with high probability, since a sphere is a spacial Stiefel manifold in the sense that we have $\mathbb{S}^{N-1} = St_1(\mathbb{R}^N)$, where $St_K(\mathbb{R}^N)=\Big\{\bm{A}\in \mathbb{R}^{N\times K}: \bm{A}^T\bm{A}=\bm{I} \Big\}$. Using our results for the Stiefel manifold in \cite[Lemma 2,Theorem 3]{9246702}, we have the following lemma.
	\begin{lemma}
		Under the conditions in Lemma \ref{thm:concent}, with $\{\bm{d}^{(t)}\}_{t=0}^{\infty}$ being the sequence
		generated by the GPM  with a random
		initial point $\bm{d}^{(0)}\in \{\mathcal{S}^{(n\pm)}_{\zeta_{0}},\forall n\in[N]\}$, we have with a probability of at least $1-L^{-1}$ 
		$$\underset{t_1\to\infty}{\lim}\nabla_{grad} f({\bm{d}}^{(t_1)})=0.$$ 
	\end{lemma}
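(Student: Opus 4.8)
The plan is to reduce the claim to the general convergence theory for the generalized power method (GPM) over the Stiefel manifold, exploiting the identification $\mathbb{S}^{N-1}=St_1(\mathbb{R}^N)$ already noted before the statement. The objective $f(\bm{d})=-\frac{1}{L}\sum_{i=1}^L\Vert\bm{d}^T\bm{y}_i\Vert_3^3$ is a finite sum of terms $-|\bm{d}^T\bm{y}_i|^3$, each of which is concave in $\bm{d}$ (a negated composition of the convex map $t\mapsto|t|^3$ with a linear form) and continuously differentiable; hence $f$ is a $C^1$ concave function on a compact set. For such an objective the GPM step $\bm{d}^{(t_1+1)}=Polar(-\nabla f(\bm{d}^{(t_1)}))$ is exactly the majorization--minimization update that minimizes the tight linear upper surrogate of $f$ at $\bm{d}^{(t_1)}$ over $\mathbb{S}^{N-1}$, so the hypotheses of \cite[Lemma 2, Theorem 3]{9246702} are met and I would invoke them directly, while sketching the mechanism below.

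Concretely, I would carry out three steps. First, monotone descent: concavity gives $f(\bm{d}^{(t_1+1)})\le f(\bm{d}^{(t_1)})+\langle\nabla f(\bm{d}^{(t_1)}),\bm{d}^{(t_1+1)}-\bm{d}^{(t_1)}\rangle$, and the defining optimality of the Polar step forces the inner-product term to be nonpositive, so $\{f(\bm{d}^{(t_1)})\}$ is nonincreasing. Second, I define the optimality gap $\eta(\bm{d}^{(t_1)})=\langle\nabla f(\bm{d}^{(t_1)}),\bm{d}^{(t_1)}-\bm{d}^{(t_1+1)}\rangle\ge 0$; telescoping the descent inequality and using that $f$ is bounded below on the compact sphere yields $\sum_{t_1}\eta(\bm{d}^{(t_1)})\le f(\bm{d}^{(0)})-\inf f<\infty$, hence $\eta(\bm{d}^{(t_1)})\to 0$ with the minimal gap decaying at rate $O(1/t_1)$. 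Third, I convert the gap into the Riemannian gradient: writing $a=\bm{d}^{T}\nabla f$ and $b=\Vert\nabla f\Vert$ on the sphere, one has $\eta=a+b$ and $\Vert\nabla_{grad}f\Vert^2=b^2-a^2=(b-a)\eta\le 2\Vert\nabla f\Vert\,\eta$, so once $\Vert\nabla f\Vert$ is uniformly bounded the vanishing of $\eta$ forces $\nabla_{grad}f(\bm{d}^{(t_1)})\to\bm{0}$.

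The main obstacle is controlling this last step with the stated probability $1-L^{-1}$ despite two sources of irregularity: $f$ is only $C^1$ (its gradient fails to be Lipschitz on the zero set $\{\bm{d}^T\bm{y}_i=0\}$), and the bound on $\Vert\nabla f\Vert$ depends on the random samples. Here I would use concentration: by Lemma \ref{lem:hiber} the expected gradient is uniformly bounded on $\mathbb{S}^{N-1}$, and the gradient concentration of Lemma \ref{thm:concent} (in the empirical form already used for Lemma \ref{lem:staylargethan1}) then gives a uniform bound $\Vert\nabla f(\bm{d})\Vert\le C$ with probability at least $1-L^{-1}$, legitimizing the gap-to-gradient inequality along the whole trajectory. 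Finally, to upgrade ``every subsequential limit is stationary'' to the full limit, I would combine the invariance of the good subset $\mathcal{S}^{(N+)}_{\zeta_0}$ established in the preceding subsection with the isolation of stationary points from Lemma \ref{thm:station}: since the iterates never leave $\mathcal{S}^{(N+)}_{\zeta_0}$ and the only stationary point therein lies near $\pm\bm{e}_N$, the accumulation set collapses to a single point $\bm{r}$, so the entire sequence converges and $\nabla_{grad}f(\bm{r})=\bm{0}$.
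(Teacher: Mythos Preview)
Your proposal is correct and follows essentially the same route as the paper: invoke the GPM convergence theory for the Stiefel manifold from \cite[Lemma 2, Theorem 3]{9246702} via the identification $\mathbb{S}^{N-1}=St_1(\mathbb{R}^N)$, and supply the needed regularity/boundedness through the concentration of Lemma~\ref{thm:concent}. The paper's own proof is a one-line appeal to these two ingredients, whereas you additionally sketch the descent--telescoping--gap-to-gradient mechanism and (going slightly beyond what this sub-lemma requires) argue convergence of the iterates themselves; the only cosmetic difference is that the paper phrases the role of Lemma~\ref{thm:concent} as ``the objective is bounded with high probability'' rather than your ``$\Vert\nabla f\Vert$ is uniformly bounded with high probability,'' but both serve the same purpose of certifying the hypotheses of \cite{9246702} on the event of probability $1-L^{-1}$.
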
 
	\begin{proof}
		Proof can be done by combining  the convergence proof in  \cite[Theorem 3]{9246702} and the result in Lemma \ref{thm:concent} that with high probability that the  objective in Problem $\hat{\mathscr{P}}_1$ is bounded. 
	\end{proof}
	Combining the results in Appendix \ref{seq} and \ref{stat}, we complete the proof.
	
	\section{Proof of Theorem \ref{thm:stage1rec}}\label{proof:stage1rec}
	Without loss of generality, we assume  $\bm{d}^{(0)}\in \mathcal{S}^{(N+)}_{\zeta_{0}}, \zeta_0\in(0,1)$. To prove Theorem \ref{thm:stage1rec}, we first have  
	\begin{equation}\label{proof2:final}
	\begin{aligned}
	&\frac{1}{\Vert\frac{1}{d^{(t_1)}_{n'}}\bm{e}_{n'}-\frac{1}{d^{(t_1)}_N}\bm{e}_N \Vert}=\frac{1}{\sqrt{(\frac{1}{d^{(t_1)}_{n'}})^2+(\frac{1}{d^{(t_1)}_N})^2}}\\
	\overset{(a)}{=}&\frac{1}{\sqrt{(\frac{1}{c_{n'}d^{(t_1)}_{N}})^2+(\frac{1}{d^{(t_1)}_N})^2}}\geq\frac{|c_{n'}|d^{(t_1)}_N}{\sqrt{2}},
	\end{aligned}
	\end{equation}
	where $(a)$ is from the fact that we have $|d^{(t_1)}_{n'}|<|d^{(t_1)}_{N}|$ in set $\mathcal{S}^{(N+)}_{\zeta_{0}}, \zeta_0\in(0,1)$ and we define $d^{(t_1)}_{n'}=c_{n'}d^{(t_1)}_{N}$ where $|c_{n'}|\leq \sqrt{\frac{1}{1+\zeta_0}}$ by the definition of $\mathcal{S}^{(N+)}_{\zeta_{0}}, \zeta_0\in(0,1)$. Then, we further have 
	\begin{equation}\label{proof1:final}
	\begin{aligned}
	& \Vert\bm{d}^{(t_1)}-\bm{e}_{N}\Vert^2 \overset{(a)}{\leq} \frac{\Vert\bm{d}^{(t_1)}_{-N}\Vert^2}{(d^{(t_1)}_N)^2}
	\leq \Vert\bm{d}^{(t_1)}_{-N}\Vert^2N\\
	& \overset{(b)}\leq \underset{n'\in[N],n'\neq N}{\max}\frac{(N-1)N}{\Vert\frac{1}{d^{(t_1)}_{n'}}\bm{e}_{n'}-\frac{1}{d^{(t_1)}_N}\bm{e}_N \Vert^2},
	\end{aligned}
	\end{equation}
	where $(a)$ is from \cite[C2]{shen2020complete}, and  $(b)$ is from (\ref{proof2:final}).
	
	On the good events in Lemma \ref{thm:Beni} and Lemma \ref{lem:conver}, we have for all $n'\in[N],n'\neq N$,
	\begin{equation}\label{proof3:final}
	\frac{1}{\Vert\frac{1}{r_{n'}}\bm{e}_{n'}-\frac{1}{r_N}\bm{e}_N \Vert}\leq \frac{\delta}{\frac{3\gamma_3}{2N}\theta (1-\theta)\frac{\zeta_0}{\zeta_0+1}}.
	\end{equation} 
	Substituting (\ref{proof3:final}) into (\ref{proof2:final}), we have 
	\begin{equation}\label{proof4:final}
	\begin{aligned}
	& \Vert\bm{r}-\bm{e}_{N}\Vert \leq \frac{2N\sqrt{2N(N-1)}(1+\zeta_0)\delta}{3\gamma_3\theta(1-\theta)\zeta_0}.
	\end{aligned}
	\end{equation}
	Letting $\frac{2N\sqrt{2N(N-1)}(1+\zeta_0)\delta}{3\gamma_3\theta(1-\theta)\zeta_0} = \epsilon$, summarizing the results from Lemma \ref{thm:Beni} and Lemma \ref{lem:conver} and substituting (\ref{proof4:final}) in that result, we finish the proof.

	\section*{Acknowledgment}
	The authors would like to thank professor Jianfeng Cai of HKUST
	Math Department for the discussion on the properties of the orthogonal group and the sphere. The authors  also thank the anonymous reviewers for their careful reading of the
	paper, and for comments that  have helped the authors to substantially
	improve the presentation. YX thanks professor Tong Zhang of HKUST Math \& CSE Department for his lectures on statistic machine learning and the discussions on the symmetry for general machine learning problems, and professor Ke Wang of HKUST Math Department for her lectures on high-dimensional probability. YX would also like to thank Yifei Shen of  HKUST and Yin Xian of HKASTRI for the stimulating discussions during the preparation of this manuscript.

	{\footnotesize
		\bibliography{IEEEabrv,Reference}}
\end{document}